\newtheorem{theorem}{Theorem}[section]
\newtheorem{lemma}[theorem]{Lemma}
\newtheorem{corollary}[theorem]{Corollary}
\newtheorem{definition}[theorem]{Definition}
\newtheorem{assumption}[theorem]{Assumption}
\newtheorem{remark}[theorem]{Remark}
\title{Sparse Max-Affine Regression}
\author{Haitham Kanj, Seonho Kim, and Kiryung Lee\thanks{The authors are with the Department of Electrical and Computer Engineering at the Ohio State University (corresponding author: kanj.7@osu.edu). This work was supported in part by NSF CAREER Award CCF-1943201.} \\
Department of Electrical and Computer Engineering\\
The Ohio State University, Columbus, OH, USA}
\begin{document}
\maketitle
\begin{abstract}
This paper presents Sparse Gradient Descent as a solution for variable selection in convex piecewise linear regression where the model is given as the maximum of $k$-affine functions $\mb x \mapsto \max_{j \in [k]} \langle \mb a_j^\star, \mb x \rangle + b_j^\star$ for $j = 1,\dots,k$. 
Here, $\{\mb a_j^\star\}_{j=1}^k$ and $\{b_j^\star\}_{j=1}^k$ denote the ground-truth weight vectors and intercepts. 
A non-asymptotic local convergence analysis is provided for Sp-GD under sub-Gaussian noise when the covariate distribution satisfies the sub-Gaussianity and anti-concentration properties.
When the model order and parameters 
are fixed, Sp-GD provides an $\epsilon$-accurate estimate given $\mathcal{O}(\max(\epsilon^{-2}\sigma_z^2,1)s\log(d/s))$ observations where $\sigma_z^2$ denotes the noise variance. This also implies the exact parameter recovery by Sp-GD from $\mathcal{O}(s\log(d/s))$ noise-free observations. 
Since optimizing the squared loss for sparse max-affine is non-convex, an initialization scheme is proposed to provide a suitable initial estimate within the basin of attraction for Sp-GD, i.e. sufficiently accurate to invoke the convergence guarantees. The initialization scheme uses sparse principal component analysis to estimate the subspace spanned by $\{\mb a_j^\star\}_{j=1}^k$, then applies an $r$-covering search to estimate the model parameters. A non-asymptotic analysis is presented for this initialization scheme when the covariates and noise samples follow Gaussian distributions. When 
the model order and parameters are fixed, this initialization scheme provides an $\epsilon$-accurate estimate given $\mathcal{O}(\epsilon^{-2}\max(\sigma_z^4,\sigma_z^2,1)s^2\log^4(d))$ observations. A new transformation named Real Maslov Dequantization (RMD) is proposed to transform sparse generalized polynomials into sparse max-affine models. The error decay rate of RMD is shown to be exponentially small in its temperature parameter. Furthermore, theoretical guarantees for Sp-GD are extended to the bounded noise model induced by RMD.  
Numerical Monte Carlo results corroborate theoretical findings for Sp-GD and the initialization scheme. 
\end{abstract}

\begin{keywords}
Variable selection, nonlinear regression, convex regression, piecewise linear, generalized polynomial, posynomial.
\end{keywords}

\section{Introduction} \label{Intro}
We consider a multivariate regression problem where the target variable $y \in \mathbb{R}$  depends nonlinearly on covariates in $\mb  x \in \mathbb{R}^d$, and noise $z \in \mathbb{R}$ as
\begin{equation} \label{eq:dataset}
     y = f(\mb x; \mb \theta^\star) + z   
\end{equation}
through a max-affine function $f: \mathbb{R}^d \to \mathbb{R}$ given by
\begin{equation} \label{eq:classconf}
\mb x \in \mathbb{R}^d \mapsto f(\mb x; \mb \theta^\star) = \max_{1\leq j\leq k} \left(\langle \mb a^\star_j, \mb x \rangle + b^\star_j \right),
\end{equation}
where $\mb \theta^\star \in \mathbb{R}^{k(d+1)}$ collects all ground-truth parameters $\{(\mb a_j^\star,b_j^\star)\}_{j=1}^k \subset \mathbb{R}^d \times \mathbb{R}$. 
The max-affine structure in \eqref{eq:classconf} induces a class of convex piecewise linear functions. 
The set of max-affine functions provides an efficient approximation of a class of smooth convex functions \citep{balazs2015near}. 
It is also considered a special instance of tropical algebra \citep{maragos2021tropical}. 
Furthermore, we assume that only up to $s$ variables in $\mb x$ contribute to the evaluation of each linear model in \eqref{eq:classconf}, i.e. 
\begin{equation} 
\label{eq:ind_sparsity}
\left|\mathrm{supp}(\mb a^\star_j) \right| \leq s, \quad \forall j = 1,\dots,k, 
\end{equation}
where $\mathrm{supp}(\cdot)$ denotes the index set for non-zero entries of the input vector. We denote such models as \textit{sparse max-affine}. 
We refer to the estimation of the ground-truth parameters in $\mb \theta^\star$ from noisy samples $\{(\mb x_i, y_i)\}_{i=1}^n$ under the sparsity constraint in \eqref{eq:ind_sparsity} as \textit{sparse max-affine regression}. 
This is equivalent to convex piecewise linear regression with variable selection. 
The estimation procedure naturally implements variable selection as the support of the estimate identifies the active covariate variables. 

{\color{black}
\paragraph{Motivation}
Estimating nonlinear functions under convexity constraints, known as \textit{convex regression}, constitutes a fundamental problem across multiple disciplines, including econometrics \citep{merton1992continuous}, geometric programming \citep{magnani2009convex}, and reinforcement learning \citep{hannah2014semiconvex}. Additional applications of convex regression can be found in circuit design \citep{hannah2012ensemble} and queuing theory \citep{chen2001fundamentals}. 
Known approaches for convex regression can be categorized by their key assumptions on the true underlying function $f:\mathbb R^d \to \mathbb R$ as follows:
\begin{enumerate}
    \item Non-parametric: $f$ satisfies minimal conditions like continuity or smoothness.
    \item Semi-parametric: $f$ is expressed as $\mb x \mapsto f(\mb x) = g(\mb U\mb x)$, with \textit{unknown} $\mb U \in \mathbb R^{K\times d}$ and \textit{unknown} nonlinear map $g: \mathbb R^K \to \mathbb R$. This is referred to as the multi-index model if $K>1$ or the single-index model if $K=1$. 
    \item Parametric: $f$ has a \textit{known} form involving a few unknown parameters.
\end{enumerate}
Each approach involves trade-offs between its advantages and limitations.
Training methods for non-parametric regression typically solve a quadratic program with $\mathcal{O}(n)$ inequality constraints, where $n$ denotes the number of samples \citep{balazs2015near,hannah2013multivariate}. 
While non-parametric convex regression offers broad applicability, it suffers from exponentially increasing sample complexity on the order of $\mathcal{O}(e^d)$ in the number of covariates $d$ and incurs a high computational cost of $\mathcal{O}(\mathrm{poly}(nd))$. 
In contrast, a semi-parametric method provided a sharper result on a further restricted convex single-index model with the generalization error rate of $\mathcal{O}\left(\frac{d^{2/5}}{n^{2/5}}\right)$ \citep{kuchibhotla2023semiparametric}. 
With the same goal, parametric methods employed an alternative compact model composed of piecewise linear functions with a fixed number of linear components, which can be considered as a special case of multi-index models with $g$ fixed to the max function. 
Computationally efficient solutions such as alternating partitioning and minimization \citep{magnani2009convex} and first-order methods \citep{kim2023maxaffine} have been proposed. 
When the number of linear components is fixed, and under certain covariate conditions that generalize the standard Gaussian model, the parametric method can achieve a computational cost of $\mathcal{O}(nd)$, with the required sample size $n$ scaling linearly as $\mathcal{O}(d)$ \citep{ghosh2021max,kim2023maxaffine}. 

In many real-world problems, the outcome of interest is often driven by a \textit{small} subset of key input factors, whose effects can be highly nonlinear. Variable selection seeks to identify this \textit{unknown} subset of active inputs by promoting sparsity in the regression coefficients.
In the high-dimensional settings, where the number of covariates $d$ is large, variable selection enhances the generalization performance and improves the interpretability of the learned model \citep{wasserman2009high}. 
However, establishing theoretical performance guarantees for variable selection in general nonlinear regression models is widely recognized in the literature as a significant challenge \citep{lafferty2008rodeo, bertin2008selection}. 
In particular, existing theoretical results are limited and focus on special restrictive cases. 
For example, under the smoothness of the nonlinear function given by a norm induced by the Fourier series, asymptotic consistency for \textit{support recovery} can be achieved with $n = \mathcal{O}(e^s\log d)$ by a non-parametric method \citep{comminges2012tight}, where $s$ is the number of relevant variables.
Regarding the semi-parametric approach, a generalization error rate decaying as $\mathcal{O}(\frac{s\log d}{n})$ has been established for the single-index model \citep{alquier2013sparse,radchenko2015high}. 
However, for the multi-index model ($K>1$), existing semi-parametric results are limited to establishing asymptotic consistency for support recovery \citep{wang2015variable}.
For the parametric approach, the parameter estimation error rate of $ O\left(\frac{s \log d}{n}\right)$ is attainable under the stringent assumption of a \textit{known} monotone single-index model \citep{yang2016sparse}. 
Variable selection has been studied for deep neural networks, which are also parametric models, and support recovery is shown to be asymptotically consistent \citep{yang2024enns}.  
In summary, variable selection improves the dependence in sample complexity on $d$ by substituting it with $s$, where $s \ll d$, up to an extra logarithmic factor in $d$. 

Convex regression is another instance of nonlinear regression where variable selection has been shown to provably improve the generalization performance.
A seminal work by Xu et al. \citep{xu2016faithful} showed that variable selection in convex non-parametric regression can be achieved with $n = \mathcal{O}(s^7 \log^3 d)$ under the assumption that the true underlying function is expressed as a superposition of univariate convex functions. 
This paper presents a variable selection scheme for convex regression on a more flexible subclass of convex functions at a lower sample complexity of $n = \mathcal{O}(s\log d)$.  
The max-affine function in \eqref{eq:classconf} indeed provides a parametric model for convex piecewise linear multivariate functions.  
As we will discuss later, the max-affine model is known to be equivalent to a convex ReLU neural network (NN) \citep{zhang2018tropical}. The approximation power of ReLU-NN is well studied in the literature \citep{schmidt2020nonparametric}. 
To the author's best knowledge, this is the first work to consider a parametric approach to sparse convex regression.
The sample complexities of convex regression methodologies with and without variable selection are summarized in Table \ref{Table:convex_reg_ref}.

\begin{table}[h!]
\centering
\resizebox{0.95\textwidth}{!}{
\begin{tabular}{c|c|c|c}
\toprule
 & Variable selection &   Assumptions other than convexity&  Sample complexity \\ 
\midrule

    \citet{balazs2015near}  &  \ding{55}    &  continuous \& bounded gradients        & $\mathcal{O}(e^d)$ \\ \midrule

   \citet{xu2016faithful} & \ding{51}  &  superposition of univariate functions     & $\mathcal{O}(e^s \log d)$\tablefootnote{ Xu et al. also show that the support recovery can be done with $n= \mathcal{O}(s^7 \log^3 d)$. } \\ 
\midrule
\citet{kuchibhotla2023semiparametric}& \ding{55} & single-index model & $\mathcal{O}(d)$
\\ \midrule
 \citet{ghosh2021max}  & \ding{55}&  piecewise linear             & $\mathcal{O}(d)$ \\ \midrule

   \textbf{This paper} &\ding{51} & piecewise linear         & $\boldsymbol{\mathcal{O}(s \log d)}$ \\ 
\bottomrule
\end{tabular}}
\caption{Comparison of methodologies for convex regression with associated assumptions on the true convex function and non-asymptotic sample complexities. }
\label{Table:convex_reg_ref}
\end{table} 

Next, we delve into further detail regarding the connection between sparse max-affine models and deep ReLU-NNs. 
The work by \citep{zhang2018tropical} shows that a max-affine model is equivalent to an $L$-layer ReLU-NN with nonnegative hidden layer weights. Furthermore, \citep[Theorem 5.4]{zhang2018tropical} states that any max-affine model with order $k$ can be written as an $L$-layer ReLU-NN with $L \leq 2+ \lceil\log_2 k\rceil$. Therefore, training a sparse max-affine model is equivalent in spirit to ReLU-NN training with a regularization term that enforces sparsity (e.g., an $\ell_1$ penalty term on the network weights). Variable selection techniques for NN are employed to handle high-dimensional datasets with a small number of samples (see \citep{yang2024enns} for a recent review). However, non-asymptotic theoretical guarantees for NN are limited to restrictive cases such as the two-layer model in \citep{li2020learning} without sparsity. Variable selection in NN training is shown to be consistent only when the number of samples is infinitely large \citep[Theorem 4.2]{yang2024enns}. 
Furthermore, the sparse max-affine model provides better interpretability as the contributing weights of each covariate are clearly shown. We also note that the sparse max-affine model is a natural object from tropical algebra called the tropical rational functional. Therefore, compared to ReLU-NN, sparse max-affine models inherit the existing geometric results from tropical algebra \citep{zhang2018tropical}.

Another main motivation for studying the sparse max-affine model is that it provides an efficient approximation to a broad class of generalized sparse polynomials \citep{jameson2006counting}. This model is written as
\begin{equation} \label{eq:poly_into}
    w
    = g(u_1,\dots,u_d) = \sum_{j=1}^k c_j^\star \left(\prod_{\ell\in\mathcal{I}_j} u_l^{\alpha^\star_{j,\ell}}  \right),
\end{equation}
with \textit{real-valued exponents} $\{\alpha^\star_{j,\ell}\}_{\ell\in \mathcal{I}_j} \subset \mathbb{R}$ instead of the integer-valued exponents in a sparse polynomial, a real-valued coefficient $c_j$, and the active monomial indices $\mathcal{I}_j \subseteq [d]$ for all $j \in [k]$. Variable selection here refers to finding $\{\mathcal{I}_j\}_{j=1}^k$. When the coefficients $c_j$'s are \textit{positive}, \eqref{eq:poly_into} reduces to a posynomial (a special case of the generalized polynomial). Employing a $\log$-$\log$ mapping, called \textit{Maslov dequantization}, one can transform a sparse posynomial into a sparse max-affine function \citep{maragos2021tropical,boyd2004convex}. 
Even with the positivity constraint on the weights $c_j$'s, the model by \eqref{eq:poly_into} is not restricted to a convex function, but does not provide sufficient flexibility to model complex relations in real-world applications. 
Furthermore, there has been no non-asymptotic approximation analysis for the Maslov dequantization, even for sparse posynomials. 
This paper will also further extend this approach so that a sparse max-affine model can approximate the generalized sparse polynomial model in \eqref{eq:poly_into} without requiring the positivity constraint backed by a non-asymptotic error bound. 

}

\paragraph{Contributions} 
In this paper, we present theoretical convergence guarantees for the \textit{sparse gradient descent algorithm} (Sp-GD) that implements variable selection for max-affine regression. 
Since learning the regression parameters is cast as a nonconvex optimization problem, it is important for Sp-GD to start from a suitable initialization. 
We propose an initialization scheme leveraging the sparsity structure backed by performance guarantees on its error bound.  
The theoretical results for both Sp-GD and the initialization scheme, presented respectively in pseudo Theorems \ref{Theo:psuedolocal} and \ref{Theo:pseudo_init}, show that the sample complexity is governed by the number of active variables $s$ instead of the total number of variables $d$. 
{
\color{black} Furthermore, we propose the \textit{Real Maslov dequantization} to approximate any generalized sparse polynomial via a sparse max-affine function. The quantization error of this approximation is presented in pseudo Theorem \ref{theo:approx_intro}. We also extend the theoretical convergence guarantees of Sp-GD to this context in Section \ref{sec:poly}.
}
Monte Carlo simulations in Sections~\ref{Sec:numspgd} and \ref{Sec:numinit} corroborate these theoretical guarantees.

\paragraph{Optimization algorithm} Sp-GD is a variant of the projected gradient descent method. Sp-GD differs from standard projected gradient descent by utilizing a generalized gradient. This gradient is an extension to non-smooth functions, such as the piecewise linear functions we are working with \citep{hiriart1979new,clarke1990optimization}. Sp-GD moves along the generalized gradient with a set of adaptive step sizes and then projects to the feasible set of sparse parameters defined in \eqref{eq:ind_sparsity}. 
Since the nonlinear least squares problem for the model in \eqref{eq:dataset} is non-convex, Sp-GD provides a local convergence guarantee presented in the following pseudo-theorem.  

\begin{theorem}[Informal]
Let the covariates and noise be sampled independently from Sub-Gaussian distributions. For fixed $k$ and ground-truth $\mb \theta^\star$ satisfying \eqref{eq:ind_sparsity}, with high proability, a suitably initialized Sp-GD converges linearly to an $\epsilon$-accurate estimate of $\mb \theta^\star$ given $\mathcal{O}\left(  \max(\epsilon^{-2} \sigma_z^2, 1) s\log (d/s) \right)$ observations, where $\sigma_z^2$ denotes the noise variance. 
    \label{Theo:psuedolocal}
\end{theorem}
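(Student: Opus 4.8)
The plan is to establish a one-step contraction inequality for the Sp-GD iterates and then unroll it into a geometric recursion. Write $\bm\Delta^{(t)} \defeq \bm\theta^{(t)} - \bm\theta^\star$ for the parameter error and recall that, because $f(\cdot;\bm\theta)$ is piecewise linear, at each sample $\bm x_i$ the generalized gradient selects the block $j$ attaining the maximum, so the empirical loss behaves like a quadratic whose curvature depends on how the active-region partition induced by $\bm\theta$ compares to that induced by $\bm\theta^\star$. The Sp-GD update is $\bm\theta^{(t+1)} = \mc H_s\big(\bm\theta^{(t)} - \eta_t\, \bm g^{(t)}\big)$, where $\bm g^{(t)}$ is the generalized gradient and $\mc H_s$ is the blockwise hard-thresholding onto the sparse feasible set in \eqref{eq:ind_sparsity}. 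The goal is to show
\[
\norm{\bm\Delta^{(t+1)}} \le \rho\,\norm{\bm\Delta^{(t)}} + C\,\sigma_z\sqrt{\tfrac{s\log(d/s)}{n}}
\]
for some contraction factor $\rho<1$ on the event that the data are well behaved, from which the claimed linear convergence and the $\epsilon$-accuracy threshold follow.

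The deterministic core of the argument is a restricted strong convexity / smoothness pair for the empirical loss along sparse error directions. First I would show that, uniformly over all directions supported on at most $2s$ coordinates per block, the gradient map obeys $\mu\norm{\bm\Delta}^2 \lesssim \inp{\bm g(\bm\theta) - \bm g(\bm\theta^\star),\bm\Delta} \lesssim \ell\norm{\bm\Delta}^2$ in a neighbourhood of $\bm\theta^\star$. The subtlety is that the cross term mixes two sources of error: the genuine linear-model mismatch on regions where $\bm\theta$ and $\bm\theta^\star$ agree on the active piece, and a residual coming from the samples whose active index flips between the two parameter settings. The anti-concentration hypothesis is exactly what controls the second source: it bounds the probability that $\bm x_i$ lies in the thin slab near a decision boundary by a quantity proportional to $\norm{\bm\Delta}$, so the flipped-region contribution is of higher order and does not destroy the lower curvature bound. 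Combined with a uniform lower bound on the probability mass each affine piece receives (so that every block's adaptive step size sees enough data), this yields a population-level contraction once $\eta_t$ is tuned in terms of $\mu/\ell^2$.

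The statistical step is to transfer these population bounds to the empirical quantities and to control the noise term $\tfrac1n\sum_i z_i(\bm x_i,1)\,\mbb{1}_{\{j^\star(i)=j\}}$. Using sub-Gaussianity of the covariates and the noise, standard concentration gives that the restricted eigenvalues $\mu,\ell$ and the gradient deviations hold uniformly over all supports of size $2s$ once $n \gtrsim s\log(d/s)$; the $\log(d/s)$ factor is precisely the metric-entropy cost of covering the union of $\binom{d}{s}$ sparse supports, which is where sparsity replaces the ambient dimension $d$ by $s$ in the sample complexity. The hard-thresholding projection $\mc H_s$ contributes only a universal multiplicative constant $\nu$, since $\bm\theta^\star$ is feasible; choosing the step size so that $\nu\rho<1$ preserves contraction. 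Iterating the displayed recursion shows linear convergence to a ball of radius $O\big(\sigma_z\sqrt{s\log(d/s)/n}\big)$ around $\bm\theta^\star$, so $n \gtrsim \epsilon^{-2}\sigma_z^2\, s\log(d/s)$ forces the residual below $\epsilon$, while $n\gtrsim s\log(d/s)$ is needed merely for the contraction to hold; taking the maximum yields the stated $\mathcal O(\max(\epsilon^{-2}\sigma_z^2,1)\,s\log(d/s))$.

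The main obstacle I anticipate is the non-smoothness induced by the $\max$: the active-region partition is itself a function of the unknown $\bm\theta$, so the loss is only piecewise quadratic and its curvature can degrade wherever the estimated and true partitions disagree. Making the anti-concentration bound on these boundary slabs quantitatively strong enough, and uniform over the sparse cone, to guarantee $\rho<1$ rather than merely $\rho$ finite is the crux, and it is also what pins down the size of the basin of attraction within which a suitable initialization must land. A secondary difficulty is ensuring each of the $k$ pieces retains a nonvanishing fraction of the samples throughout the iterations so that the per-block generalized gradient stays well conditioned; this requires the local neighbourhood assumption to propagate across iterations, i.e. that the contraction keeps every iterate inside the region where the piece-mass lower bound remains valid.
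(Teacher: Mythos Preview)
Your proposal is correct and follows the same overall architecture as the paper: establish a one-step contraction $\norm{\bm\Delta^{(t+1)}}\le \rho\norm{\bm\Delta^{(t)}}+C\sigma_z\sqrt{s\log(d/s)/n}$, decompose the gradient error into a main term on samples where the active piece agrees and a mismatch term on samples where it flips, control the mismatch via anti-concentration on thin slabs near decision boundaries, pay a factor $\binom{d}{s}$ in the union bound to get the $s\log(d/s)$ scaling, absorb hard thresholding as a constant factor, and propagate the neighbourhood condition by induction so that the per-piece mass lower bound persists.

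The one substantive difference is how the contraction is extracted. You frame it as an RSC/RSS pair with a step size ``tuned in terms of $\mu/\ell^2$''; the paper instead analyzes the gradient step directly with the \emph{explicit} per-block adaptive step size $\mu_j^t=\big(\tfrac1n\sum_i\bbone\{x_i\in\mathcal C_j(\theta^t)\}\big)^{-1}$. With this choice the identity part of the empirical second moment cancels exactly, so the main term $\|h_j^t-\mu_j^t p_j\|$ is bounded by $\epsilon_{\min}/\big((1-\varrho)\pi_j^\star-\epsilon_{\min}\big)\cdot\|h_j^t\|$, a factor that is arbitrarily small rather than of order $1-\mu/\ell$. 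This is what lets the paper absorb the factor~$2$ from hard thresholding and the cross-support term $q_j$ without any delicate tuning, and is also why the algorithm requires no step-size hyperparameter. Your RSC/RSS route would work, but you would need the norm bound $\|g(\theta)-g(\theta^\star)\|\le \ell\|\Delta\|$ (not just the inner-product upper bound you wrote) to close the PGD recursion, and the resulting contraction constant would be less favorable.
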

This result can be compared to 
a line of research on plain max-affine regression without the sparsity constraint. 
The authors in \citep{ghosh2021max} presented non-asymptotic convergence analyses of the alternating minimization algorithm by \citep{magnani2009convex} under random covariates and independent stochastic noise assumptions. 
Later, the authors in \citep{kim2023maxaffine} showed that stochastic gradient descent provides comparable sample complexities and estimation errors with faster convergence. 
\begin{table}[h!] 
\centering
\resizebox{0.95\textwidth}{!}{
\begin{tabular}{c|c|c|c} 
 \toprule
  & Algorithm & Sample complexity & Step-size for theoretical guarantees \\ \midrule
  \citet{ghosh2021max} & alternating minimization &${\mathcal{O}}(d)$ & NA \\ \midrule 
  \citet{kim2023maxaffine} &  first order methods & ${\mathcal{O}}(d)$ & unspecified constant \\ \midrule
 \textbf{This paper} & sparse gradient descent& $\boldsymbol{\mathcal{O}\left(s \log d\right)}$ & adaptive formula  \\ \bottomrule
\end{tabular}}
\caption{{Comparison of max-affine regression algorithms in the sample complexity for exact parameter recovery from noiseless observations.}}
\label{Table:Algorithms}
\end{table}
Table \ref{Table:Algorithms} compares the required sample complexity for Sp-GD and previous parameter estimation algorithms for max-affine models. Sp-GD drops the sample complexity from $\tilde{\mathcal{O}}(d)$ required by previous algorithms to $\tilde{\mathcal{O}}(s \log (d/s))$ which is sub-linear. This marks a significant improvement, especially when the number of active variables $s$ is significantly smaller than the total number of variables $d$.
It is also worth noting that the convergence result in Theorem \ref{Theo:psuedolocal} applies to a practical implementation of Sp-GD. 
Specifically, the step sizes are given in an explicit form determined by the parameter estimates of the previous iteration.
The step size for the $j$th block will concentrate around the inverse of the probability where the $j$th linear model in \eqref{eq:classconf} achieves the maximum. 
In contrast, the authors in \citep{kim2023maxaffine} used an unspecified constant step size to prove the local convergence of their first-order methods. 
Therefore, our step size strategy improves the max-affine regression strategy in \citep{kim2023maxaffine} even in the non-sparse case.


\paragraph{Initialization}Recall that Theorem~\ref{Theo:psuedolocal} requires a suitable initial estimate to guarantee local convergence of Sp-GD. 
To obtain this desired initialization, one may use the spectral method for max-affine regression presented by the authors in \citep{ghosh2021max}. 
Simply stated, their initialization scheme first employs principal component analysis (PCA) to estimate the span of $\{\mb a_j^\star\}_{j=1}^k$. 
Then a discrete search over an $r$-covering in the span of the principal components is applied. 
It requires $\mathcal{O}(\epsilon^{-2}d \log^3 d)$ observations for their method to guarantee an $\epsilon$-accurate estimate with $r = \mathcal{O}(\epsilon)$ for fixed $\mb  \theta^\star$ and noise level. This voids the gain due to Theorem~\ref{Theo:psuedolocal}. 
We propose a variant of the initialization by \citep{ghosh2021max} that substitutes PCA with sparse PCA (sPCA) and provides the desired initial estimate from fewer observations in the following special scenario: 
Suppose that the sparse coefficient vectors $\{\mb a_j^\star\}_{j=1}^k$ are simultaneously supported within a set of cardinality $s$, i.e. 
\begin{equation}\label{eq:joint_sparsity}
    \left|\bigcup_{j=1}^k\mathrm{supp}(\mb a^\star_j) \right| \leq s. 
\end{equation}
Then the following pseudo-theorem quantifies the gain via the modified initialization scheme under this scenario.
\begin{theorem}[Informal] \label{Theo:pseudo_init}
    Let the covariates and noise be sampled independently from Gaussian distributions. Fix $k$ and ground-truth $\mb \theta^\star$ to satisfy the joint sparsity assumption in \eqref{eq:joint_sparsity}. Then, with high probability, the initialization via sPCA and $r$-covering search provides an $\epsilon$-accurate estimate of $\mb \theta^\star$ given $\mathcal{O}(\epsilon^{-2}\max (\sigma_z^{4},\sigma_z^2,1)s^2\log^4d)$ observations and $r=\mathcal{O}(\epsilon)$ where $\sigma^2_z $ is the noise variance.
\end{theorem}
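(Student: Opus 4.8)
The plan is to analyze the two stages of the initialization separately: first the estimation of the subspace $\mathcal{U}^\star \defeq \operatorname{span}\{\bm a_j^\star\}_{j=1}^k$ via sparse PCA, and then the recovery of the individual parameters $(\bm a_j^\star,b_j^\star)$ via the $r$-covering search restricted to the estimated subspace. For the first stage I would form the empirical second-moment statistic
\[
\hat{\bm M} = \frac{1}{n}\sum_{i=1}^n y_i\bigl(\bm x_i \bm x_i^\T - \bm I_d\bigr),
\]
and identify its population counterpart $\bm M^\star \defeq \E[\hat{\bm M}]$. Since $z$ is independent, zero-mean, and $\E[\bm x\bm x^\T - \bm I_d]=\bm 0$, the noise contributes nothing in expectation. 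Writing $\bm x = \bm P\bm x + \bm P^\perp\bm x$ for the orthogonal projector $\bm P$ onto $\mathcal{U}^\star$, and using that $f$ depends on $\bm x$ only through $\bm P\bm x$ together with the independence of $\bm P\bm x$ and $\bm P^\perp\bm x$, the perpendicular and cross blocks vanish, so $\operatorname{range}(\bm M^\star)\subseteq\mathcal{U}^\star$ and $\bm M^\star$ is supported on $\mathcal{S}^\star\times\mathcal{S}^\star$ with $\mathcal{S}^\star\defeq\bigcup_j\mathrm{supp}(\bm a_j^\star)$, $|\mathcal{S}^\star|\le s$. The second-order Stein identity certifies that this $\mathcal{U}^\star$-block equals the expected curvature $\E[\nabla^2 f]$ of $f$, which under nondegeneracy of $\bm\theta^\star$ (all $k$ pieces active, gradients linearly independent) has rank exactly $k$ with a strictly positive $k$-th eigengap.

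Establishing closeness of the sPCA output to $\mathcal{U}^\star$ then reduces to (i) a restricted concentration bound $\|\hat{\bm M}-\bm M^\star\|_{(s)}\lesssim\delta$ in the operator norm over $s$-sparse unit vectors, and (ii) a Davis--Kahan $\sin\Theta$ bound converting $\delta$ and the eigengap into a subspace error. The concentration in (i) is where the sample complexity is incurred: the summands $y_i(\bm x_i\bm x_i^\T-\bm I_d)$ are degree-three polynomials in sub-Gaussian variables plus a noise--covariate cross term, hence only sub-Weibull, so a truncation-plus-Bernstein argument over an $\epsilon$-net of the $\binom{d}{s}$ supports yields a deviation of order $\sqrt{s\log d/n}$ up to polylogarithmic factors. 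The product of three sub-Gaussian factors produces the higher powers of $\log$ responsible for the $\log^4 d$ dependence, the characteristic hardness of sparse PCA for an $s$-sparse eigenspace contributes the extra factor of $s$ yielding $s^2$, and tracking the noise term $z_i(\bm x_i\bm x_i^\T-\bm I_d)$ — whose variance scales with $\sigma_z^2$ and whose sub-exponential parameter enters squared in Bernstein — produces the $\max(\sigma_z^4,\sigma_z^2,1)$ prefactor. Forcing the resulting subspace error below $\epsilon$ via Davis--Kahan then requires $n\gtrsim\epsilon^{-2}\max(\sigma_z^4,\sigma_z^2,1)\,s^2\log^4 d$.

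In the second stage I would exploit the subspace estimate $\hat{\bm U}$ to reparametrize each candidate weight vector as $\bm a_j=\hat{\bm U}\bm\alpha_j$ with $\bm\alpha_j$ ranging over a bounded $k$-dimensional ball, build an $r$-net over the finite-dimensional parameter space spanned by the $k$ coefficient vectors and intercepts, and select the net point minimizing the empirical squared loss. A uniform deviation bound over the net — again sub-linear in $d$ because every candidate lives in the estimated $s$-sparse support — shows the empirical minimizer tracks the population minimizer. By the anti-concentration and separation conditions assumed for the covariate distribution, the population loss grows quadratically around $\bm\theta^\star$, so a net point within $r=\mathcal{O}(\epsilon)$ of the truth, combined with the $\mathcal{O}(\epsilon)$ subspace error from the first stage, yields an $\epsilon$-accurate estimate. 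For fixed $k$ the net has cardinality $(1/r)^{\mathcal{O}(k)}$, so the extra union-bound cost is only $\mathcal{O}(k\log(1/\epsilon))$ and is absorbed into the polylog factors.

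The main obstacle is the restricted spectral concentration of $\hat{\bm M}$ in stage one. Unlike the plain linear case, the heavy-tailed, sub-Weibull summands mean that a naive matrix-Bernstein bound over all of $\mathbb{R}^d$ would reintroduce the $d$ dependence I am trying to avoid; the delicate part is to localize the concentration to $s$-sparse directions while correctly truncating the summands and bookkeeping the noise moments, so that only $s^2\log^4 d$ and the right powers of $\sigma_z$ survive. A secondary obstacle is certifying the strictly positive $k$-th eigengap of $\bm M^\star$ from the structure of $\E[\nabla^2 f]$ — equivalently, that the $k$ pieces are genuinely active with linearly independent gradients — which is what makes Davis--Kahan applicable and ultimately rests on the same nondegeneracy of $\bm\theta^\star$ invoked in Theorem~\ref{Theo:psuedolocal}.
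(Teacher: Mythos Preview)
There is a genuine gap in the first stage. Your moment statistic $\hat{\bm M}=\frac{1}{n}\sum_i y_i(\bm x_i\bm x_i^\T-\bm I_d)$ has population version $\bm M_2=\E[y(\bm x\bm x^\T-\bm I_d)]$, and you argue via the second-order Stein identity that its $\mathcal{U}^\star$-block equals $\E[\nabla^2 f]$ and therefore has rank $k$. But $f$ is piecewise linear, so $\nabla^2 f$ is a distribution supported on the facets where two pieces agree; on the facet between pieces $j$ and $j'$ the contribution is a positive multiple of $(\bm a_j^\star-\bm a_{j'}^\star)(\bm a_j^\star-\bm a_{j'}^\star)^\T$. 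All such differences lie in the $(k-1)$-dimensional subspace $\operatorname{span}\{\bm a_j^\star-\bm a_1^\star:j\geq 2\}$, so $\operatorname{rank}(\bm M_2)\leq k-1$ and there is no positive $k$-th eigengap. Davis--Kahan applied to $\bm M_2$ cannot recover the full $k$-dimensional $\mathcal{U}^\star$. The paper avoids this by working with $\bm M=\bm m_1\bm m_1^\T+\bm M_2$, where $\bm m_1=\E[y\bm x]=\E[\nabla f]=\sum_j\pi_j^\star\bm a_j^\star$ supplies the missing direction; that $\operatorname{rank}(\bm M)=k$ with a positive gap is exactly Lemma~7 of \citet{ghosh2021max}, which the paper invokes.

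Even after you repair the moment matrix, your route to the subspace error differs from the paper's and the bookkeeping you sketch does not obviously close. You propose bounding the $s$-sparse restricted spectral norm of $\hat{\bm M}-\bm M$ and then applying Davis--Kahan; but a restricted-norm concentration of order $\sqrt{s\log d/n}$ together with Davis--Kahan yields a sample requirement scaling with $s$, not $s^2$, and your attribution of the extra $s$ to ``the characteristic hardness of sparse PCA'' is not an argument. The paper instead runs the Fantope SDP of \citet{vu2013fantope} and uses their deterministic perturbation bound $\|\hat{\bm P}-\bm P\|_{\mathrm F}\leq 4s\,\delta_{\mathrm{gap}}^{-1}\|\hat{\bm M}-\bm M\|_\infty$, which already carries an $s$ prefactor; the concentration is then done \emph{entrywise} (sub-Weibull bounds on degree-three Gaussian polynomials, union over $d^2$ entries), and the second factor of $s$ enters when this Frobenius error is forced below the threshold needed for exact support recovery and for \eqref{eq:ghosh_proj_req}. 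The $\max(\sigma_z^4,\sigma_z^2,1)$ and the $\log^4 d$ likewise arise from the specific combination of the entrywise bound in Lemma~\ref{lem:Mtotal} with the requirements \eqref{eq:nprojectionerror} and \eqref{eq:nghosh}, not from a generic ``three sub-Gaussians'' heuristic. Your second-stage plan (restrict to the recovered support, net over the $k$-dimensional ball, minimize empirical loss) matches the paper, which simply cites \citep[Theorem~3]{ghosh2021max} with $d$ replaced by $s$.
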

Compared to previous work \citep{ghosh2021max}, the initialization scheme leveraging the joint sparsity drops the linear dependence on the ambient dimension $d$ and replaces it with a quadratic dependence on $s$. 
Our implementation and theoretical analysis build on sPCA as a semi-definite program by \citep{vu2013fantope} where the estimation error decays as $\mathcal{O}(\sqrt{s^2 \log d/n})$. 
A recent framework improved the error bound to $\mathcal{O}(\sqrt{s \log d/n})$ under the assumption that data are obtained through a linear transform acting on special multivariate distributions \citep{Wang2014tighten}. 
However, this assumption is not satisfied by data generated with the max-affine model.

{\color{black}
\paragraph{Approximating generalized polynomials}
We overcome the main limitation of \textit{Maslov dequantization} by extending this transformation to real coefficients in \eqref{eq:poly_into} via the Real \textit{Maslov dequantization} (RMD), which is written as
\begin{equation}
    \label{eq:RMD}
    y = \mathrm{Re}\{\varsigma \log w\}, \quad x_l = \varsigma \log u_l, \quad \forall l \in [d]
\end{equation}
where $\varsigma>0$ is a chosen temperature parameter and $\log (re^{i\theta}) = \log r + i\mathrm{mod}(\theta, 2\pi)$. We will use these definitions to state the following theorem on the approximation error of RMD.
\begin{theorem}[Informal]
    \label{theo:approx_intro}
    Consider the generalized polynomial model in \eqref{eq:poly_into} and its transformation via RMD in \eqref{eq:RMD} for some $\varsigma>0$. Collect the transformed covariates in $\mb x = [x_1; \ldots;x_d]$. The transformed target can be written as
    \begin{equation*}
        y = \max_{j \in [k]}\langle\mb \theta_j^\star, [\mb x;1] \rangle+z_\varsigma,
    \end{equation*}
    where $\mb \theta_j^\star = [\alpha_{j,1}^\star;\ldots; \alpha_{j,d}^\star; \varsigma \log|c_j^\star|]$ and $z_\varsigma$ is the dequantization error which decays exponentially in $\varsigma$, i.e. $z_\varsigma\leq c\exp(\varsigma^{-1})$. Consequently, when $\varsigma \to 0$, RMD reduces to a sparse-max affine model.
\end{theorem}
The class of models in \eqref{eq:poly_into} is very rich and has been applied to various contexts in finance and economics. 
For example, in labor economics \citep[Eq. 15]{zhao2016distributed}, the Gross Domestic Product (GDP), $G \in \mathbb R$, is written as a multivariate generalized polynomial 
\begin{equation}
\label{eq:labor_eq_gdp}
G = c_1L^\alpha H^\theta S^\gamma D^\delta +c_2K + c_3SD/K + c_4
\end{equation}
with coefficients $\{c_i\}_{i=1}^4 \subset \mathbb{R}$ and exponents $\{\alpha, \theta, \gamma,\delta\} \subset \mathbb{R}$, where $L,H,S,D,K$ respectively denote labor, human capital, innovation, investment, and capital stock. 
Using RMD as defined in \eqref{eq:RMD} with $\varsigma \to 0^+$, we get
\[
y =  \underset{j \in [4]}{\mathrm{max}}\langle\mb  \theta_j, [\mb  x,1] \rangle,
\]
where
\begin{equation}
[\mb \theta_1, \ldots, \mb \theta_4] \triangleq
\begin{bmatrix}
    \alpha & 0 & 0 & 0 \\
    \theta  & 0 & 0 & 0 \\
    \gamma & 0 & 1 & 0 \\
    \delta & 0 & 1 & 0 \\
    0      & 1 & -1& 0 \\
    0     & 0& 0& 0
\end{bmatrix}\nonumber,
\end{equation}
which is clearly a sparse max-affine model. Another example of such modeling is the \textit{McCallum Gravity Equation} \citep{anderson2003gravity} of party "$1$",  written as
\begin{equation*}
    E_{1} = \sum_{i\neq 1}c_iG_1^\alpha G_i^\beta D_{1\to i}^\delta,
\end{equation*}
where $E_{1}$ denotes the exports of party $1$, $G_i$ is the GDP of the $i$th party, and $D_{1\to i}$ is the distance between parties $1$ and $i$. Furthermore, we can also find applications of the generalized polynomial models in other fields, such as differential equation modeling for fluid mechanics \citep{ranganathan2024modified}, characteristic equations for permeability \citep{siddiqui2008towards}, and power control in cellular systems \citep{chiang2017power}. Finally, consider the generalized rational function that is the division of two models from \eqref{eq:poly_into}. Applying RMD to this rational function leads to a \textit{difference of max-affine model} which we are currently investigating in an independent work. 
 
}

\paragraph{Paper organization}
Section \ref{sec:local} presents Sp-GD and its non-asymptotic theoretical guarantees as a local analysis of sparse max-affine regression.   
Section \ref{Sec:init} describes the initialization scheme and the corresponding non-asymptotic theoretical guarantees. {\color{black} Section \ref{sec:poly} presents the theoretical guarantees for the dequantization error by RMD for generalized sparse polynomials and the convergence guarantee for Sp-GD in this context.}
Section \ref{sec:all_num} presents the numerical results that corroborate the theoretical guarantees for Sp-GD and the initialization scheme.
Section \ref{Conclusion} provides final remarks and future directions.

\paragraph{Notation}
We use lightface characters to denote scalars, lowercase boldface characters to denote column vectors and uppercase boldface to denote matrices. 
We also adopt the symbols for the max and min operators in the lattice theory, i.e. $a \vee b = \max(a,b)$ and $a \wedge b = \min(a,b)$ for $a,b \in \mathbb{R}$. We use multiple matrix norms. 
The Frobenius norm, the spectral norm, and the largest magnitude of entries will be respectively denoted by $\norm{\cdot}_\mathrm{F}$, $\norm{\cdot}$, and $\norm{\cdot}_\infty$.
We use a shorthand notation $[d]$ for the set $\{1,2,\dots,d\}$.
For a column vector $\mb x \in \mathbb{R}^d$, its sub-vector with the entries indexed by $\mathcal{S} \subset [d]$ is denoted by $[\mb x]_\mathcal{S}$. 
Similarly, for a matrix $\mb X \in \mathbb R^{d\times d}$, its submatrix with the entries indexed by $\mathcal{S}_1 \times \mathcal{S}_2 \subset [d]\times [d]$ is denoted by $[\mb X]_{\mathcal{S}_1,\mathcal{S}_2}$. Finally, we denote by $C, C_1, C_2, \dots$ universally absolute constants, not necessarily the same at each occurrence.  
%
\section{Local Analysis of Sparse Max-Affine Regression} \label{sec:local}
\subsection{Sparse Gradient Descent Algorithm}\label{Sec:spgd}
This section discusses the details of the Sp-GD algorithm. To simplify notation, we rewrite the max-affine model in \eqref{eq:dataset} into a max-linear model 
\begin{equation}
\label{eq:def_max_affine_model_xi}
y = \max_{j \in [k]} \langle \mb  \xi,\mb {\theta}_j^\star \rangle + z , 
\end{equation}
where $\mb {\theta}_j^\star \triangleq [\mb  a_j^\star;b_j^\star] $ and $\mb {\xi} \triangleq [\mb  x;1] $ with the semicolon denoting vertical concatenation. 
Then the target sample $y_i$ is generated by \eqref{eq:def_max_affine_model_xi} from the concatenated covariate sample $\mb \xi_i = [\mb  x_i;1]$ and noise sample $z_i$ for all $i \in [n]$. Let $\mb  \theta^\star \triangleq [\mb  \theta_1^\star ; \ldots ;\mb  \theta_k^\star]$ denote the vertical concatenation of all $k$ hyper-plane coefficient vectors $\{\mb  \theta_j^\star\}_{j=1}^k \subset \mathbb{R}^{d+1}$.   
We consider an estimator of $\mb \theta^\star$ that minimizes the Mean Squared Error (MSE) loss function
\begin{equation}
\label{eq:loss_def_max0}
\ell\left( [\mb  \theta_1 ; \ldots ;\mb  \theta_k] \right) \triangleq
\frac{1}{2n}\sum_{i=1}^{n}\left(y_i- \max_{j \in [k]} \langle\mb  \xi_i,\mb  \theta_j\rangle\right)^2, 
\end{equation}
under the constraint that all $\mb  \theta_j$, for $j \in [k]$, belongs to $\Gamma_s$ defined by 
\begin{equation}\label{eq:Gamma_s}
\Gamma_s \triangleq \left\{ \mb{\varphi} \in \mathbb{R}^{(d+1)} : \norm{ [\mb{\varphi}]_{1:d}}_0 \leq s,  \right\},
\end{equation}
where $\norm{\cdot}_0$ counts the number of nonzero entries and $[\mb{\varphi}]_{1:d}$ denotes the sub-vector of $\mb \varphi \in \mathbb{R}^{d+1}$ with the last entry omitted. 
\DontPrintSemicolon
\SetAlFnt{\small}
\begin{algorithm}

\caption{Sparse Gradient Descent (Sp-GD)}


  \KwIn{dataset $\{\mb x_i, y_i\}_{i=1}^{n} $, sparsity level $s$, model rank $k$, 
  and initial estimate $\mb{\theta}^0$}
  $t \gets 0$ \\ 
  \While{stop condition is not satisfied}{
  \For{$j \in \{1,\dots,k\}$}{
  \vspace{-0.4cm}
  \begin{flalign}
      \label{eq:pjt}
&\pi_j^{t} = \frac{1}{n} \sum_{i=1}^n \bbone_{\left\{\mb x_i\in \mathcal{C}_j(\mb \theta^t)\right\}}&
\end{flalign} 
\eIf{$\pi_j^t >0$}{
\vspace{-0.5cm}
\begin{flalign} \label{p_gradient} 
 &\nabla_{\mb \theta_j}\ell(\mb \theta^t)
    \gets\frac{1}{n} \sum_{i=1}^n \bbone_{\left\{\mb x_i\in \mathcal{C}_j(\mb \theta^t)\right\}} \left(\langle\mb \xi_i,\mb \theta^t_j\rangle-y_i\right)\mb \xi_i &
\end{flalign}  
\vspace{-0.5cm}
\begin{flalign}  \label{algo:grad_step}
    & ~~\mb{\alpha}_j^{t+1} \gets  \mb{\theta}_j^{t} - \left(\pi_j^t\right)^{-1}  \nabla_{\mb \theta_j} {\ell} \left( \mb{\theta}^{t}\right) &
\end{flalign}
 $~~\mb{\theta}_j^{t+1} \gets \mb{\Psi}_s(\mb{\alpha}_j^{t+1})$ 
}{
$~~\mb{\alpha}_j^{t+1} \gets  \mb{\theta}_j^{t}$
}
   }
    $t \gets t+1$
  }  
\KwOut{final estimate $\hat{\mb \theta} \gets \mb \theta^t$}
   \vspace{0.05in}      
  \label{algo:SparseGD}
\end{algorithm}



    

  
Sp-GD is a variant of the projected gradient descent algorithm to pursue the above estimators. In Sp-GD the gradient is substituted by the generalized gradient \citep{hiriart1979new} and the step size varies across blocks adaptively with the iterates. 
We introduce a geometric object to describe the Sp-GD algorithm. 
Let
\begin{equation}
\label{eq:def_calCjstar}
\mathcal{C}_j([\mb  \theta_1 ; \dots ; \mb  \theta_k]) \triangleq \left\{ \mb  x \in \mb b{R}^{d} ~:~ \langle [\mb  x ;\ 1], \mb  \theta_{j} \rangle > \langle [\mb  x ;\ 1], \mb  \theta_{l} \rangle, ~ \forall l \neq j \right\}
\end{equation}
denote an open set in $\mathbb{R}^d$ where the $j$th linear model $\mb x \mapsto \langle [\mb  x ;\ 1], \mb  \theta_{j} \rangle$ achieves the unique maximum for $j \in [k]$. 
The ties in the maximum occur on the set 
\begin{equation*}
\mathcal{V}([\mb  \theta_1 ; \dots ; \mb  \theta_k]) \triangleq \bigcup_{l \neq j} \left\{ \mb  x \in \mb b{R}^{d} ~:~ \langle [\mb  x ;\ 1], \mb  \theta_{j} \rangle = \langle [\mb  x ;\ 1], \mb  \theta_{l} \rangle \right\}.
\end{equation*}
For any $\mb  \theta \in \mathbb{R}^{k(d+1)}$, the open sets $\{\mathcal{C}_j(\mb  \theta)\}_{j=1}^k$ and their boundary in $\mathcal{V}(\mb  \theta)$ constructs a partition of $\mathbb{R}^d$ by satisfying
\begin{align*}
\left( \bigcup\limits_{j=1}^{k} \mathcal{C}_j(\mb  \theta) \right) \cup \mathcal{V}(\mb \theta) = \mathbb{R}^{d}, \quad
\left( \bigcup\limits_{j=1}^{k} \mathcal{C}_j(\mb  \theta) \right) \cap \mathcal{V}(y) = \varnothing, \quad \mathcal{C}_j(\mb  \theta)\cap\mathcal{C}_l(\mb  \theta) = \varnothing, \quad 
\forall l \neq j. 
\end{align*} 
The max-affine function in \eqref{eq:def_max_affine_model_xi} is a special instance of tropical polynomials in the max-plus algebra \citep{maragos2021tropical}. 
In this perspective, the sets $\{\mathcal{C}_j(\mb  \theta)\}_{j=1}^{k}$ and $\mathcal{V}(\mb  \theta)$ are called respectively as tropical open cells and tropical zero set. 

Algorithm~\ref{algo:SparseGD} presents a pseudo code for the Sp-GD algorithm. 
Each iteration of the algorithm starts by updating the empirical probability $\pi_j^t$ that the covariates belong to the open set $\mathcal{C}_j(\mb  \theta^t)$ determined by the previous
iterate $\mb  \theta^t$ for all $j \in [k]$. Note that the evaluation of the indicator function in \eqref{eq:pjt} can be saved and reused in the subsequent step in \eqref{p_gradient}. If $\pi_j^t$ is non-zero, then we update the $j$th block by generalized gradient descent in \eqref{p_gradient} followed by the orthogonal projection to $\Gamma_s$ given by 
\[
\mb{\Psi}_s(\mb{\alpha}) = \mathop{\mathrm{argmin}}_{\tilde{\mb{\alpha}} \in \Gamma_s} ~ \norm{\mb{\alpha} - \tilde{\mb{\alpha}}}_2^2.
\]
Otherwise, the $j$th block remains unchanged from the previous iterate. 
This update rule applies recursively until the algorithm converges by satisfying $\lVert \mb {\theta}^{t+1} - \mb {\theta}^{t} \rVert_2 / \lVert \mb {\theta}^{t}\rVert_2$ less than a given threshold. 
Note that the step size for the gradient descent is adaptively evaluated as the reciprocal of $\pi_j^t$ which is determined solely by the previous iterate $\mb  \theta^t$. In other words, this algorithm does not require tuning for the step size. 
In fact, the step size in \eqref{p_gradient} is always larger than $1$, which is quite different from the typical choices of step size for gradient descent (e.g. a small constant or a diminishing sequence). The next section presents theoretical guarantees of Sp-GD with this proposed step size. 
Furthermore, as shown in Section~\ref{Sec:numspgd}, this step size rule also makes Sp-GD empirically converge fast to the desired estimate.

\subsection{Theoretical Analysis of Sp-GD} \label{Section:Theo}
In this section, we present a local convergence analysis of Sp-GD under a set of covariate distributions determined by the following two properties. 
\begin{assumption}[Sub-Gaussianity]\label{assum:subg}
    The covariate vector $\mb x \in \mathbb{R}^d$ is zero-mean, isotropic, and $\eta$-sub-Gaussian, i.e. there exists $\eta >0$ such that 
    \begin{equation}
        \sup\limits_{\mb u \in \mathbb{S}^{d-1}, t \in \mathbb R}\mathbb E \left[\exp (t\langle \mb u , \mb x\rangle)\right]  \leq \exp\left({{\eta^2t^2}/{2}}\right), \nonumber
    \end{equation}
    where $\mathbb{S}^{d-1}$ denotes the unit sphere in $\ell_2^d$, respectively. 
\end{assumption}  
\begin{assumption}[Anti-Concentration]\label{assum:anti}
    There exist $\gamma,\zeta >0$ such that the covariate vector $\mb x \in \mathbb{R}^d$ satisfies
    \[
    \sup\limits_{\begin{subarray}{c} \mb u \in \mathbb{S}^{d-1}, \lambda \in \mathbb{R} \end{subarray}} 
    \mathbb{P}\left[ \left( \langle\mb u, \mb x \rangle +\lambda\right)^2\leq \epsilon \right]\leq \left( \gamma \epsilon\right)^\zeta, \quad \forall \epsilon >0.
    \]
\end{assumption}
We also introduce a set $\Theta(s,\kappa,\pi_{\min})$ that collects all $\mb \theta^\star \in \mathbb{R}^{k(d+1)}$ satisfying the following conditions: 
Recall that $\mb \theta^\star$ collects the weight vectors $\{\mb a_j^\star\}_{j=1}^k \subset \mathbb{R}^d$ and the bias terms $\{b_j^\star\}_{j=1}^k \in \mathbb{R}$ of the max-affine function \eqref{eq:classconf} by $\mb \theta^\star = [ [\mb a_1^\star ; b_1^\star] ; \ldots ; [\mb a_k^\star ; b_k^\star]]$.  
First, the weight vectors $\{\mb a_j^\star\}_{j=1}^k$ satisfy the sparsity condition
\[
\norm{\bm a_j^\star}_0 \leq s, \quad \forall j \in [k].
\]
Furthermore, any two distinct weight vectors are separated at least by a minimum discrepancy value $\kappa$, i.e. 
\begin{equation}
\label{eq:defkappa}
\min_{j'\neq j}\|\mb a_j^\star - \mb a_{j'}^\star \|_2 
\geq \kappa.
\end{equation}
Lastly, the probability that $j$th linear model achieves the maximum in \eqref{eq:classconf} with random $\bm x$ should exceed a minimum threshold $\pi_{\min} > 0$ for all $j \in [k]$, i.e.  
\begin{equation}
\label{eq:def_pimin_pimax}
\min_{j\in[k]}\P\left(\mb x\in \mathcal{C}_j(\mb \theta^\star) \right) \geq \pi_{\min}.
\end{equation}
Using the above definitions, we state the main result that shows the local convergence of Sp-GD in the following theorem. 
\begin{theorem}
\label{THM:MAIN}
Suppose that $\{(\mb x_i,z_i)\}_{i=1}^n$ are independent copies of a random vector $(\bm{x},z)$ where $\bm{x} \in \mathbb R^d$ and $z \in \mathbb{R}$ are independent, $\bm{x}$ satisfies Assumptions \ref{assum:subg}--\ref{assum:anti} with parameters $\eta,\gamma,\zeta > 0$, and $z$ is zero-mean sub-Gaussian with variance $\sigma_z^2$. 
Then there exist absolute constants $C_1,C_2,R > 0$, for which the following statement holds with probability at least $1-\delta$ for all $\mb \theta^\star \in \Theta(s,\kappa, \pi_{\min})$. 
If the initial estimate $\mb \theta^0$ belongs to a neighborhood of $\mb \theta^\star$ given by 
\begin{equation}
\label{eq:defnbr}
\begin{aligned}
\mathcal{N}(\mb \theta^\star,\kappa \rho):= \Bigg\{ \mb \theta \in \mbb{R}^{k(d+1)} \,:\,  \norm{\mb \theta - \mb \theta^\star}_2 \leq \kappa\rho \Bigg\} 
\end{aligned}
\end{equation}
with 
\begin{equation}
\label{eq:choice_rho1}
\rho:= \left[\frac{R\pi_{\min}^{3/4}}{4k^{2}} \cdot \log^{-1/2}\left(\frac{k^{2}}{R\pi_{\min}^{3/4}}\right)\right] \wedge \frac{1}{4}
\end{equation}
and
\begin{equation}
\label{eq:cond:lem:lwb_gradient}
n \geq C_1 \left[s\log\left(\frac{n\vee d}{s}\right)+\log\left(\frac{k}{\delta}\right) \right]\left(\sigma_z^2\vee 1\vee \eta^4 \right) k^4\pi_{\min}^{-4(1+\zeta^{-1})}, 
\end{equation} 
then the sequence $\left(\mb \theta^t\right)_{t\in\mathbb{N}}$ generated by Sp-GD satisfies 
\begin{align}
\label{eq:errorbound_noise}
\left\|\mb \theta^{t}-\mb \theta^\star\right\|^2_2&\leq \tau^t\left\|\mb \theta^{0}-\mb \theta^\star\right\|^2_2 +C_2\sigma^2_z\left(\frac{sk\log\left(n/s\right)+s\log\left({d}/{s}\right)+\log\left({1}/{\delta}\right)}{n}\right)
\end{align}
for some $\tau \in (0,1)$ determined by $\pi_{\min}$, $k$, $\gamma$, $\zeta$ and $R$. 
\end{theorem}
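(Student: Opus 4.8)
The plan is to prove a one-step contraction that holds simultaneously for every $\bm\theta^t\in\mathcal N(\bm\theta^\star,\kappa\rho)$ on a single high-probability event, and then to unroll it by induction in $t$. Two preliminary reductions drive the whole argument. First, the hard-thresholding projection $\bm\Psi_s$ is essentially an isometry relative to the sparse ground truth: since each $\bm a_j^\star$ is $s$-sparse while the bias coordinate is left free in $\Gamma_s$, the projected error satisfies $\norm{\bm\Psi_s(\bm\alpha_j)-\bm\theta_j^\star}_2\le C\norm{\bm\alpha_j-\bm\theta_j^\star}_2$, with $\bm\Psi_s(\bm\alpha_j)-\bm\theta_j^\star$ supported on at most $2s$ coordinates together with the bias. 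Consequently every difference vector appearing in the analysis is $O(ks)$-sparse, so only the restricted (sparse) spectra of the empirical operators enter; this is precisely what turns the ambient dimension $d$ into the effective dimension $s\log(d/s)$. This reduction also shows $\bm\theta^{t+1}$ remains in $\mathcal N(\bm\theta^\star,\kappa\rho)$, which closes the induction.

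The core is the per-block analysis. Fixing $j$ with $\pi_j^t>0$, substitute $y_i=\max_\ell\inp{\bm\xi_i,\bm\theta_\ell^\star}+z_i$ into the gradient \eqref{p_gradient} and split each sample according to whether $\bm\theta^t$ and $\bm\theta^\star$ assign it to the same cell. This yields
\[
\bm\alpha_j^{t+1}-\bm\theta_j^\star=\big(I-(\pi_j^t)^{-1}\hat M_j\big)(\bm\theta_j^t-\bm\theta_j^\star)-(\pi_j^t)^{-1}\bm e_j+(\pi_j^t)^{-1}\bm g_j,
\]
where $\hat M_j=\tfrac1n\sum_i\bbone_{\{\bm x_i\in\mathcal C_j(\bm\theta^t)\}}\bm\xi_i\bm\xi_i^\top$, the term $\bm e_j$ collects the contributions of the misclassified samples $\bm x_i\in\mathcal C_j(\bm\theta^t)\setminus\mathcal C_j(\bm\theta^\star)$, and $\bm g_j=\tfrac1n\sum_i\bbone_{\{\bm x_i\in\mathcal C_j(\bm\theta^t)\}}z_i\bm\xi_i$ is the stochastic noise term. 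The leading operator $(\pi_j^t)^{-1}\hat M_j$ concentrates around the conditional second moment $\Sigma_j=\E[\bm\xi\bm\xi^\top\mid\bm x\in\mathcal C_j(\bm\theta^t)]$, and the contraction follows from two-sided restricted eigenvalue bounds on $\Sigma_j$: a lower bound $\lambda_{\min}\ge c\,\mathrm{poly}(\pi_{\min})$ supplied by the anti-concentration Assumption~\ref{assum:anti}, and an upper bound strictly below $2$ supplied by the sub-Gaussian conditional-moment control of Assumption~\ref{assum:subg}. Writing $\bm v=\bm\theta_j^t-\bm\theta_j^\star$ and expanding $\norm{(I-\Sigma_j)\bm v}_2^2=\norm{\bm v}_2^2-\bm v^\top\Sigma_j(2I-\Sigma_j)\bm v$ then produces a contraction factor $\tau<1$ determined by $\pi_{\min},k,\gamma,\zeta$.

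What remains is to show the two off-terms do not spoil the contraction. The misclassification term is higher order: a sample lies in $\mathcal C_j(\bm\theta^t)\triangle\mathcal C_j(\bm\theta^\star)$ only if it sits within distance $O(\norm{\bm\theta^t-\bm\theta^\star}_2)$ of a true decision boundary, and the separation \eqref{eq:defkappa} together with the small-ball bound of Assumption~\ref{assum:anti} gives $\P\big(\mathcal C_j(\bm\theta^t)\triangle\mathcal C_j(\bm\theta^\star)\big)\lesssim(\gamma\norm{\bm\theta^t-\bm\theta^\star}_2/\kappa)^\zeta$; since the residual itself vanishes at the boundary, $\norm{\bm e_j}_2$ is of order $\norm{\bm v}_2^{1+\zeta}$, which the choice of $\rho$ in \eqref{eq:choice_rho1} forces to be negligible against the first-order contraction. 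The noise term $\bm g_j$ is mean zero, and its squared norm concentrates at level $\sigma_z^2\cdot(\text{effective dimension})/n$; accounting for the sparse support and the combinatorial complexity of the cells yields exactly the floor $C_2\sigma_z^2(sk\log(n/s)+s\log(d/s)+\log(1/\delta))/n$ in \eqref{eq:errorbound_noise}. Finally, all the population approximations above---for $\pi_j^t$, for $\hat M_j$, and for $\bm e_j$ and $\bm g_j$---must hold uniformly over $\bm\theta^t$ ranging in $\mathcal N(\bm\theta^\star,\kappa\rho)$; I would obtain this through an $\epsilon$-net over the neighborhood combined with a VC/covering bound for the indicator class $\{\bm x\mapsto\bbone_{\{\bm x\in\mathcal C_j(\bm\theta)\}}\}$ and sub-exponential tail bounds for $\bm\xi\bm\xi^\top$ restricted to sparse coordinate subsets, which is what produces the $\sigma_z^2\vee1\vee\eta^4$, the powers of $\pi_{\min}$, and the $s\log((n\vee d)/s)$ in the sample-size condition \eqref{eq:cond:lem:lwb_gradient}.

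The main obstacle is this last uniform-concentration step. The cells $\mathcal C_j(\bm\theta)$ enter through discontinuous indicators, so $\hat M_j$ and the gradient are not Lipschitz in $\bm\theta$; controlling them uniformly over the continuum $\mathcal N(\bm\theta^\star,\kappa\rho)$ while simultaneously restricting to all $O(ks)$-sparse supports is delicate, and it is here that the anti-concentration exponent $\zeta$ and the sub-Gaussian parameter $\eta$ must be traded against one another to keep both the misclassified mass and the second-moment fluctuations small. A secondary difficulty is establishing the upper restricted-eigenvalue bound $\lambda_{\max}(\Sigma_j)<2$ needed for contraction, since conditioning on a cell of probability as small as $\pi_{\min}$ can inflate the conditional second moment; this is what confines the guarantee to the small basin radius $\rho$ and drives the strong $\pi_{\min}^{-4(1+\zeta^{-1})}$ dependence in \eqref{eq:cond:lem:lwb_gradient}.
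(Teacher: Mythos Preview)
Your architecture matches the paper's: a single high-probability event on which a one-step contraction holds, closed by induction; the hard-thresholding bound reducing everything to $O(ks)$-sparse supports; the misclassification mass controlled by anti-concentration; and the noise floor by sparse sub-Gaussian concentration uniform over the polytope class. These pieces correspond to the paper's Lemmas~\ref{mu_lemma}, \ref{lem:model_pis}, \ref{lem:model_difference}, \ref{lem:noise_grad_bound}, and the unrolling is the same.

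The genuine gap is your mechanism for the contraction of the main term. You propose to concentrate $(\pi_j^t)^{-1}\hat M_j$ around the conditional second moment $\Sigma_j=\E[\bm\xi\bm\xi^\top\mid\bm x\in\mathcal C_j(\bm\theta^t)]$ and then obtain $\|I-\Sigma_j\|<1$ from a two-sided spectral bound. The upper bound $\lambda_{\max}(\Sigma_j)<2$ you attribute to Assumption~\ref{assum:subg} is not available: sub-Gaussianity controls unconditional moments, but conditioning on a cell of probability $\pi_{\min}$ inflates the second moment in the directions that define the cell (for $\bm x\sim\mathrm{Normal}(\bm 0,I)$ and a halfspace $\{x_1>M\}$ one has $\E[x_1^2\mid x_1>M]\sim M^2$, unbounded as $\pi_{\min}\to 0$). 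You label this a ``secondary difficulty'', but it breaks the population-level contraction outright; restricting to the basin $\mathcal N(\bm\theta^\star,\kappa\rho)$ does not help, since the ground-truth cells $\mathcal C_j(\bm\theta^\star)$ already carry this inflation.

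The paper never routes through $\Sigma_j$. It exploits that the step size is the reciprocal of the \emph{empirical} mass $\hat p_j^t=\tfrac1n\sum_i\bbone_{\{\bm x_i\in\mathcal C_j^t\}}$, so on the restricted support one has the exact identity
\[
\tilde\Pi_\varphi\bigl(I-\mu_j^t\hat M_j\bigr)\tilde\Pi_\varphi=-\mu_j^t\,\tilde\Pi_\varphi\Bigl(\tfrac1n\textstyle\sum_i\bbone_{\{\bm x_i\in\mathcal C_j^t\}}\bigl(\bm\xi_i\bm\xi_i^\top-I\bigr)\Bigr)\tilde\Pi_\varphi,
\]
with the ``$\mathcal B$'' term in \eqref{eq:bndtermsstate} vanishing identically. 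The contraction then rests entirely on Lemma~\ref{lem:bnd_singular}, which bounds this uncentered fluctuation uniformly over sparse polytopes and sparse supports by a VC/Sauer--Shelah count over $\mathcal P_{k,d,s}$ combined with the sparse matrix deviation of Lemma~\ref{lem:Sparse_RIP}. That delivers $\|\tilde\Pi_\varphi(I-\mu_j^t\hat M_j)\tilde\Pi_\varphi\|\le \epsilon_{\min}/((1-\varrho)\pi_{\min}-\epsilon_{\min})$ with $\epsilon_{\min}=k^{-3/2}\pi_{\min}^{2(1+\zeta^{-1})}$, made small solely by the sample-size condition \eqref{eq:cond:lem:lwb_gradient}; no eigenvalue information about $\Sigma_j$ enters. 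The anti-concentration exponent $\zeta$ appears only through the choice of $\epsilon_{\min}$ and through the misclassification estimate (Lemma~\ref{lem:model_difference}), not through a lower spectral bound on $\Sigma_j$ as you suggest.
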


\begin{remark} \label{Remark:spgd}
A few remarks on the conditions and parameters in Theorem~\ref{THM:MAIN} are in order. 
\begin{enumerate}
    \item The exact form of constant $\tau\in (0,1)$ that determines the speed of convergence in \eqref{eq:errorbound_noise} is provided as Equation \eqref{finaltau} in Section \ref{ProofSketch}.
    \item The parameter $\rho$ in \eqref{eq:choice_rho1} determines the size of the basin of attraction in \eqref{eq:defnbr} together with the minimum discrepancy value $\kappa$. In the ``well-balanced'' case, i.e. $\pi_{\min} = \Omega({1/k})$, we have that $\rho$ becomes $\Omega(k^{-11/4})$. Therefore, the basin of attraction shrinks only by the order $k$ of the max-affine model. 
    \item In the well-balanced case, if $\mb x \sim \mathrm{Normal}(\bm 0, \bm I_d)$ (thus having $\zeta =1/2$, $\gamma = e $, and $\eta =1$), then the sample complexity requirement on $n$ becomes $\tilde{\mathcal{O}}((\sigma_z^2\vee 1) s k^8 \log d)$. Furthermore, for all $k\geq2$, via an upper bound on $\tau$ obtained by evaluating \eqref{finaltau} with $\rho = 1/4$, we obtain that $\tau < 10^{-3}$. Note that the expression of $\tau$ in \eqref{finaltau} as a function of $\rho$ is monotone increasing. 
\end{enumerate}     
\end{remark}

Theorem~\ref{THM:MAIN} implies local linear convergence of Sp-GD in the noiseless case when the algorithm is properly initialized for the sub-Gaussian covariate model.
The sample complexity scales linearly with $s$ significantly improving analogous results without variable selection \citep{ghosh2021max,kim2023maxaffine}. 
Importantly, 
Sp-GD does not inflate the degree of dependence on the model order $k$ and dataset imbalance parameter $\pi_{\min}$ and maintains the same order $k^4 \pi_{\min}^{-12}$ as plain GD and SGD \citep{kim2023maxaffine}. 
Therefore, Sp-GD outperforms these algorithms 
even in the absence of sparsity 
in the max-affine model. 

\section{Initialization for Sparse Max-Affine Regression}\label{sec:maxinit}
\subsection{Initialization Algorithms} \label{Sec:init}
The local convergence guarantees of Sp-GD provided in Theorem \ref{THM:MAIN} require a suitable initial estimate. That is the case since the minimization of $\ell(\mb  \theta)$ in \eqref{eq:loss_def_max0} under the sparsity constraint by \eqref{eq:Gamma_s} is non-convex, there exist multiple local minimizers, which hinders Sp-GD from converging to the global minimizer from an arbitrary initialization. 

To bypass this issue, we compute an initial estimate in the basin of attraction of the ground truth so that the subsequent Sp-GD converges to the desired estimate. 
One may apply the initialization scheme for max-affine regression by \citep{ghosh2021max} while ignoring the sparsity constraint. 
However, their theoretical guarantees provide the desired accuracy when the sufficient number of observations scales at least linearly in the total number of variables $d$. 
Therefore, using this initialization scheme with Sp-GD yields a requirement on the sample complexity that is no longer dominated by the number of active variables $s$. 
To retain the gain due to Theorem~\ref{THM:MAIN}, we propose an initialization scheme modified from the spectral initialization for max-affine regression \citep{ghosh2021max} that provides an error rate depending on $s$ instead of $d$. 
To fulfill this objective, the initialization scheme additionally requires that the $s$-sparse weight vectors $\{\mb  a_j^\star\}_{j=1}^k$ are jointly supported on a set with cardinality at most $s$. 
In other words, the ground-truth parameter vector $\mb  \theta^\star = [ [\mb  a_1^\star ; b_1^\star] ; \ldots ; [\mb  a_k^\star ; b_k^\star]]$ belongs to the set defined by 
\begin{equation}\label{eq:Gamma_s_joint}
\Gamma_{\text{$s$-row-sparse}} \triangleq \left\{ [\mb{\alpha}_1;\dots;\mb{\alpha}_k] \in \mathbb{R}^{k(d+1)} : \norm{ \left(\sum_{j=1}^k 
 [\mb{\alpha}_j]_l^2 \right)_{l=1}^d}_0 \leq s \right\},
\end{equation}
which is the set of all possible $\mb  \theta^\star$ with jointly sparse weight vectors. 
The parameter initialization is a two-step process: (i) estimate the span of $\{\mb  a_j^\star\}_{j=1}^k$, (ii) then estimate individual weight vectors $\{[\mb  a_j^\star; b_j^\star]\}_{j=1}^k$ from the estimated subspace.

\DontPrintSemicolon
\SetAlFnt{\small}
\begin{algorithm}

\caption{Sparse Spectral Method for $k \leq s$}
    

  \KwIn{dataset $\{\mb x_i, y_i\}_{i=1}^{n} $, sparsity level $s$, model order $k$, regularization parameter $\lambda$}
  \vspace{0.05in}
    $\mb{\hat m}_1 \gets \frac{1}{n}\sum_{i=1}^n y_i \mb{x}_i,$
    $\quad\mb{\hat M}_2 \gets \frac{1}{n} \sum_{i=1}^n y_i \left(\mb x_i \mb x_i^\T -\mb I_d\right), $
    $\quad \mb{\hat M} \gets \mb{\hat m}_1\mb {\hat m}_1^\T + \hat{\mb M}_2$\;
    \vspace{0.05in}
    $\mb{\hat P} \gets \displaystyle \mathop{\mathrm{argmax}}_{\tilde{\mb{ P}} \in \mathcal{F}^k_d} ~\mathrm{tr}\left(\mb{\hat M}^\T \tilde{\mb{ P}}\right) -\lambda \sum\limits_{i,j} \left|[\tilde{\mb{ P}}]_{i,j}\right|$ \hfill \text{$\triangleright$ where $\mathcal{F}^k_d$ is defined in \eqref{eq:def_fantope}} \;
    \vspace{0.05in}
    $\mathcal{S}\gets \left\{ \text{indices of $s$-largest diagonal entries of $\hat{\mb P}$} \right\}, \quad \mb{\hat V} \gets \displaystyle \mathop{\argmin}_{\tilde{\mb V}^\T \tilde{\mb V} = \mb I_k} \norm{[\mb{\hat P}]_{\mathcal{S},\mathcal{S}}- \tilde{\mb V} \tilde{\mb V}^\T }_\mathrm{F}$\;
  \KwOut{$k$-principle subspace estimate $\hat{\mb V}  \in \mathbb{R}^{s\times k}$, support set $\mathcal{S}$}
   \vspace{0.1in}    
  \label{algo:sPCA}
\end{algorithm}

First, the subspace estimation algorithm is presented in Algorithm \ref{algo:sPCA}. This algorithm modifies upon the moment method for parameter estimation originally developed for mixtures of linear models \citep{chaganty2013spectral,zhang2014spectral,yi2016solving,sedghi2016provable}. 
In Algorithm \ref{algo:sPCA}, $\hat{\mb  m}_1$ and $\hat{\mb  M}_2$ are respectively the first and second central moments weighted by the target values.  
In the non-sparse case, applying PCA to $\hat{\mb  M}$ and taking the first $k$-dominant eigenvectors yields a basis estimate of the subspace spanned by the ground-truth weight vectors $\{\mb  a_j^\star\}_{j=1}^k$.
In our case, the joint sparsity of the dominant eigenvectors can be utilized to obtain a more accurate estimate of the subspace via sparse PCA (sPCA) \citep{zou2006sparse}. 
Several convex relaxations were proposed to solve the sPCA problem \citep{d2008optimal,amini2009high,zhang2012sparse,vu2013fantope,dey2018convex,li2020exact}. 
Algorithm~\ref{algo:sPCA} uses the convex relaxation of sPCA by \citep{vu2013fantope} that computes the $k$-dominant eigenvectors under the joint sparsity constraint. 
They formulated the estimation of sparse principal components as the following semidefinite program: 
\begin{equation}\label{eq:sPCA}
    \max_{\mb  P \in \mathcal{F}^k_d} \mathrm{tr}\left(\hat{\mb  M}^\top\mb  P\right) -\lambda \sum\limits_{i,j} \left|[{\mb { P}}]_{i,j}\right|
\end{equation}
for some positive constant $\lambda>0$, where $\mathcal{F}^k_d$ denotes the Fantope defined by
\begin{equation}\label{eq:def_fantope}
    \mathcal{F}^k_d \triangleq \left\{ \mb  V \mb  V^\T: \mb  0_d \preceq \mb  V \mb  V^\T \preceq \mb  I_d,~ \mathrm{tr}\left(\mb  V^\T \mb  V\right)=k \right\}.
\end{equation}
Indeed, the Fantope $\mathcal{F}^k_d$ is obtained as the convex hull of all rank $k$-projection matrices \citep{overton1992sum}.
On one hand, the first term in the maximization in \eqref{eq:sPCA} measures the similarity between an element in the Fantope and the empirical moment matrix $\hat{\mb  M}$. 
On the other hand, the second term in \eqref{eq:sPCA} encourages sparsity by penalizing with the $\ell_1$ norm of all entries of the matrix. 
This relaxed version of sPCA is applied in the second step of Algorithm \ref{algo:sPCA}.
The solution to this is calculated using the Alternating Direction Method of Multipliers (ADMM) algorithm presented in \citep[Algorithm 1]{vu2013fantope}. 
Since the result $\hat{\mb  P}$ is not necessarily a valid rank-$k$ projection matrix, we need post-processing in the following two steps. 
First, the algorithm recovers the support as $\mathcal{S}$ by thresholding the diagonal entries of $\hat{\mb  P}$. 
The final step in Algorithm \ref{algo:sPCA} is to find the optimal projection of $[\mb {\hat P}]_{\mathcal{S},\mathcal{S}}$ onto the set of all rank-$k$ projection matrices, which can be obtained by the $k$-dominant eigenvectors of $[\hat{\mb  P}]_{\mathcal{S},\mathcal{S}}$. Then $\hat{\mb  V} \in \mathbb{R}^{s \times k}$ will denote the Cholesky factor of the estimated rank-$k$ projection matrix. 

\DontPrintSemicolon
\SetAlFnt{\small}
\begin{algorithm}

\caption{Discrete Search over Estimated Subspace \citep{ghosh2021max}}
    

  \KwIn{dataset $\{\mb x_i, y_i\}_{i=1}^{n} $, model order $k$, subspace basis $\mb{\hat V}$, separation $r\in (0,1)$, and  support $\mathcal{S}$}
  \vspace{0.05in}
    $\mathcal{N} \gets r$-covering of $B_2^{k+1}$
   \;
    \vspace{0.05in}
    $\displaystyle \left( \{[\mb w_j^\sharp;b_j^\sharp]\}_{j=1}^k, c^\sharp \right) \gets \mathop{\mathrm{argmin}}_{\{[\mb w_j;b_j]\}_{j=1}^k \in \mathcal{N}, c\geq0}
    \frac{1}{n}  \sum\limits_{i=1}^{n}\left(y_i - c\cdot \underset{j \in [k]}{\max}\langle [\mb x_i]_{\mathcal{S}}^\mathrm{T}\hat{\mb V} w_j + b_j\rangle\right)^2 $\;
    \For{$j \in \{1,\dots,k\}$}{
    $ [\mb \theta_j^0]_{\mathcal{S}} \gets c^\sharp[\hat{\mb V} \mb w_j^\sharp ; b_j^\sharp], \quad [\mb \theta_j^0]_{[d] \setminus \mathcal{S}} \gets \bm 0 \in \mathbb{R}^{d-s}$
    }
    \vspace{0.05in}
    
  \KwOut{Initial model parameter estimate $\{\mb \theta_j^0\}_{j=1}^k$.}
   \vspace{0.1in}    
  \label{algo:ksearch}
\end{algorithm}

Algorithm~\ref{algo:sPCA} identifies the joint support of $\{\mb a_j^\star\}_{j=1}^k$ and estimates the subspace spanned by these vectors. 
This information is not sufficient to estimate the individual parameter vectors $\{[\mb a_j^\star,b_j^\star]\}_{j=1}^k$. 
To approximate the parameter vectors up to a global scaling, the authors in \citep{ghosh2021max} proposed a discrete search over subsets of an $r$-covering $\mathcal{N}$ of the unit ball that satisfies $\min_{\mb x\in \mathcal{N}}\|\mb x -\mb u \|\leq r$ for all $\mb u \in B_2^{k+1}$ 
and $B_2^{k+1} \subset \bigcup_{\mb w \in \mathcal{N}} \left(\mb  w + r B_2^{k+1}
\right)$. 
The cost of constructing an $r$-covering and searching over it grows exponentially in the dimension. 
Due to the dimensionality reduction by Algorithm~\ref{algo:sPCA}, the search dimension is the model order $k$, which is often much smaller than the ambient dimension $d$. 
If this is the case, then the discrete search is computationally feasible. 
This method also applies to the initialization of sparse max-affine regression. 
To make the manuscript self-contained, we summarize the discrete search algorithm by \citep{ghosh2021max} in Algorithm~\ref{algo:ksearch} using our notation. 

\subsection{Theoretical Analysis of Initialization}
We present theoretical guarantees for Algorithm \ref{algo:sPCA} in the following theorem. 
\begin{theorem}\label{Theo:Projection}
     Suppose that $k\leq s\leq d$. 
     Let $\mb  x \sim \mathrm{Normal}(\mb  0,\mb  I_d)$, $y$ be defined from $\mb  x$ according to \eqref{eq:def_max_affine_model_xi}, and $\mb  P \in \mathbb R^{d\times d}$ be the projection operator onto the span of $\{\mb  a^\star_j\}_{j=1}^k$ that are jointly supported on $\mathcal{S}^\star$ with $|\mathcal{S}^\star|=s$. 
     Then it holds with probability $1-n^{-11}$ that the estimates by Algorithm \ref{algo:sPCA} satisfies $\mathcal{S} = \mathcal{S}^\star$ and 
     \begin{equation}\label{eq:projectionerror}
        \norm{\hat{\mb  V} \hat{\mb  V}^\T - [\mb  P]_{\mathcal{S}^\star,\mathcal{S}^\star}}_\mathrm{F} \leq C s \cdot \frac{  \varsigma^2+ \sigma_z^2}{\delta_{\mathrm{gap}}} \cdot\left(\frac{\log^2(nd)}{n} \vee \frac{\log(nd)}{\sqrt{n}}\right)
     \end{equation}
     provided  
     \begin{equation}\label{eq:nprojectionerror}
         n\geq C s^2 \left(\frac{\varsigma^2\vee\sigma_z^2}{\delta_{\mathrm{gap}}} \vee \frac{\varsigma^4\vee\sigma_z^4}{\delta^2_{\mathrm{gap}}}\right)\left( \min_{j \in \mathcal{S}^\star} [\mb  P]_{jj} \right)^{-2} \log^2(nd)
     \end{equation}
     for $\delta_\mathrm{gap} > 0$, independent of $d$, where
     \begin{equation} \label{eq:varsigma}
    \varsigma \triangleq \max_{j \in [k]}\left(\norm{\mb  a^\star_j}_1+|b_j^\star|\right).
\end{equation}
\end{theorem}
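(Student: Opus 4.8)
The plan is to compare the empirical matrix $\hat{\mb M} = \hat{\mb m}_1\hat{\mb m}_1^\T + \hat{\mb M}_2$ against its population counterpart $\mb M = \mb m_1 \mb m_1^\T + \mb M_2$, where the (suitably normalized) empirical moments estimate $\mb m_1 = \E[y\mb x]$ and $\mb M_2 = \E[y(\mb x\mb x^\T - \mb I_d)]$, and to show that $\hat{\mb M}$ is an entrywise-accurate estimate of a population matrix whose range is exactly $\mathrm{span}\{\mb a_j^\star\}_{j=1}^k$. First I would establish the population structure using the Gaussian Stein identities. Since $z$ is independent and zero-mean, $\mb m_1 = \E[f(\mb x)\mb x] = \E[\nabla f(\mb x)] = \sum_{j=1}^k \pi_j \mb a_j^\star$ and $\mb M_2 = \E[f(\mb x)(\mb x\mb x^\T - \mb I_d)] = \E[\nabla^2 f(\mb x)]$, where the second identity is read distributionally because $f$ is only piecewise linear: its distributional Hessian is a positive measure supported on the tropical zero-set $\mathcal{V}(\mb \theta^\star)$ with rank-one contributions along the crease normals $\mb a_j^\star - \mb a_{j'}^\star$. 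Consequently $\mb M \succeq 0$, $\mathrm{range}(\mb M) \subseteq \mathrm{span}\{\mb a_j^\star\} = \mathrm{range}(\mb P)$, and every nonzero entry of $\mb M$ lies in the block $\mathcal{S}^\star \times \mathcal{S}^\star$. The hypothesis $\delta_{\mathrm{gap}} > 0$ encodes that $\{\mb a_j^\star\}$ span a $k$-dimensional space, so $\mb M$ has rank exactly $k$; I would set $\delta_{\mathrm{gap}} := \lambda_k(\mb M)$, the smallest nonzero eigenvalue, which is the eigengap separating the signal eigenspace from the null space of $\mb M$. In particular $[\mb P]_{\mathcal{S}^\star,\mathcal{S}^\star}$ is exactly the rank-$k$ projection onto $\mathrm{range}([\mb M]_{\mathcal{S}^\star,\mathcal{S}^\star})$.

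The technical core is an entrywise bound $\norm{\hat{\mb M} - \mb M}_\infty \lesssim (\varsigma^2 + \sigma_z^2)\left(\log^2(nd)/n \vee \log(nd)/\sqrt n\right)$ holding with probability at least $1 - n^{-11}$, and I expect this to be the main obstacle. For standard Gaussian $\mb x$, each affine piece $\langle \mb a_j^\star, \mb x\rangle + b_j^\star$ is sub-Gaussian with parameter controlled by $\varsigma$; taking a maximum over the $k$ pieces and adding the sub-Gaussian noise $z$ shows $y$ is sub-Gaussian with norm $\lesssim \varsigma + \sigma_z$. Hence each summand $y_i x_{i,p} x_{i,q}$ of $\hat{\mb M}_2$ (and each summand of $\hat{\mb m}_1$) is a product of sub-Gaussian factors, which is sub-Weibull of order $2/3$ rather than sub-exponential. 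I would therefore invoke a generalized Bernstein inequality for sums of centered sub-Weibull variables, whose two tail regimes produce exactly the split $\log^2(nd)/n$ (the heavy regime coming from the product tails) versus $\log(nd)/\sqrt n$ (the variance, sub-Gaussian regime), and then take a union bound over the $O(d^2)$ entries, which is what promotes the deviation level to $\log(nd)$. The squared scale $\varsigma^2 + \sigma_z^2$ arises from two sources: the outer product $\hat{\mb m}_1\hat{\mb m}_1^\T$ contributes cross terms of order $\varsigma(\varsigma + \sigma_z)$ while $\hat{\mb M}_2$ contributes order $\varsigma + \sigma_z$, and bounding both by $\varsigma^2 + \sigma_z^2$ via an arithmetic-geometric mean inequality consolidates the scale.

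Given the entrywise bound, I would set the penalty $\lambda$ to a constant multiple of this deviation rate so that $\lambda \geq \norm{\hat{\mb M} - \mb M}_\infty$, and then apply the deterministic Fantope sparse-PCA guarantee of \citet{vu2013fantope}: since $\mb M$ has eigengap $\delta_{\mathrm{gap}}$ and $\mb P$ is supported on $s$ coordinates, the SDP solution obeys $\norm{\hat{\mb P} - \mb P}_\mathrm{F} \leq C s\lambda/\delta_{\mathrm{gap}}$, which already matches the target form $\frac{Cs(\varsigma^2+\sigma_z^2)}{\delta_{\mathrm{gap}}}\left(\log^2(nd)/n \vee \log(nd)/\sqrt n\right)$. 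For exact support recovery I would use that $[\mb P]_{jj} = 0$ for $j \notin \mathcal{S}^\star$ while $[\mb P]_{jj} \geq \min_{j\in\mathcal{S}^\star}[\mb P]_{jj} > 0$ on the support; since $\max_j\lvert[\hat{\mb P}]_{jj} - [\mb P]_{jj}\rvert \leq \norm{\hat{\mb P} - \mb P}_\mathrm{F}$, the sample-complexity condition \eqref{eq:nprojectionerror} is calibrated precisely to force this diagonal error below $\tfrac12\min_{j\in\mathcal{S}^\star}[\mb P]_{jj}$, so that the $s$ largest diagonal entries of $\hat{\mb P}$ coincide with $\mathcal{S}^\star$ and hence $\mathcal{S} = \mathcal{S}^\star$. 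The two terms inside \eqref{eq:nprojectionerror} correspond to requiring each of the two regimes of the deviation rate to be dominated by $\min_{j\in\mathcal{S}^\star}[\mb P]_{jj}$.

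Finally, conditioned on $\mathcal{S} = \mathcal{S}^\star$, the last step of Algorithm~\ref{algo:sPCA} returns $\hat{\mb V}\hat{\mb V}^\T$ as the best rank-$k$ projection approximation of $[\hat{\mb P}]_{\mathcal{S}^\star,\mathcal{S}^\star}$. Because $[\mb P]_{\mathcal{S}^\star,\mathcal{S}^\star}$ is itself a rank-$k$ projection, and thus feasible for that minimization, a triangle inequality combined with optimality of $\hat{\mb V}\hat{\mb V}^\T$ gives $\norm{\hat{\mb V}\hat{\mb V}^\T - [\mb P]_{\mathcal{S}^\star,\mathcal{S}^\star}}_\mathrm{F} \leq 2\norm{[\hat{\mb P}]_{\mathcal{S}^\star,\mathcal{S}^\star} - [\mb P]_{\mathcal{S}^\star,\mathcal{S}^\star}}_\mathrm{F} \leq 2\norm{\hat{\mb P} - \mb P}_\mathrm{F}$. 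Chaining this with the Fantope bound yields \eqref{eq:projectionerror}. The decisive difficulty is the second step: correctly identifying the population moments of the non-smooth $f$ and pushing the sub-Weibull concentration through with the precise two-regime rate and the union bound, whereas the remaining steps are perturbation and optimization arguments that plug directly into \citet{vu2013fantope}.
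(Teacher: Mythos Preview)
Your proposal is correct and follows essentially the same route as the paper: the paper likewise cites \citep{ghosh2021max} for the population structure of $\mb M$ (range equal to $\mathrm{span}\{\mb a_j^\star\}$ and eigengap $\delta_{\mathrm{gap}}$), proves the entrywise bound $\|\hat{\mb M}-\mb M\|_\infty \lesssim (\varsigma^2+\sigma_z^2)(\log^2(nd)/n \vee \log(nd)/\sqrt n)$ via sub-Weibull concentration (with some auxiliary truncation for the first-moment terms) plus a union bound over $d^2$ entries, plugs this into the Fantope perturbation bound \citep[Theorem~3.1]{vu2013fantope}, invokes \citep[Theorem~3.2]{vu2013fantope} for exact support recovery under \eqref{eq:nprojectionerror}, and finishes with exactly your triangle-inequality-plus-optimality argument to pass from $\hat{\mb P}$ to $\hat{\mb V}\hat{\mb V}^\T$.
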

The work in \citep{ghosh2021max} showed that their spectral initialization provides an $\epsilon$-accurate subspace estimation with $\tilde{\mathcal{O}}(\epsilon^{-2}d)$ samples (when the ground-truth and hence the model order $k$ are fixed).
In the sparse case of max-affine regression, Theorem \ref{Theo:Projection} reduces the dependence of the sample complexity on $d$ from linear to logarithmic. 
Note that the spectral gap $\delta_{\mathrm{gap}}$ in Theorem \ref{Theo:Projection} is also independent of $d$ similar to \citep[Theorem~2]{ghosh2021max}.

\begin{proof}[Proof of Theorem~\ref{Theo:Projection}]
Consider the population-level version of the empirical moment matrix $\hat{\mb M}$ defined as
\begin{align}
\label{init_lemma_def}
\mb  M = \mb  m_1 \mb  m_1^\T + \mb  M_2, \quad \text{where} \quad 
\mb  m_1 = \mathbb{E}[y \mb  x ], \quad \text{and} \quad \mb  M_2 = \mathbb E\left[y(\mb  x \mb  x^\T - \mb  I_d) \right]. 
\end{align}
We will use the following known results about $\mb  M$: 
i) The column space of $\mb  M$ coincides with the $k$-dimensional subspace spanned by the ground-truth weight vectors $\{\mb  a_j^\star\}_{j=1}^k$ \citep[Lemma~3]{ghosh2021max}; 
ii) There exists $\delta_\mathrm{gap}>0$, independent of $d$, such that the smallest nonzero eigenvalue of $\mb  M$ is bounded from below by $\delta_\mathrm{gap}$ \citep[Lemma 7]{ghosh2021max}. 
Then \citep[Theorem 3.1]{vu2013fantope} provides a perturbation bound given by
\begin{equation} \label{eq:vincebound}
    \|\hat{\mb P}-\mb P\|_{\mathrm{F}}
    \leq 
    \frac{4s}{\delta_{\mathrm{gap}}}  \norm{\mb {\hat M} - \mb {  M}}_\infty.
\end{equation}
Next, we derive an upper bound on $\|\hat{\mb  M} - \mb  M\|_\infty$. 
The following lemma, whose proof is deferred to Appendix \ref{App:Proofs}, provides the concentration of $\hat{\mb  M}$ around $\mb  M$.
\begin{lemma} \label{lem:Mtotal}
  Instate the assumptions in Theorem \ref{Theo:Projection}.We have that
    \begin{align}
   \mathbb P \left( \norm{\mb {\hat M} - \mb {  M}}_\infty\geq  C(  \varsigma^2+ \sigma_z^2)\left(\frac{\log^2(nd)}{n} \vee \frac{\log(nd)}{\sqrt{n}}\right)\right) \leq n^{-11}.
\end{align}
\end{lemma}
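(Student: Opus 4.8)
The plan is to control the entrywise error $\norm{\hat{\mb M}-\mb M}_\infty$ by separating the rank-one contribution from the second-moment contribution and applying a Bernstein-type concentration to each scalar entry after an appropriate truncation. First I would write
\[
\hat{\mb M}-\mb M = \underbrace{\left(\hat{\mb m}_1-\mb m_1\right)\hat{\mb m}_1^\T + \mb m_1\left(\hat{\mb m}_1-\mb m_1\right)^\T}_{\text{rank-one part}} + \left(\hat{\mb M}_2-\mb M_2\right),
\]
and use the entrywise inequality $\norm{\mb u\mb v^\T}_\infty \le \norm{\mb u}_\infty\norm{\mb v}_\infty$ to reduce the task to three quantities: $\norm{\hat{\mb m}_1-\mb m_1}_\infty$, the deterministic size $\norm{\mb m_1}_\infty$, and $\norm{\hat{\mb M}_2-\mb M_2}_\infty$. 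The rank-one part is then bounded by $\norm{\hat{\mb m}_1-\mb m_1}_\infty\left(2\norm{\mb m_1}_\infty+\norm{\hat{\mb m}_1-\mb m_1}_\infty\right)$, so once the two empirical deviations are controlled the claim follows by the triangle inequality.

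The key structural fact is that the max-affine response is controlled through H\"older's inequality: writing $g(\mb x)=\max_{j}\left(\langle \mb a_j^\star,\mb x\rangle+b_j^\star\right)$, we have $|g(\mb x)|\le\max_j\left(\norm{\mb a_j^\star}_1\norm{\mb x}_\infty+|b_j^\star|\right)\le\varsigma\left(\norm{\mb x}_\infty\vee1\right)$, which is exactly why $\varsigma$ in \eqref{eq:varsigma} is built from the $\ell_1$ norm and is the natural scale for an entrywise ($\ell_\infty$) analysis. Since $y=g(\mb x)+z$ with $z$ independent and zero-mean, the population first moment reduces to $\mb m_1=\E[g(\mb x)\mb x]$, and Stein's identity for $\mb x\sim\mathrm{Normal}(\mb 0,\mb I_d)$ gives $\mb m_1=\sum_{j}\P\left(\mb x\in\mathcal{C}_j(\mb \theta^\star)\right)\mb a_j^\star$; hence $\norm{\mb m_1}_\infty\le\max_j\norm{\mb a_j^\star}_\infty\le\varsigma$, which disposes of the deterministic factor.

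For the empirical quantities I would pass to a high-probability event $\mathcal E$ on which $\max_{i\in[n]}\norm{\mb x_i}_\infty\lesssim\sqrt{\log(nd)}$ and $\max_{i\in[n]}|z_i|\lesssim\sigma_z\sqrt{\log n}$; by Gaussian and sub-Gaussian maximal inequalities and a union bound over the $n$ samples and $d$ coordinates, $\P(\mathcal E^c)\le\tfrac12 n^{-11}$. On $\mathcal E$ every summand is bounded, for instance $|y_i(x_{i,p}x_{i,q}-\delta_{pq})|\lesssim(\varsigma+\sigma_z)\log^{3/2}(nd)$, while the summand variances stay at order $\varsigma^2+\sigma_z^2$ without logarithmic inflation, since $\E[g(\mb x)^2 x_p^2 x_q^2]\lesssim\varsigma^2$ by Cauchy--Schwarz and $\E[z^2 x_p^2 x_q^2]\lesssim\sigma_z^2$. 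Fixing one entry, the centered estimator is a normalized sum of i.i.d.\ bounded variables, so Bernstein's inequality splits into two regimes whose contributions, after substituting $\log(1/\delta)\lesssim\log(nd)$ and optimizing the truncation level, are of order $\log(nd)/\sqrt n$ (variance regime) and $\log^2(nd)/n$ (boundedness regime); the truncation bias is of strictly lower order by the rapid tail decay. A union bound over the $d$ entries of $\mb m_1$ and the $d^2$ entries of $\mb M_2$, using $\log(d^2 n^{11})\lesssim\log(nd)$, then yields the max-norm bounds, and collecting the scale factors (which combine into $\varsigma^2+\sigma_z^2$, driven by the rank-one term) gives the claimed inequality with total failure probability at most $n^{-11}$.

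The main obstacle is the concentration of the entries of $\hat{\mb M}_2$: each is a sum of i.i.d.\ variables $y(x_px_q-\delta_{pq})$ that, after expanding $y=g(\mb x)+z$, contain the triple product $g(\mb x)x_px_q$ of three sub-Gaussian factors and are therefore only sub-Weibull of order $2/3$ (i.e.\ $\psi_{2/3}$), with tails too heavy for a direct Bernstein argument. The crux is to calibrate the truncation level so that simultaneously the almost-sure bound stays at order $\log^{3/2}(nd)$, the discarded mass contributes negligible bias, and the variance proxy is preserved at $\varsigma^2+\sigma_z^2$; tracking the resulting logarithmic exponents through these three competing requirements is where the real work lies and is what fixes the precise $\log^2(nd)/n\vee\log(nd)/\sqrt n$ form.
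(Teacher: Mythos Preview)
Your decomposition and overall strategy match the paper's: split $\hat{\mb M}-\mb M$ into the rank-one piece and $\hat{\mb M}_2-\mb M_2$, bound $\norm{\mb m_1}_\infty\le\varsigma$, control $\norm{\hat{\mb m}_1-\mb m_1}_\infty$ and $\norm{\hat{\mb M}_2-\mb M_2}_\infty$ entrywise, and combine. The paper packages the two deviation bounds as a separate lemma (Lemma~\ref{lem:empirical_init_concentration}) and then assembles them exactly as you do.

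The one genuine methodological difference is in the second-moment term. You propose to handle the $\psi_{2/3}$ summands $y_i(x_{i,p}x_{i,q}-\delta_{pq})$ by truncation on the event $\{\max_i\norm{\mb x_i}_\infty\lesssim\sqrt{\log(nd)},\ \max_i|z_i|\lesssim\sigma_z\sqrt{\log n}\}$ followed by ordinary Bernstein for bounded variables. The paper instead treats this term \emph{without} truncation: it proves a short product rule showing that a product of three sub-Gaussians is sub-Weibull of order $3/2$ (Corollary~\ref{Col:subweb}), and then invokes a Bernstein-type inequality for sub-Weibull sums directly. Interestingly, for $\hat{\mb m}_1-\mb m_1$ the paper \emph{does} use a truncation decomposition very close to yours, so your proposal is in some sense more uniform. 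Your route is more elementary (no sub-Weibull machinery needed), while the paper's route is cleaner once the tool is available and avoids the bookkeeping of bias from the discarded mass. Both lead to the same $\log^2(nd)/n \vee \log(nd)/\sqrt{n}$ rate, with the $\log^2(nd)/n$ term arising from $\norm{\hat{\mb m}_1-\mb m_1}_\infty^2$ in either argument.
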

By plugging Lemma \ref{lem:Mtotal} to \eqref{eq:vincebound}, we obtain that it holds with probability $1-n^{-11}$ that
\begin{equation} \label{eq:vincebound2}
    \|\hat{\mb P}-\mb P\|_{\mathrm{F}}
   \leq C s\frac{  \varsigma^2+ \sigma_z^2}{\delta_{\mathrm{gap}}} \cdot\left(\frac{\log^2(nd)}{n} \vee \frac{\log(nd)}{\sqrt{n}}\right).
\end{equation}
We further proceed with the remainder of the proof under the event that \eqref{eq:vincebound2} holds. 
The first assertion $\mathcal{S} = \mathcal{S}^\star$ follows from \citep[Theorem 3.2]{vu2013fantope} if 
\begin{equation}\label{eq:support_req}
    \norm{\mb {\hat P} - \mb  P}_{\mathrm{F}}\leq \frac{1}{2} \cdot \min\limits_{\substack{j \in \mathcal{S}^\star}} [\mb  P]_{jj},
\end{equation}
which is satisfied by \eqref{eq:nprojectionerror} and \eqref{eq:vincebound2}. 
Next, by the triangle inequality and the optimality of $\hat{\mb  V}$, we have
\begin{align} \label{eq:projection_error}
\norm{\hat{\mb  V} \hat{\mb  V}^\T -[\mb  P]_{\mathcal{S}^\star,\mathcal{S}^\star}}_\mathrm{F} &\leq \norm{\hat{\mb  V} \hat{\mb  V}^\T - [\mb {\hat P}]_{\mathcal{S},\mathcal{S}}}_\mathrm{F} + \norm{[\mb {\hat P}]_{\mathcal{S},\mathcal{S}} - [\mb  P]_{\mathcal{S}^\star,\mathcal{S}^\star}}_{\mathrm{F}} \nonumber \\
&\leq 2 \, \norm{[\mb {\hat P}]_{\mathcal{S},\mathcal{S}} - [\mb  P]_{\mathcal{S}^\star,\mathcal{S}^\star}}_{\mathrm{F}} 
= 2\|\hat{\mb  P}-\mb  P \|_{\mathrm{F}}, 
\end{align}
where the last identity holds since we have shown $\mathcal{S} = \mathcal{S}^\star$.
Combining \eqref{eq:vincebound2} and \eqref{eq:projection_error} yields that the second assertion in \eqref{eq:projectionerror} also follows from \eqref{eq:nprojectionerror}, which concludes the proof.
\end{proof}

Once the support $\mathcal{S}^\star$ is exactly recovered by Algorithm~\ref{algo:sPCA}, the estimation accuracy of Algorithm \ref{algo:ksearch} is provided by \citep[Theorem 3]{ghosh2021max} with $d$ substituted by $s$. 
We provide the statement of this result in our notation for completeness. 
\begin{theorem}[\hspace{-0.005cm}{A paraphrase of \citep[Theorem 3]{ghosh2021max}}]\label{theo:ksearch}
    Instate the assumptions in Theorem~\ref{Theo:Projection}. Let $R_{\max} \triangleq \max_{j \in [k]} \|\mb  \theta^\star_j\|$. Then it holds with probability at least $1-n^{-11}$ that the initial parameter estimate $\{\mb  \theta^0_j\}_{j=1}^k$ by Algorithm \ref{algo:ksearch} with $\mathcal{S} = \mathcal{S}^\star$ satisfies     \begin{equation}\label{eq:ghosh1st}
        \min_{\substack{ \pi \in \mathrm{perm}([k])}}  \sum_{j=1}^k\norm{\mb  \theta^\star_j - \mb  \theta^0_{\pi(j)} }^2 \leq \frac{k^4}{\pi_{\min}^3} \left\{ R_{\max}^2 \left( r^2 +\norm{\hat{\mb  V} \hat{\mb  V}^\T - [\mb  P]_{\mathcal{S}^\star,\mathcal{S}^\star}}_\mathrm{F}^2 \right) + \frac{\sigma^2_z \log (1+ {1}/{r}) }{n}\right\} 
    \end{equation}
    provided that 
    \begin{gather}\label{eq:nghosh}
         n\geq C\left(\frac{k}{\pi_{\min}}\right)^4 \left\{\left[s\log\left({nk}\right)\log^2\left(\frac{k}{\pi_{\min}}\right)\right]\vee \left[\left(\frac{\sigma_z}{\Delta}\right)^2 \log\left(1+\frac{1}{r}\right)\right] \right\}, 
         \end{gather}
         \begin{gather}\label{eq:ghosh_proj_req}
         \norm{\hat{\mb  V}\hat{\mb  V}^\T -[\mb  P]_{\mathcal{S}^\star,\mathcal{S}^\star}}^2_{\mathrm{F}} \leq \frac{\Delta^4\pi_{\min}^3}{64R_{\max}^2k^4},
         \end{gather}
         \begin{gather}\label{eq:ghosh_r_req}
         r^2\leq \frac{\Delta^4 \pi_{\min}^5}{64R_{\max}^2k^6\log(k\pi^{-1}_{\min})}.
    \end{gather}
\end{theorem}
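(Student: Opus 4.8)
The plan is to reduce the claim to the known guarantee of \citet[Theorem~3]{ghosh2021max} by exploiting the exact support recovery established in Theorem~\ref{Theo:Projection}. I would first condition on the event $\{\mathcal{S} = \mathcal{S}^\star\}$, which holds with probability at least $1 - n^{-11}$. On this event, Algorithm~\ref{algo:ksearch} performs its discrete search entirely within the coordinates indexed by $\mathcal{S}^\star$, so the estimation problem collapses to an $s$-dimensional max-affine regression. The key observation is that, because $\mathrm{supp}(\mb a_j^\star) \subseteq \mathcal{S}^\star$ for every $j \in [k]$, the target obeys $y_i = \max_{j \in [k]}\left(\langle [\mb a_j^\star]_{\mathcal{S}^\star}, [\mb x_i]_{\mathcal{S}^\star}\rangle + b_j^\star\right) + z_i$; that is, $y_i$ depends on $\mb x_i$ only through the sub-vector $[\mb x_i]_{\mathcal{S}^\star}$, so no information is lost by restricting to $\mathcal{S}^\star$.

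Next I would verify that all hypotheses of \citet[Theorem~3]{ghosh2021max} hold verbatim in the reduced dimension $s$. Since $\mb x \sim \mathrm{Normal}(\mb 0, \mb I_d)$, the marginal $[\mb x]_{\mathcal{S}^\star} \sim \mathrm{Normal}(\mb 0, \mb I_s)$ is again isotropic Gaussian, so the restricted covariates $\{[\mb x_i]_{\mathcal{S}^\star}\}_{i=1}^n$ are i.i.d.\ standard Gaussian in $\mathbb{R}^s$ and the noise model is unchanged; in particular the anti-concentration needed by their proof is automatic for the Gaussian law. Crucially, the separation $\Delta$ and the scale $R_{\max} = \max_{j}\norm{\mb \theta_j^\star}$ are preserved under the restriction, since every $\mb a_j^\star$ already lives on $\mathcal{S}^\star$, so $\norm{\mb a_j^\star - \mb a_{j'}^\star}$ and $\norm{\mb \theta_j^\star}$ are identical whether computed on $[d]$ or on $\mathcal{S}^\star$. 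The subspace estimate $\hat{\mb V} \in \mathbb{R}^{s \times k}$ and its error $\norm{\hat{\mb V}\hat{\mb V}^\T - [\mb P]_{\mathcal{S}^\star,\mathcal{S}^\star}}_\mathrm{F}$ are exactly the $s$-dimensional analogues of the quantities in their theorem. I would therefore invoke \citet[Theorem~3]{ghosh2021max} with the ambient dimension $d$ replaced by $s$, which yields the bound \eqref{eq:ghosh1st} together with the conditions \eqref{eq:nghosh}--\eqref{eq:ghosh_r_req}; the three terms in \eqref{eq:ghosh1st} account respectively for the covering resolution $r^2$, the subspace mismatch, and the stochastic noise $\sigma_z^2\log(1+1/r)/n$, while the factor $k^4/\pi_{\min}^3$ arises from their local-identifiability argument relating the excess mean-squared error to the parameter error and is unaffected by the dimensionality reduction. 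Finally, combining the support-recovery event of Theorem~\ref{Theo:Projection} with the search guarantee, each of probability at least $1-n^{-11}$, a union bound preserves the stated $1-n^{-11}$ conclusion after adjusting the absolute constant $C$.

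The main obstacle I anticipate is the statistical dependence between the subspace/support estimate produced by Algorithm~\ref{algo:sPCA} and the data reused by the discrete search in Algorithm~\ref{algo:ksearch}. Because the bound \eqref{eq:ghosh1st} treats $\hat{\mb V}$ as a deterministic input, the cleanest route is to follow \citet{ghosh2021max} in using sample splitting, estimating the subspace on one portion of the data and running the search on an independent portion, so that conditioning on $\hat{\mb V}$ leaves a fresh i.i.d.\ Gaussian sample for the per-candidate concentration arguments. Verifying that this independence survives the restriction to $\mathcal{S}^\star$, and that the concentration of the empirical loss (and of the noise cross-term producing the $\log(1+1/r)$ factor) holds uniformly over the $r$-covering of $B_2^{k+1}$, is the technical crux; however, once the reduction to dimension $s$ is in place these facts are inherited directly from the cited analysis, which is precisely why the statement is a paraphrase rather than a new argument.
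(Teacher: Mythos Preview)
Your proposal is correct and matches the paper's approach exactly: the paper does not give an independent proof of this theorem but simply states, just before the theorem, that ``once the support $\mathcal{S}^\star$ is exactly recovered by Algorithm~\ref{algo:sPCA}, the estimation accuracy of Algorithm~\ref{algo:ksearch} is provided by \citep[Theorem~3]{ghosh2021max} with $d$ substituted by $s$.'' Your reduction argument (restrict to the coordinates in $\mathcal{S}^\star$, note that $[\mb x]_{\mathcal{S}^\star}\sim\mathrm{Normal}(\mb 0,\mb I_s)$, and that $\Delta$, $R_{\max}$, and the subspace error are preserved) is precisely this substitution spelled out. One small over-elaboration: the theorem already takes $\mathcal{S}=\mathcal{S}^\star$ as an input hypothesis rather than an event, so the conditioning and union-bound step you describe belongs to Theorem~\ref{theo:init} rather than here; likewise, your discussion of sample splitting is a legitimate concern but the paper does not address it and simply inherits whatever dependence handling is in \citep{ghosh2021max}.
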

\begin{remark}
    The estimation error in Theorem \ref{theo:ksearch} is stated using the minimum distance between the ground-truth parameters $\{\mb  \theta^\star_j\}_{j=1}^k$ and all permutations of the estimates $\{\mb  \theta^0_j\}_{j=1}^k$. On the other hand, \citep[Theorem 3]{ghosh2021max} stated the estimation error as the minimum distance up to both permutation and scaling ambiguities. However, careful examination of their proof shows that their error bound applies to a particular scaling with the minimizer $c^\sharp$ provided by Algorithm \ref{algo:ksearch}. Therefore,
    \citep[Theorem 3]{ghosh2021max} also implies \eqref{eq:ghosh1st}. 
\end{remark}
Finally, the performance of the entire initialization scheme using Algorithms \ref{algo:sPCA} and \ref{algo:ksearch} in succession is presented in the following theorem.
\begin{theorem}\label{theo:init}
    Instate the assumptions of Theorems \ref{Theo:Projection} and \ref{theo:ksearch}.  
    Let $\epsilon \in (0,1)$. Then it holds with probability at least $1- n^{-11}$ that applying Algorithms \ref{algo:sPCA} and \ref{algo:ksearch} in succession yields an initial estimate satisfying 
    \begin{equation} \label{eq:init_estimate}
        \min_{\substack{ \pi \in \mathrm{perm}([k])}}  \left( \sum_{j=1}^k\norm{\mb  \theta^\star_j - \mb  \theta^0_{\pi(j)} }^2 \right)^{1/2} \leq \epsilon
    \end{equation}
    provided that 
    \begin{align} \label{eq:ninit}
        n &\geq C s^2 \epsilon^{-2}\left(\frac{\varsigma^2\vee\sigma_z^2}{\delta_{\mathrm{gap}}} \vee \frac{\varsigma^4\vee\sigma_z^4}{\delta^2_{\mathrm{gap}}}\right) \left(\frac{R_{\max} k^2}{\Delta^2\pi_{\min}^2}\vee \frac{1}{\min\limits_{j\in[s]}[\mb  P]_{jj}}\right)^2 \log^4(nd) \log \left( 1+\frac{1}{r}\right)
    \end{align}
    and
    \begin{gather}\label{eq:ghosh_r_req_init}
         r^2\leq \frac{\Delta^4 \pi_{\min}^5 \epsilon^{2}}{32R_{\max}^2k^6\log(k\pi_{\min}^{-1})}.
    \end{gather}
\end{theorem}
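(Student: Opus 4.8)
The plan is to run Algorithms~\ref{algo:sPCA} and \ref{algo:ksearch} in succession and chain their two error guarantees, Theorems~\ref{Theo:Projection} and \ref{theo:ksearch}, then choose $n$ large enough and $r$ small enough to push the resulting composite bound below $\epsilon$. The observation that makes the two results compose is that Theorem~\ref{Theo:Projection} delivers \emph{exact} support recovery $\mathcal{S} = \mathcal{S}^\star$, which is precisely the hypothesis Algorithm~\ref{algo:ksearch} needs in order for Theorem~\ref{theo:ksearch} to apply with $d$ replaced by $s$. Concretely, I would first invoke Theorem~\ref{Theo:Projection}: under \eqref{eq:nprojectionerror}, with probability $1-n^{-11}$ we obtain $\mathcal{S} = \mathcal{S}^\star$ together with the subspace error bound \eqref{eq:projectionerror}. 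Working on that event, I would then invoke Theorem~\ref{theo:ksearch}, whose remaining hypotheses \eqref{eq:nghosh}, \eqref{eq:ghosh_proj_req}, \eqref{eq:ghosh_r_req} I will verify below, to obtain \eqref{eq:ghosh1st}. The right-hand side of \eqref{eq:ghosh1st} splits, up to the common prefactor $k^4/\pi_{\min}^3$, into a covering term $R_{\max}^2 r^2$, a subspace term $R_{\max}^2 \|\hat{\mb V}\hat{\mb V}^\T - [\mb P]_{\mathcal{S}^\star,\mathcal{S}^\star}\|_\mathrm{F}^2$, and a noise term $\sigma_z^2 \log(1 + 1/r)/n$. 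The strategy is to force each of the three to be at most $\epsilon^2/3$, so the square root of their sum is at most $\epsilon$, which is exactly \eqref{eq:init_estimate}.

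The covering term is handled directly by the radius condition \eqref{eq:ghosh_r_req_init}: substituting it cancels the $R_{\max}^2$ and leaves $\Delta^4 \pi_{\min}^2 \epsilon^2 / (32 k^2 \log(k\pi_{\min}^{-1}))$, which is at most $\epsilon^2/3$ since $\Delta,\pi_{\min}\le 1$. For the subspace term I would insert \eqref{eq:projectionerror}, bounding its rate by the single crude expression $\log^2(nd)/\sqrt n \ge (\log^2(nd)/n)\vee(\log(nd)/\sqrt n)$; squaring this contributes the factor $\log^4(nd)$ that appears in \eqref{eq:ninit}, and requiring $k^4 R_{\max}^2 \pi_{\min}^{-3}\|\hat{\mb V}\hat{\mb V}^\T - [\mb P]_{\mathcal{S}^\star,\mathcal{S}^\star}\|_\mathrm{F}^2 \le \epsilon^2/3$ yields a lower bound $n \gtrsim s^2\epsilon^{-2}(\varsigma^4\vee\sigma_z^4)\delta_{\mathrm{gap}}^{-2} k^4 R_{\max}^2 \pi_{\min}^{-3}\log^4(nd)$. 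For the noise term, demanding $k^4\pi_{\min}^{-3}\sigma_z^2\log(1+1/r)/n \le \epsilon^2/3$ produces a second lower bound carrying the $\log(1+1/r)$ factor.

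It then remains to reconcile every requirement on $n$ — the two just derived, the projection-accuracy condition \eqref{eq:ghosh_proj_req} and the support-recovery condition \eqref{eq:support_req}, and \eqref{eq:nprojectionerror}, \eqref{eq:nghosh} — and check that the single condition \eqref{eq:ninit} dominates them all. Here the $\vee$ inside the $\varsigma$--$\sigma_z$ coefficient of \eqref{eq:ninit} comes from the two branches of the projection rate, while the $\max$ over $R_{\max}k^2/(\Delta^2\pi_{\min}^2)$ and $1/\min_j [\mb P]_{jj}$ is exactly what simultaneously satisfies the accuracy requirement \eqref{eq:ghosh_proj_req} (governed by $R_{\max},k,\Delta,\pi_{\min}$) and the support-recovery requirement \eqref{eq:support_req} (governed by $\min_j [\mb P]_{jj}$). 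A union bound over the two failure events, each of probability $n^{-11}$, gives probability at least $1 - 2n^{-11}$, absorbed into the stated $1-n^{-11}$ by adjusting constants.

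I expect the main obstacle to be precisely this bookkeeping: verifying that the compact pair \eqref{eq:ninit}--\eqref{eq:ghosh_r_req_init} is genuinely stronger than the union of the several heterogeneous conditions inherited from Theorems~\ref{Theo:Projection} and \ref{theo:ksearch} — in particular that \eqref{eq:ghosh_r_req_init} implies \eqref{eq:ghosh_r_req}, which holds once $\epsilon^2 \le 1/2$ and otherwise by enlarging the constant — and handling the self-referential $\log(nd)$ appearing on both sides of \eqref{eq:ninit} via the standard fact that $n \gtrsim a\log a$ implies $n \gtrsim a\log n$ for the relevant $a$. None of these steps is deep, but matching the exact exponents of $k$, $\pi_{\min}$, $\Delta$, and the $\log$ powers requires care.
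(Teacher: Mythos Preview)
Your proposal is correct and follows essentially the same route as the paper: invoke Theorem~\ref{Theo:Projection} to obtain exact support recovery and the subspace error bound, feed these into Theorem~\ref{theo:ksearch}, and then verify that the single pair \eqref{eq:ninit}--\eqref{eq:ghosh_r_req_init} dominates all the inherited hypotheses and forces the right-hand side of \eqref{eq:ghosh1st} below $\epsilon^2$. The paper's proof is terser---it does not spell out the three-term split or the union-bound bookkeeping---but the logical structure is identical.
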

\begin{proof}
The sample complexity condition \eqref{eq:ninit} implies \eqref{eq:nprojectionerror} and invokes Theorem \ref{Theo:Projection} to satisfy $\mathcal{S} = \mathcal{S}^\star$ and \eqref{eq:ghosh_proj_req}. 
Furthermore, \eqref{eq:ninit} and \eqref{eq:ghosh_r_req_init} respectively imply \eqref{eq:nghosh} and \eqref{eq:ghosh_r_req}. 
Therefore, Theorem~\ref{theo:ksearch} is invoked to provide \eqref{eq:ghosh1st}. 
Finally, \eqref{eq:ninit} implies the upper bound in \eqref{eq:ghosh1st} is less than $\epsilon$, which is the assertion in \eqref{eq:init_estimate}
\end{proof}

Theorem \ref{theo:init} reduces the sample complexity of the spectral initialization for max-affine regression by the method in \citep{ghosh2021max} when the weight vectors satisfy the joint $s$-sparsity structure. 
Specifically, the linear dependence on the ambient dimension $d$ drops to polynomial dependence on $s$, significantly reducing the sample complexity when $s\ll d$. 
The order of the polynomial in $s$ depends on the geometry of the model parameters $\{\mb  \theta^\star_j\}_{j=1}^k$. 
For example, if $\mb  1 \in \mathbb R^s$ is in the span of $\{\mb  \theta^\star_j\}_{j=1}^k$, then it can be shown that $\min_{j\in[s]}[\mb  P]_{jj}\geq 1/s$, hence the sample complexity in Theorem \ref{theo:init} becomes $\mathcal{O}(s^4)$.

{\color{black}
\section{Learning Sparse Generalized Polynomials Via Max-Affine Regression} \label{sec:poly}
In this section, we present theoretical guarantees for the dequantization error by Real Maslov Dequantization (RMD) defined in \eqref{eq:RMD}, and the non-asymptotic convergence guarantee for Sp-GD under the bounded additive noise model. 
For convenience, we restate the RMD transformation here as
\begin{equation}
    \label{eq:RMD2}
    y = \mathrm{Re}\{\varsigma \log w\}, \quad x_l = \varsigma \log u_l, \quad \forall l \in [d].
\end{equation}
In what follows, collect the exponents of the generalized polynomial model defined in \eqref{eq:poly_into} as $\mb \alpha^\star_j = [\alpha^\star_{j,1}; \ldots;\alpha^\star_{j,d}]$  for every $j\in [k]$. 
The dequantization error by RMD with temperature parameter $\varsigma>0$ is written as
\begin{equation}
    \label{eq:quantmax}
    z_\varsigma:= \max_{j \in [k]} \langle [ x_1; \ldots;x_d;1], [\mb \alpha^\star_{j};\log|c^\star_j|]\rangle - y.
\end{equation}
The following theorem provides a uniform bound for the dequantization error by RMD.
\begin{theorem} \label{Theo:maslov_approx}
    Given the generalized sparse polynomial relation $w =g (u_1, \ldots, u_d)$ defined in \eqref{eq:poly_into}, let $y = \mathrm{Re}\{\varsigma \log w\}$ and $\mb x = \varsigma\log \mb u$ be the transformed variables with $\varsigma> 0$. If $\mb x$ satisfies Assumption \ref{assum:anti} and $\varsigma>0$ is sufficiently small, then it holds with probability at least $1- (\gamma \vartheta)^\zeta$ that 
    \begin{equation}
        \label{eq:quant_Error_z}
        |z_\varsigma|\leq 2\varsigma (k-1) \exp\left( \frac{-\sqrt{\vartheta}\Delta}{\varsigma}\right),
    \end{equation}
    where $\Delta>0$ is the minimum separation parameter defined in \eqref{eq:defkappa}. 
\end{theorem}
The proof of this theorem is deferred to Appendix \ref{maslovproofapprox}. Theorem \ref{Theo:maslov_approx} shows that RMD has a dequantization error decaying exponentially in $\Delta/\varsigma$. 
This implies that for the dequantization error to be bounded as $|z_\varsigma|\leq \epsilon$, then we need to select $1/\varsigma \geq \mathcal{O}(\log\frac{k}{\epsilon})$.  
Therefore, via RMD, we can learn the real exponents $\alpha^\star_{j,l}$'s through sparse max-affine regression. By plugging in the exponent estimates into \eqref{eq:poly_into}, the coefficients $\{c_j^\star\}_{j=1}^k$ can be easily approximated by linear least squares. We note that the dequantization error bound in Theorem \ref{Theo:maslov_approx} has not been shown even in the original Maslov dequantization for the simpler posynomial model by \citep{maragos2021tropical}.

Building on the non-asymptotic theory in Theorem \ref{THM:MAIN}, we can show the analysis of estimating the exponents in \eqref{eq:poly_into}. This requires modifying the proof of Theorem \ref{THM:MAIN} to handle the bounded noise model of $z_\varsigma$ that is also dependent on the covariate $\mb x$.

\begin{corollary}
\label{theo:bounded_SPGD}
Suppose that $\{\mb u_i\}_{i=1}^n$ are independent copies of $\mb u$ is distributed such that $\mb x = \varsigma \log \mb u$ satisfies Assumptions \ref{assum:subg}--\ref{assum:anti}. 
Let the targets $\{w_i\}_{i=1}^n$ be generated according to \eqref{eq:poly_into}. 
Then there exists a sufficiently small $\varsigma>0$ for which the following statement holds with probability at least $1-\delta- (\gamma \vartheta)^\varsigma$. The final estimate generated by Sp-GD satisfies 
\begin{align} \label{eq:Spgd_bounded_tau}
\sum_{j=1}^k\left[\norm{\mb \alpha_{j}^t - \mb \alpha_{j}^\star }_2^2 + \log^2(c_j^t/c_j^\star) \right]\leq 
         Ck^3\pi_{\min}^{-2}\varsigma^2\exp\left(\frac{-2 \sqrt{\vartheta}\Delta}{\varsigma} \right).   
\end{align} 

\end{corollary}  
The proof of this theorem is deferred to Appendix \ref{app:maslov_spgd}.
\begin{remark}
    A few remarks on the statement of Corollary \ref{theo:bounded_SPGD}.
    \begin{itemize}
        \item Notice that for $\mb x$ to satisfy Assumption \ref{assum:subg}, i.e. $\mb x$ is subGaussian, $\mb u$ needs to be log subGaussian. For example, if $\mb x$ is normal, $\mb u$ needs to follow the well-known log-normal distribution.
        \item For the sake of clarity, the result of Corollary \ref{theo:bounded_SPGD} is stated where the distortion is only due to RMD. If we assume an additive sub-Gaussian noise model after the RMD transformation, then we can expect an additional error term \eqref{eq:Spgd_bounded_tau} scaling as $\mathcal{O}(\sigma_z^2 s \log(d)n^{-1})$ similar to the statement of Theorem \ref{THM:MAIN}. On the other hand, assuming additive sub-Gaussian noise in the generalized polynomial domain requires further investigation. 
    \end{itemize}
\end{remark}
In summary, Theorem \ref{Theo:maslov_approx} and Corollary \ref{theo:bounded_SPGD} prove that generalized sparse polynomial regression can be done via sparse max-affine regression via the Real \textit{Maslov dequantization}. We note that such results are the first of their kind. In other words, we are the first to show the approximation (quantization) error for a non-zero choice of $\varsigma>0$ by Theorem \ref{Theo:maslov_approx} and a convergence guarantee by Theorem \ref{theo:bounded_SPGD}. The original work by \citep{maragos2021tropical} only showed the connection between max-affine functions and \eqref{eq:poly_into} in the special case of the posynomial model. Still, this connection is established only at the limit where $\varsigma\to 0$. In other words, the approximation error for a practical choice of $\varsigma$ was not previously shown.
}
\section{Numerical Results} \label{sec:all_num}
\subsection{Phase Transitions of Sp-GD} \label{Sec:numspgd}
This section presents numerical results of the Sp-GD algorithm that corroborate the theoretical guarantees presented in Section~\ref{Section:Theo}. 
In the simulation, we initialized Sp-GD in two steps. First, we estimate the parameter subspace by Algorithm \ref{algo:sPCA}. Second, we apply a practical alternative to Algorithm~\ref{algo:ksearch} which randomly samples from the estimated subspace and chooses the best one that produces the smallest fit error after $10$ iterations of Sp-GD.  
We adopt this heuristic instead of Algorithm~\ref{algo:ksearch} for the following reasons. 
The first step of Algorithm~\ref{algo:ksearch} creates an $r$-covering of the unit $\ell_2$-ball $B^{k+1}$, where its cardinality scales as $\mathcal{O}(r^{-k})$ \citep[Corollary 4.2.12]{vershynin2018high}. 
Therefore, the exhaustive search in the second step over all elements in the $r$-covering becomes impractical as the parameter $r$ decreases. 
However, the accuracy of Algorithm~\ref{algo:ksearch} crucially depends on $r$.  
The estimation performance is evaluated via the median of the relative error between the true model coefficients $\mb {\theta}^\star \triangleq (\mb {\theta_j}^\star)_{j=1}^{k}$ and the estimated coefficients $\hat{\mb {\theta}} \triangleq (\hat{\mb {\theta}}_j)_{j=1}^{k}$ over $50$ Monte Carlo simulations.
The relative error is defined via the optimal permutation of the affine model indices as 
\begin{equation*}
\label{eq:def_relative_error}
\mathtt{err}(\hat{\mb {\theta}})
\triangleq \min_{\pi\in\mathrm{Perm}([k])}\log_{10}\left({\sum_{j=1}^{k}\|\hat{\mb  \theta}_{\pi(j)} - \mb  \theta_j^\star\|_2^2/\sum_{j=1}^{k}\|\mb  \theta_{j}^\star\|_2^2}\right),
\end{equation*} 
where $\mathrm{Perm}([k])$ denotes the set of all permutations on $[k]$. 
\begin{figure}
\centering
\begin{tabular}{c}
\includegraphics[scale=1]{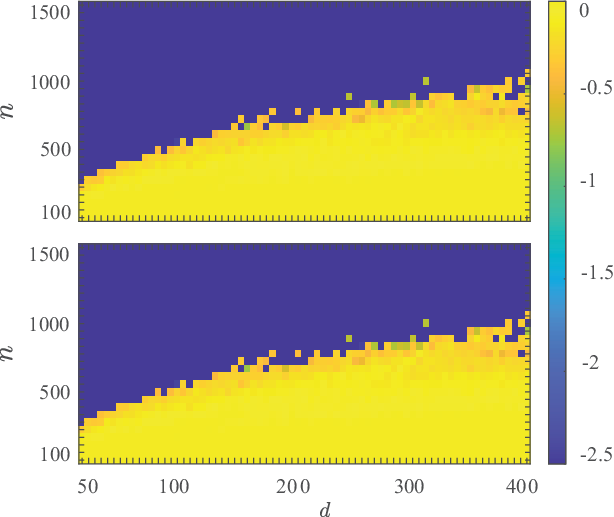} \\
\end{tabular}
\caption{Median of $\mathtt{err}(\hat{\mb {\theta}})$ for different ($n$,$d$) pairs using 50 Monte Carlo iterations for $k =3$ and $s=25$ with Gaussian (top) and Uniform (bottom) covariate distributions in the noiseless case.
}
\label{fig:P_vs_N}
\end{figure}

Fig.~\ref{fig:P_vs_N} shows the empirical phase transition by Sp-GD per the total number of covariates $d$ when the number of active covariates is fixed to $s=25$ and the model order is fixed to $k=3$ in the noiseless case. 
We observe the empirical phase transition for Gaussian and uniform distributions both of which satisfy the assumptions of Theorem \ref{THM:MAIN}. 
The phase transition occurs when $n$ scales as a logarithmic function of $d$, corroborating the sample complexity in Theorem~\ref{THM:MAIN}. 
Next, Fig.~\ref{fig:S_vs_N_Noise} shows the empirical phase transition by Sp-GD per the number of active covariates $s$ when the total number of covariates and model order are fixed to $d=200$ and $k=3$, respectively. This figure corroborates that the sample complexity required to invoke the performance guarantee for Sp-GD scales sub-linearly in $s$ as $\mathcal{O}(s\log (d/s))$. 
We observe this scaling law when $s/k \geq 10$. 
On the other hand, when $s/k < 10$ the transition boundary increases as $s$ decreases. 
This is based on the observation of the left edges of the plots in Fig.~\ref{fig:S_vs_N_Noise}. 
The ground-truth parameters are randomly generated as independent and identically distributed with respect to the standard Gaussian distribution. 
In particular, the weight vectors are almost pairwise orthogonal when $s/k$ is sufficiently large (e.g. $s/k \geq 10$), which makes $\pi_{\min} \approx 1/k$. 
However, the correlations among the weight vectors increase as $s/k$ decreases, hence $\pi_{\min}$ decreases. 
This incurs the increase in the sample complexity in Theorem \ref{THM:MAIN}, which is aligned with the empirical observation on the phase transition boundary on the success regime, that is, $\mathtt{err}(\hat{\mb {\theta}}) \leq -2.5$. 
Finally, Fig.~\ref{fig:sig_vs_n} shows the empirical phase transition by Sp-GD per the noise variance $\sigma_z^2$ when the total number of covariates, the number of active covariates, and the model order are fixed to $d=200$, $s=50$ and $k=3$, respectively.
The empirical phase transition boundary is proportional to the noise variance once it exceeds a certain threshold. This corresponds to the multiplicative factor $\max(1,\sigma_z^2)$ in \eqref{eq:cond:lem:lwb_gradient}.

\begin{figure}
        \centering
        \includegraphics[scale=0.8]{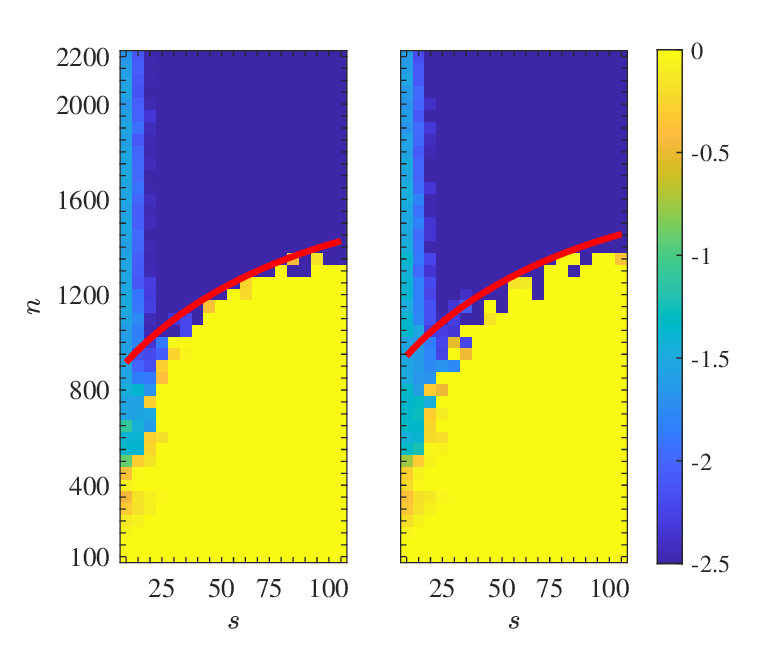} 
\caption{ Median of $ \mathtt{err}(\hat{\mb {\theta}})$ for different ($n$,$s$) pairs using 50 Monte Carlo iterations for $k=3$ and $d=200$ with Gaussian (left) and Uniform (right) covariate distributions. The red curves are fitted with respect to $s\log (d/s)$ at the phase transition boundary for both figures.
}
\label{fig:S_vs_N_Noise}
\end{figure}

\begin{figure}
        \centering
        \includegraphics[scale=0.8]{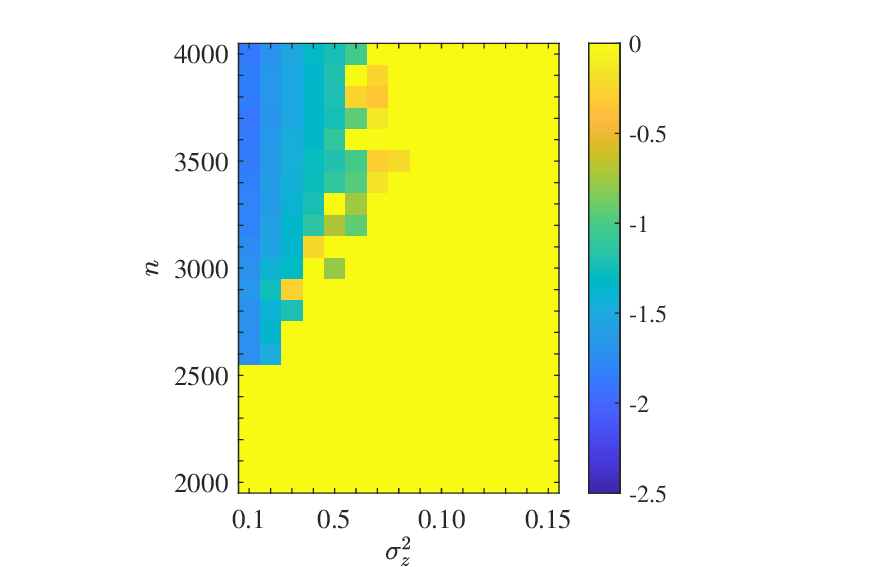} 
\caption{ Median of $ \mathtt{err}(\hat{\mb {\theta}})$ for different ($n$,$\sigma_z^2$) pairs using 50 Monte Carlo iterations for $s=50$, $d=200$ and $k=3$ with Gaussian covariates and local initial estimate.
}
\label{fig:sig_vs_n}
\end{figure}

\subsection{Subspace Estimation and Random Search for Initialization}\label{Sec:numinit}
This section delves into the detailed empirical analysis of the initialization method employed in the previous section. 
First, Fig. \ref{fig:PCAvsspca} demonstrates the gain of Algorithm \ref{algo:sPCA} with SPCA over the analogous spectral method with PCA. 
The regularization parameter $\lambda >0$ required by Algorithm \ref{algo:sPCA} is set by a parameter sweep over a sampling grid. 
Alternatively, this parameter can be tuned through cross-validation. 
\begin{figure}
    \centering
    \includegraphics[width=0.6\linewidth]{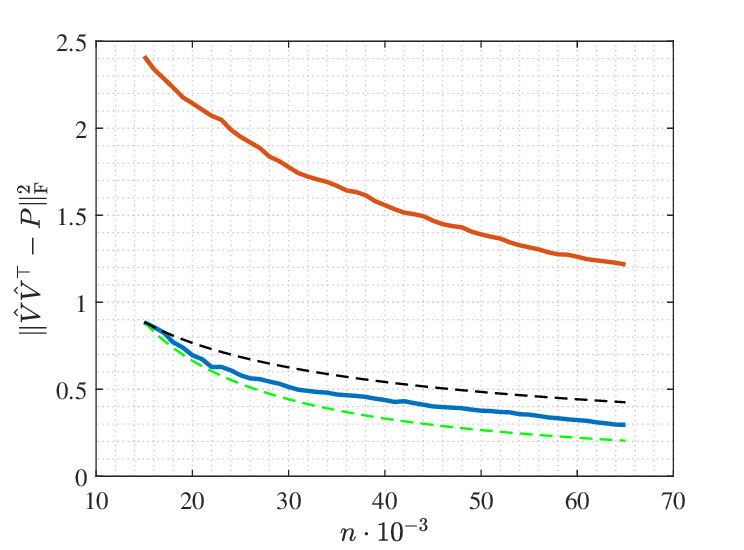}
    \caption{Projection error difference using PCA (red), and Algorithm \ref{algo:sPCA} (blue), and dashed guidelines showing $1/\sqrt{n}$ decay (black) and $1/n$ decay (green) with $s=20$, $d=200$, $k=3$, $\sigma_z=0.1$ and 50 Monte Carlo iterations.}
    \label{fig:PCAvsspca}
\end{figure}
The subspace estimation error using Algorithm \ref{algo:sPCA} shown in blue is significantly less than that by the PCA-based spectral method shown in red. 
Furthermore, the estimation error by Algorithm~\ref{algo:sPCA} decays at a rate between $1/\sqrt{n}$ and $1/n$. These observations are consistent with the error bound in Theorem \ref{Theo:Projection}.

Next, we investigate the empirical performance of the repeated random initialization that replaces the exhaustive discrete search in Algorithm \ref{algo:ksearch} by random sampling followed by Sp-GD $10$ iterations. 
In particular, we observe the estimation accuracy as the function in the number of random initializations $M$. 
Fig. \ref{fig:PCAvsspca_all} compares the estimation error by the repeated random initialization when the parameter subspace is estimated by the PCA-based spectral initialization \citep{ghosh2021max} (red) and by Algorithm~\ref{algo:sPCA} (blue). 
For both of the two subspace estimation methods, the initialization error monotonically decreases as the number of trials $M$ increases. However, the initialization performance by Algorithm \ref{algo:sPCA} is more accurate since it provides better subspace estimation performance.
\begin{figure}
    \centering
    \includegraphics[scale=0.7]{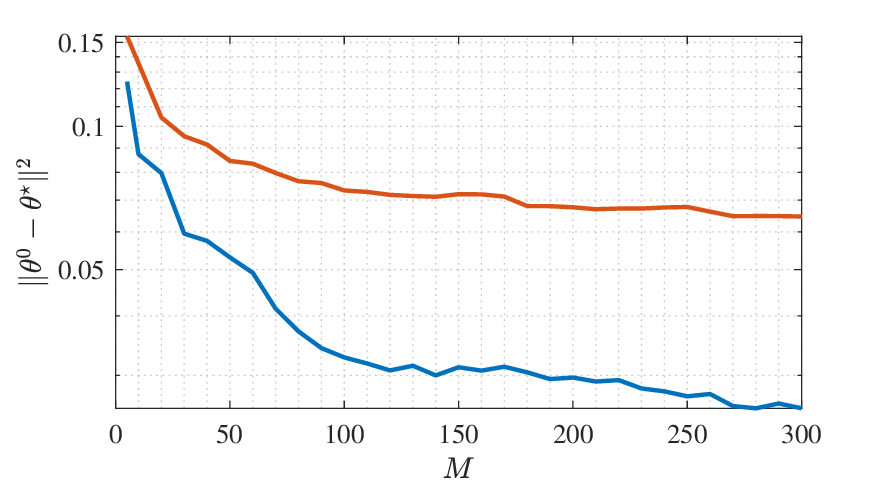}
    \caption{Parameter estimation error using PCA (red) and Algorithm \ref{algo:sPCA} (blue) when followed by $M$ random samples with $s=20$, $d=200$, $k=3$, $\sigma_z=0.1$ averaged over 50 Monte Carlo iterations.}
    \label{fig:PCAvsspca_all}
\end{figure}

\section{Discussion} \label{Conclusion}
We consider variable selection for a class of nonlinear regression models given by the maximum of $k$ affine models $\mb  x \mapsto \max_{j \in [k]} \langle \mb  a_j^\star, \mb  x \rangle + b_j^\star$ for $j = 1,\dots,k$ where $\{\mb  a_j^\star\}_{j=1}^k$ and $\{b_j^\star\}_{j=1}^k$ denote the ground-truth weight vectors and intercepts. 
The weight vectors $\{\mb  a_j^\star\}_{j=1}^k$ satisfy the joint $s$-sparse structure as we assume that only $s$ out of $d$ covariates contribute to explaining the response variable. 
This paper proposes a variant of the projected gradient algorithm, Sp-GD, to estimate the sparse model parameters.  
We provide non-asymptotic local convergence guarantees for Sp-GD under independent sub-Gaussian noise when the covariates follow a sub-Gaussian distribution satisfying the anti-concentration property.
Under these assumptions, when the ground-truth model order and parameters are fixed, a suitably initialized Sp-GD converges linearly to an $\epsilon$-accurate parameter estimate given $\mathcal{O}(\max(\epsilon^{-2}\sigma_z^2,1)s\log (d/s))$. In particular, when the observations are noise-free ($\sigma_z^2 =0$), Sp-GD guarantees exact parameter recovery. Since minimizing the squared loss of sparse max-affine models is non-convex, starting Sp-GD within the basin of attraction is crucial for its convergence to the desired estimate. 
For this purpose, we propose a modification of the spectral method by \citep{ghosh2021max} which estimates the span of the max-affine weight vectors $\{\mb  a_j^\star\}_{j=1}^k$ so that the subspace estimation utilizes the jointly sparse structure in the weight vectors via sparse principal component analysis. 
Combined with the $r$-covering search over the estimated subspace, the initialization scheme provides an $\epsilon$-accurate estimate when $r=\mathcal{O}(\epsilon)$ given $\mathcal{O}(\epsilon^{-2}\max (\sigma_z^4,\sigma_z^2,1)s^2\log^4d)$ observations when the ground-truth model parameters are fixed, and the covariates and noise follow Gaussian distributions. 
The dominating factor of the sample complexity is $s^2$, which is significantly smaller than $d$ in the non-sparse case. 

One noteworthy limitation of the initialization is the $r$-covering search inherited from the previous work on non-sparse max-affine regression \citep{ghosh2021max}. 
The cost of constructing the $r$-covering and the exhaustive search over it increases exponentially in the subspace dimension. 
Therefore, the $r$-covering search is not practical for small separation parameter $r$ which is necessary for accurate estimates. 
However, this is the only known algorithm with theoretical guarantees for max-affine parameter initialization. Therefore, it would be a fruitful future direction to develop a practical initialization scheme that avoids the $r$-covering search and provides theoretical performance guarantees. 
Alternatively, it would also be intriguing to explore a potential analysis of Sp-GD from random initialization that will extend the known theoretical results on single-index models 
\citep{tan2019phase,chandrasekher2022alternating}.

\clearpage

\bibliography{ref}
\bibliographystyle{plainnat}
\clearpage

\begin{appendices}
\section{Proof of Theorem~\ref{THM:MAIN}}\label{ProofSketch}

The proof is obtained by showing that each update in Sp-GD monotonically decreases the distance to the ground truth $\mb \theta^\star$ by a factor $\tau \in [0,1)$ up to an additive distortion and remains in the neighborhood of $\mb \theta^\star$, i.e.
\begin{subequations} \label{eq:recursion}
\begin{gather} 
\left\|\mb \theta^{t+1}-\mb \theta^\star\right\|^2_2 \leq \tau \left\|\mb \theta^t-\mb \theta^\star\right\|^2_2 +C_1\sigma^2_z\left(\frac{sk\log\left(n/s\right)+s\log\left({d}/{s}\right)+\log\left({1}/{\delta}\right)}{n}\right), \label{eq:recursion_linear} \\
    \mb \theta^{t+1} \in \mathcal{N}(\mb \theta^\star,\sqrt{2}\Delta\rho),\label{eq:recursion_nghb}
\end{gather}
\end{subequations} 
hold for all $ t\in\mathbb{N} \cup \{0\}$. 
{\color{black}
Here $\delta$ refers to the error probability, i.e. these statements hold with probability at least $1-\delta$.} 
We prove this statement by induction. Let $t$ be arbitrarily fixed. 
We assume that \eqref{eq:recursion} holds for all previous iterates. 
To show that \eqref{eq:recursion_linear} also holds for the current iterate, we introduce the following notation. 
Let $\mathcal{S}^\star$ and $\mathcal{S}^t$ denote the joint support of $\{\mb \theta_j^\star\}_{j=1}^k$ and $\{\mb \theta_j^t\}_{j=1}^k$, respectively.
The union of $\mathcal{S}^\star$ and $\mathcal{S}^t$ is denoted by $\mathcal{U}^t$. 
We use $\mb{ \Pi}_\mathcal{U} : \mathbb{R}^{d} \rightarrow \mathbb{R}^{d}$ to denote the orthogonal projection onto the subspace spanned by the standard basis vectors $\{\mb e_j\}_{j \in \mathcal{U}}$ for $\mathcal{U} \subset [d]$. 
Then the augmented operator $\mb{ \tilde\Pi}_\mathcal{U} : \mathbb{R}^{d+1} \rightarrow \mathbb{R}^{d+1}$ is defined as
\begin{equation}\label{eq:Proj_oper}
     \mb{ \tilde\Pi}_{\mathcal{U}}=
  \left[ {\begin{array}{cc}
   \mb{ \Pi}_{\mathcal{U}} & \mb 0_{d \times 1} \\
   \mb 0_{1 \times d} & 1 \\
  \end{array} } \right].
\end{equation}
The step size used in Sp-GD stated in Algorithm \ref{algo:SparseGD} is stated as
\begin{equation} \label{eq:def_mu}
    \mu_j(\mb \theta) = \left( \frac{1}{n} \sum_{i=1}^n \bbone_{\left\{\mb x_i\in \mathcal{C}_j(\mb \theta)\right\}}\right)^{-1}, \quad \forall j \in [k].
\end{equation}
For notational simplicity, let $\mu_j^t \triangleq\mu_j(\mb \theta^t)$, $\mathcal{C}_j^t\triangleq\mathcal{C}_j(\mb \theta^t)$, and $\mathcal{C}_j^\star\triangleq\mathcal{C}_j(\mb \theta^\star)$ for all $j \in [k]$.
Then the left-hand side of \eqref{eq:recursion} is upper-bounded by
\begin{align} 
\label{Induction}
\left\|\mb\theta^{t+1}-\mb \theta^\star \right\|_2
&= \left\|
\left(\mb I_k \otimes \mb{\tilde \Pi}_{\mathcal{U}^{t+1}}\right)
\left( \mb\theta^{t+1}-\mb \theta^\star \right) \right\|_2 \nonumber \\
&\leq \left\|\left(\mb I_k \otimes \mb{\tilde \Pi}_{\mathcal{U}^{t+1}}\right) \left( \mb \theta^{t+1} - \mb \alpha^{t+1} \right)\right\|_2 + \left\|\left(\mb I_k \otimes \mb{\tilde \Pi}_{\mathcal{U}^{t+1}}\right)\left(\mb \alpha^{t+1} - \mb \theta^\star \right) \right\|_2 \nonumber \\
& =\left( \sum_{j=1}^k \left\| \mb{\tilde \Pi}_{\mathcal{U}^{t+1}} \left( \mb \theta^{t+1}_{j}  - \mb \alpha^{t+1}_{j} \right) \right\|_2^2 \right)^{1/2}  + \left( \sum_{j=1}^k \left\| \mb{\tilde \Pi}_{\mathcal{U}^{t+1}} \left( \mb \alpha^{t+1}_{j} - \mb \theta^\star_{j} \right) \right\|_2^2 \right)^{1/2}  \nonumber \\
&\leq 2 \left( \sum_{j=1}^k \left\| \mb{\tilde \Pi}_{\mathcal{U}^{t+1}} \left( \mb \alpha^{t+1}_{j} - \mb \theta^\star_{j} \right) \right\|_2^2 \right)^{1/2} \nonumber \\
&= 2 \left( \sum_{j=1}^k \left\| \mb{\tilde \Pi}_{\mathcal{U}^{t+1}} \left( \mb \theta^t_{j} - \mu_j^t \nabla_{\mb\theta_j} \ell(\mb \theta^t) - \mb \theta^\star_{j} \right) \right\|_2^2 \right)^{1/2},
\end{align}
where the second inequality holds since 
\[
\left\| \mb{\tilde \Pi}_{\mathcal{U}^{t+1}}\left(\mb \theta^{t+1}_j- \mb \alpha_j^{t+1}\right)\right\|_2 \leq \left\| \mb{\tilde \Pi}_{\mathcal{U}^{t+1}}\left(\mb \alpha_j^{t+1}- \mb \theta_j^\star\right)\right\|_2, \quad \forall j \in [k]
\]
which follows from the fact that $ \mb{\tilde \Pi}_{\mathcal{U}^{t+1}} \mb \theta_j^{t+1} = \mb \theta_j^{t+1}$ coincides with the projection of $\mb{\tilde \Pi}_{\mathcal{U}^{t+1}}\mb \alpha_j^{t+1}$ onto $\Gamma_s$, and $ \mb{\tilde \Pi}_{\mathcal{U}^{t+1}} \mb \theta_j^\star = \mb \theta_j^\star$ belongs to $\Gamma_s$ for all $j \in [k]$.  

Let $j \in [k]$ be arbitrarily fixed. We further proceed with the additional shorthand notations: $\mb{h}_j^t \triangleq \mb{\theta}_j^t -\mb{\theta_j}^\star$, $\mb v_{jj'}^t \triangleq \mb \theta_j^t -\mb \theta_{j'}^t$, and $\mb v^\star_{jj'} \triangleq \mb \theta^\star_j -\mb \theta^\star_{j'}$ for all $j'\neq j \in [k]$. 
Then, by the definition of $\mathcal{S}^\star$, we have 
\begin{equation}
\mb{\tilde \Pi}_{\mathcal{S}^\star}\mb v^\star_{jj'} 
= \mb{\tilde \Pi}_{\mathcal{S}^\star} \mb \theta^\star_j - \mb{\tilde \Pi}_{\mathcal{S}^\star} \mb \theta^\star_{j'} 
= \mb \theta^\star_j -\mb \theta^\star_{j'} = \mb v^\star_{jj'}
\end{equation}
and
\begin{equation}\label{eq:h_decomp}
\mb{h}_j^t 
= \mb{\tilde \Pi}_{\mathcal{U}^{t}}\mb{h}_{j}^t 
= \mb{\tilde \Pi}_{\mathcal{U}^{t} \cap \mathcal{U}^{t+1}}\mb{h}_{j}^t + \mb{\tilde \Pi}_{\mathcal{U}^{t}\backslash\mathcal{U}^{t+1}}\mb{h}_{j}^t
= \mb{\tilde \Pi}_{\mathcal{U}^{t+1}}\mb{h}_{j}^t + \mb{\tilde \Pi}_{\mathcal{U}^{t}\backslash\mathcal{U}^{t+1}}\mb{h}_{j}^t.
\end{equation}
Recall that the partial gradient in the right-hand side of \eqref{Induction} is written as 
\begin{equation}\label{eq:par_first}
\mb{\tilde \Pi}_{\mathcal{U}^{t+1}}\nabla_{\mb \theta_j}\ell(\mb \theta^t) 
=\frac{1}{n} \sum_{i=1}^n \bbone_{\{\mb x_i\in \mathcal{C}^t_j\}} \left(\langle\mb \xi_i,\mb \theta_j^t \rangle-\max_{j\in[k]} \langle\mb \xi_i,\mb \theta_j^\star\rangle - z_i\right)\mb{\tilde \Pi}_{\mathcal{U}^{t+1}}\mb \xi_{i}.
\end{equation}
We can obtain the following decomposition
\begin{align}\label{eq:decomp_det}
& \bbone_{\{\mb x_i\in \mathcal{C}^t_j\}} \left(\langle\mb \xi_i,\mb \theta_j^t \rangle-\max_{j\in[k]} \langle\mb \xi_i,\mb \theta_j^\star\rangle \right) \nonumber\\
&= \sum_{j'=1}^k\bbone_{\{\mb x_i\in \mathcal{C}^t_j \cap \mathcal{C}^\star_{j'}\}} \left(\langle\mb \xi_i,\mb \theta_j^t - \mb \theta_j^\star + \mb \theta_j^\star- \mb \theta_{j'}^\star\rangle\right)\nonumber\\
&=
\bbone_{\{\mb x_i\in \mathcal{C}^t_j\}} \langle\mb \xi_i,\mb h_j^t\rangle + \sum_{j'\neq j}\bbone_{\{\mb x_i\in \mathcal{C}^t_j \cap \mathcal{C}^\star_{j'}\}} \langle\mb \xi_i,\mb v_{jj'}^\star\rangle\nonumber \\
&=
\bbone_{\{\mb x_i\in \mathcal{C}^t_j\}} \langle\mb \xi_i,\mb{\tilde \Pi}_{\mathcal{U}^{t+1}}\mb{h}_{j}^t\rangle 
+ \bbone_{\{\mb x_i\in \mathcal{C}^t_j\}} \langle\mb \xi_i,\mb{\tilde \Pi}_{\mathcal{U}^{t}\backslash\mathcal{U}^{t+1}}\mb h_j^t\rangle 
+ \sum_{j'\neq j}\bbone_{\{\mb x_i\in \mathcal{C}^t_j \cap \mathcal{C}^\star_{j'}\}} \langle\mb \xi_i,\mb v_{jj'}^\star\rangle
\end{align}
where the first equality follows from $\{\mathcal{C}^\star_{j'}\}_{j'=1}^k$ being a partition of $\mathbb{R}^d$, the second inequality follows from the definitions of $\mb h_j^t$ and $\mb v_{jj'}^\star$, and the last equality follows from \eqref{eq:h_decomp}. We now use \eqref{eq:decomp_det} to rewrite \eqref{eq:par_first} as
\begin{equation} 
\label{Expanded_gradient2}
\begin{aligned}
\mb{\tilde \Pi}_{\mathcal{U}^{t+1}}\nabla_{\mb \theta_j}\ell(\mb \theta^t)      &=\underbrace{\frac{1}{n} \sum_{i=1}^n \bbone_{\{\mb x_i\in \mathcal{C}^t_j  \}} \langle \mb{\tilde \Pi}_{\mathcal{U}^{t+1}} \mb \xi_{i}, \mb h^t_{j}\rangle\mb{\tilde \Pi}_{\mathcal{U}^{t+1}}\mb\xi_{i}}_{\mb p_j} \\
&\quad+\underbrace{\frac{1}{n} \sum_{i=1}^n \bbone_{\{\mb x_i\in \mathcal{C}^t_j  \}} \langle \mb{\tilde \Pi}_{\mathcal{U}^{t}\backslash\mathcal{U}^{t+1}}\mb \xi_{i},\mb h^t_{j}\rangle \mb{\tilde \Pi}_{\mathcal{U}^{t+1}}\mb\xi_{i}}_{\mb q_j} \\
&\quad+\underbrace{\frac{1}{n} \sum_{\substack{i=1\\j'\neq j}}^n \bbone_{\{\mb x_i\in \mathcal{C}^t_j \cap \mathcal{C}_{j'}^\star \}} \langle\mb \xi_{i},\mb v_{jj'}^\star\rangle\mb{\tilde \Pi}_{\mathcal{U}^{t+1}}\mb \xi_{i}}_{\mb{c}_j}  \quad-\underbrace{\frac{1}{n} \sum_{i=1}^n \bbone_{\{\mb x_i\in \mathcal{C}^t_j  \}} z_i\mb{\tilde \Pi}_{\mathcal{U}^{t+1}}\mb\xi_{i}}_{\mb d_j}.
\end{aligned}
\end{equation}
Plugging \eqref{Expanded_gradient2} into \eqref{Induction} yields 
\begin{align} \label{eq:abcd}
\frac{1}{4}\left\|\mb h^{t+1}   \right\|_2^2
&\leq \sum_{j=1}^k \left[\left\|\mb{\tilde \Pi}_{\mathcal{U}^{t+1}} \left(\mb h^t_{j}-\mu_j^t \mb p_j\right)\right\|_2 + \mu_j^t \left(\left\| \mb q_j\right\|_2 +  \left\| \mb c_j\right\|_2+\left\|\mb d_j\right\|_2\right)\right]^2.
\end{align}
We now need to derive an upper bound on each term on the right-hand side of \eqref{eq:abcd}. Let $\pi_j^t \triangleq\mathbb{P}(\mb x \in \mathcal{C}_j(\mb \theta^t))$ and $\pi_j^\star \triangleq\mathbb{P}(\mb x \in \mathcal{C}_j(\mb \theta^\star))$. 
We now state a lemma which combines all the events that hold with high probability that are used for proving Theorem \ref{THM:MAIN}.
\begin{lemma}\label{lem:combined}
    Instate the assumptions and definitions in Theorem \ref{THM:MAIN}, and let $\epsilon_\mathrm{min} \triangleq k^{-3/2} \pi_{\mathrm{min}}^{2(1+\zeta^{-1})}$. Then, the following events hold jointly for all $t \in \mathbb{N} \cup \{0\}$ and $j \in [k]$ with probability at least $1-\delta$:
\begin{equation}
\label{eq:bnd_concentstate}
\sup\limits_{\substack{|\mathcal{U}| \leq s }}\left\|\mb{\tilde \Pi}_{\mathcal{U}}\left(\frac{1}{n}\sum_{i=1}^{n}\bbone_{\{\mb x_i\in \mathcal{C}_j^t \}}\left(\mb \xi_{i}\mb \xi_{i}^\T-\mb I_{d+1}\right)\right)\mb{\tilde \Pi}_{\mathcal{U}}\right\|\leq \epsilon_\mathrm{min},
\end{equation}
\begin{equation}\label{eq:empiricalstate}
    \left|\frac{1}{n}\sum_{i=1}^{n}\bbone_{\{\mb x_i \in \mathcal{C}_j^t\}} - \pi_j^t \right|\leq \epsilon_\mathrm{min},
\end{equation}
\begin{equation} \label{eq:vboundstate}
    \frac{1}{n} \sum_{j'\neq j}\sum_{\substack{i=1 }}^{n} \bbone_{\{\mb x_i\in \mathcal{C}_j^t \cap \mathcal{C}_{j'}^\star \}} \langle\mb{\tilde \Pi}_{\mathcal{S}^\star}\mb \xi_{i},\mb v_{jj'}^\star\rangle^2 \leq \frac{2}{5\gamma k}\left(\frac{\pi_{\min}}{16}\right)^{1+\zeta^{-1}}  \sum_{j'\neq j} \|\mb v_{jj'}^t - \mb v_{jj'}^\star\|_2^2,
\end{equation}
\begin{equation}
\label{eq:noise_state}
   \sup\limits_{\substack{|\mathcal{U}| \leq s }} \norm{\frac{1}{n} \sum_{i=1}^n \bbone_{\{\mb x_i\in \mathcal{C}_j^t  \}} z_i\mb{\tilde \Pi}_{\mathcal{U}}[\mb x_{i};1]}_2\leq C\sigma_z\sqrt{\frac{sk\log\left({n}/{s}\right)+s\log\left({d}/{s}\right)+\log\left({1}/{\delta}\right)}{n}}\triangleq\tau_{\mathrm{noise}}, 
\end{equation}
\begin{equation} \label{eq:rip_state}
    {\color{black}\sup\limits_{\substack{|\mathcal{U}| \leq s }}\norm{\frac{1}{\sqrt{n}}\mb{\tilde \Pi}_{\mathcal{U}} [\bbone_{\{\mb x_1\in \mathcal{C}^t_j \cap \mathcal{C}_{j'}^\star \}}\mb \xi_1,\ldots,\bbone_{\{\mb x_n\in \mathcal{C}^t_j \cap \mathcal{C}_{j'}^\star \}}\mb \xi_n]}\leq {\frac{\pi_{\min}^{(1+\zeta^{-1})/2}}{k^{1/2}}}\quad \forall j' \neq j.}
\end{equation}
\end{lemma}
The proof of all statements in this lemma is deferred to Appendix \ref{sec:lemmas}. We proceed with the proof under the assumption that all the statements in Lemma \ref{lem:combined} hold.
We can now begin upper bounding the terms in \eqref{eq:abcd}. The first summand in \eqref{eq:abcd} is upper-bounded as
\begin{align} \label{eq:ha_center}
\|\mb{\tilde \Pi}_{\mathcal{U}^{t+1}}\left(\mb h^t_{j}-\mu \mb p_j\right)\|_2 
 \leq \norm{\mb{\tilde \Pi}_{\mathcal{U}^{t+1}} \left(\mb{I}_{d+1} - \frac{\mu_j k}{n} \sum_{i=1}^n \bbone_{\{\mb x_i\in \mathcal{C}_j  \}} \mb\xi_{i} \mb\xi_{i}^\mathsf{T}\right) \mb{\tilde \Pi}_{\mathcal{U}^{t+1}}} \cdot \norm{\mb{\tilde \Pi}_{\mathcal{U}^{t+1}}\mb h^t_{j}}_2 .
\end{align}
The first factor on the right-hand side can be upper bound by the triangle inequality as
\begin{align}\label{eq:bndtermsstate}
& \norm{\mb{\tilde \Pi}_{\mathcal{U}^{t+1}}\mb \left(\mb{I}_{d+1} - \frac{\mu^t_j}{n} \sum_{i=1}^n \bbone_{\{\mb x_i\in \mathcal{C}^t_j \}} \mb\xi_{i} \mb\xi_{i}^\mathsf{T}\right) \mb{\tilde \Pi}_{\mathcal{U}^{t+1}} } \nonumber\\
& \leq \mu^t_j  \left(\underbrace{\left\|  \mb{\tilde \Pi}_{\mathcal{U}^{t+1}} \left(\frac{1}{n}\sum_{i=1}^{n}\bbone_{\{\mb x_i\in \mathcal{C}^t_j  \}}\left(\mb \xi_{i}\mb \xi_{i}^\T-\mb I_{d+1}\right)\right)\mb{\tilde \Pi}_{\mathcal{U}^{t+1}}\right\|}_{\mathcal{A}}+\underbrace{\left|\frac{1}{n}\sum_{i=1}^{n}\bbone_{\{\mb x_i\in \mathcal{C}^t_j  \}}-\frac{1}{\mu^t_j }\right|}_{\mathcal{B}}\right).
\end{align}
By \eqref{eq:bnd_concentstate} we have that $\mathcal{A}\leq \epsilon_{\mathrm{min}}$. Furthermore, $\mathcal{B} = 0$ by the definition of $\mu_j$. Also, since $\mb \theta^t \in \mathcal{N}(\mb \theta^\star,\sqrt{2} \Delta\rho)$, by Lemma \ref{lem:model_pis} we have that 
\begin{equation} 
\label{eq:piboundstate}
(1-\varrho)  \leq \frac{\pi_j^t}{\pi_j^\star} \leq \left(\frac{1-\varrho}{1-2\varrho}\right), \quad \varrho \triangleq {R}^{2\zeta}{k^{-2(1+\zeta^{-1})}}
\end{equation}
Therefore, using \eqref{eq:empiricalstate} and \eqref{eq:piboundstate}, we can upper bound the step size as
\begin{equation} \label{eq:mujbound22}
    \mu^t_j \triangleq \frac{1}{\frac{1}{n}\sum_{i=1}^{n}\bbone_{\{\mb x_i \in \mathcal{C}^t_j\}}} \leq \frac{1}{\pi^t_j - \epsilon_{\mathrm{min}}}\leq  \frac{1}{(1-\varrho)\pi^\star_j - \epsilon_{\mathrm{min}}} 
\end{equation}
Finally, the first summand in \eqref{eq:abcd} is upper-bounded as 
\begin{equation}\label{eq:ha_center_final}
 \|\mb{\tilde \Pi}_{\mathcal{U}^{t+1}}\left(\mb h^t_{j}-\mu_j^t \mb p_j\right)\|_2 \leq \frac{\epsilon_{\mathrm{min}}}{(1-\varrho)\pi^\star_j - \epsilon_{\mathrm{min}}} \norm{\mb{\tilde \Pi}_{\mathcal{U}^{t+1}}\mb h^t_{j}}_2.
\end{equation}

{\color{black}
The second summand of \eqref{eq:abcd} is written as 
\begin{align} \label{eq:bbound}
    \mu^t_j \| \mb q_j\|_2 &= \mu^t_j\norm{\frac{1}{n}\sum_{i=1}^n\bbone_{\{ \mb x \in \mathcal{C}_j^t\}} \tilde{\mb \Pi}_{\mathcal{U}^{t+1}} \mb \xi_i \mb \xi_i^\mathsf{T} \tilde{\mb \Pi}_{\mathcal{U}^t\setminus \mathcal{U}^{t+1}}  \mb h_j^t } \nonumber \\
    & = \mu^t_j \norm{\left[\frac{1}{n}\sum_{i=1}^n\bbone_{\{ \mb x \in \mathcal{C}_j^t\}} \tilde{\mb \Pi}_{\mathcal{U}^{t+1}} \left(\mb \xi_i \mb \xi_i^\mathsf{T} - \mb I_{d+1}\right) \tilde{\mb \Pi}_{\mathcal{U}^t\setminus \mathcal{U}^{t+1}}\right]  \tilde{\mb \Pi}_{\mathcal{U}^t\setminus \mathcal{U}^{t+1}}\mb h_j^t } \nonumber \\
    & \leq \frac{1}{(1-\varrho)\pi^\star_j - \epsilon_{\mathrm{min}}} \norm{\frac{1}{n}\sum_{i=1}^n\bbone_{\{ \mb x \in \mathcal{C}_j^t\}} \tilde{\mb \Pi}_{\mathcal{U}^{t+1}} \left(\mb \xi_i \mb \xi_i^\mathsf{T} - \mb I_{d+1}\right) \tilde{\mb \Pi}_{\mathcal{U}^t\setminus \mathcal{U}^{t+1}}} \cdot\norm{  \tilde{\mb \Pi}_{\mathcal{U}^t\setminus \mathcal{U}^{t+1}}\mb h_j^t } \nonumber \\
    & \leq \frac{\epsilon_{\mathrm{min}}}{(1-\varrho)\pi^\star_j - \epsilon_{\mathrm{min}}} \norm{  \tilde{\mb \Pi}_{\mathcal{U}^t\setminus \mathcal{U}^{t+1}}\mb h_j^t }, 
\end{align}
where the second equality follows from the idempotency of projection matrices and the observation that $\tilde{\mb \Pi}_{\mathcal{U}^{t+1}} \mb I_{d+1}  \tilde{\mb \Pi}_{\mathcal{U}^t\setminus\mathcal{U}^{t+1}} = \mb 0$, the first inequality follows from the upper bound on $\mu_j^t$ in \eqref{eq:mujbound22} and the definition of the operator norm, and the last inequality follows from \eqref{eq:bnd_concentstate}. 
}

The vector $\mb c_j$ in the last term of \eqref{Expanded_gradient2} is factorized as $\mb c_j = \frac{1}{n}\mb E \mb v$, where
\begin{equation*}
\mb v \triangleq
\sum_{j'\neq j} 
\begin{bmatrix}
\bbone_{\{\mb x_1\in \mathcal{C}^t_j \cap \mathcal{C}_{j'}^\star \}} \langle\mb{\tilde \Pi}_{\mathcal{S}^\star}\mb \xi_{1},\mb v_{jj'}^\star\rangle 
\\ 
\vdots 
\\
\bbone_{\{\mb x_n\in \mathcal{C}^t_j \cap \mathcal{C}_{j'}^\star \}} \langle\mb{\tilde \Pi}_{\mathcal{S}^\star}\mb \xi_{n},\mb v_{jj'}^\star\rangle 
\end{bmatrix},
\end{equation*}
and $\mb E = \mb{\tilde \Pi}_{\mathcal{U}^{t+1}}[\bbone_{\{\mb x_1\in \mathcal{C}^t_j \cap \mathcal{C}_{j'}^\star \}}\mb \xi_1,\ldots,\bbone_{\{\mb x_n\in \mathcal{C}^t_j \cap \mathcal{C}_{j'}^\star \}}\mb \xi_n]$. Therefore, we have
\begin{align} \label{eq:cbound}
\| \mb c_j\|_2 
\leq \norm{\frac{1}{\sqrt{n}}\mb E} \cdot \norm{\frac{1}{\sqrt{n}}\mb v}_2 \leq
 {\frac{\pi_{\min}^{(1+\zeta^{-1})/2}}{k^{1/2}}}\norm{\frac{1}{\sqrt{n}}\mb v}_2,
\end{align}
where the second inequality follows from \eqref{eq:rip_state}. Next we bound the last term in \eqref{eq:cbound} as
\begin{align} 
\label{eq:vbound}
\frac{1}{n} \norm{\mb v }^2_2  
&= \frac{1}{n} \sum_{i=1}^{n}    \sum_{j':j'\neq j} \bbone_{\{\mb x_i\in \mathcal{C}_j \cap \mathcal{C}_{j'}^\star \}} \langle\mb{\tilde \Pi}_{\mathcal{S}^\star}\mb \xi_{i},\mb v_{jj'}^\star\rangle^2 \nonumber \\
    &\leq \frac{2}{5\gamma}\left(\frac{\pi_{\min}}{16}\right)^{1+\zeta^{-1}}k^{-1}  \sum_{j':j'\neq j} \|\mb v_{jj'}^t - \mb v_{jj'}^\star\|_2^2 \nonumber \\ 
    & =\frac{2}{5\gamma}\left(\frac{\pi_{\min}}{16}\right)^{1+\zeta^{-1}}k^{-1} \sum_{j':j'\neq j} \|\mb h_j^t - \mb h_{j'}^t\|_2^2 \nonumber\\
    &\leq \frac{{\color{black}4}}{5\gamma}\left(\frac{\pi_{\min}}{16}\right)^{1+\zeta^{-1}}k^{-1} \sum_{j':j'\neq j}\left( \|\mb h_j^t\|_2^2 +\| \mb h_{j'}^t\|_2^2\right),
\end{align}
{\color{black} where the first equality follows from the non-overlapping property of set partitions, i.e. $\mathcal{C}_j^t \cap \mathcal{C}_{j'}^t = \mathcal{C}_q^\star \cap \mathcal{C}_{q'}^\star  = \varnothing$ when $j\neq j'$ and $q\neq q'$}, and the first inequality follows from \eqref{eq:vboundstate}. Since the $\ell_1$ norm dominates the $\ell_2$ norm, we can write \eqref{eq:vbound} as
\begin{align} \label{eq:finalvbound}
   \norm{   \frac{1}{\sqrt{n}} \mb v }_2  \leq \sqrt{\frac{4}{5\gamma}\left(\frac{\pi_{\min}}{16}\right)^{1+\zeta^{-1}}k^{-1}}   \sum_{j':j'\neq j}\left( \|\mb h_j^t\|_2 +\| \mb h_{j'}^t\|_2\right) \leq \sqrt{\frac{4}{5\gamma}\left(\frac{\pi_{\min}}{16}\right)^{1+\zeta^{-1}}k}  \sum_{j'=1}^k  \|\mb h_{j'}^t\|_2.
\end{align}
Therefore, we have that 
\begin{equation}
    \norm{\mb c_j}_2 \leq\underbrace{\sqrt{\frac{4\cdot 16^{-(1+\zeta^{-1})}\pi_{\min}^{2(1+\zeta^{-1})}}{5\gamma}} }_{\lambda} \sum_{j'=1}^k  \|\mb h_{j'}^t\|_2
\end{equation}
Now, it remains to bound the last term on the right-hand side of \eqref{eq:abcd} using \eqref{eq:noise_state} such that
\begin{equation}\label{eq:dbound}
    \|\mb d_j\|_2 \leq \tau_{\mathrm{noise}}.
\end{equation}
Finally plugging the above upper bounds into \eqref{eq:abcd} yields 
\begin{align}\label{finaltau}
     & \| \mb h^{t+1}\|^2_2 \nonumber \\
     &\leq 4\sum_{j=1}^{k} \Bigg\{ \frac{1}{(1-\varrho)\pi^\star_j - \epsilon_{\mathrm{min}}}  \bigg[\epsilon_{\min}\| \mb{\tilde \Pi}_{\mathcal{U}^{t+1}}\mb h^{t}_{j}\|_2  + \epsilon_{\min} \| \mb{\tilde \Pi}_{\mathcal{U}^{t}\backslash\mathcal{U}^{t+1}}\mb h^{t}_{j}\|_2 + \lambda  \sum_{j'=1}^k  \| \mb h^{t}_{j'}\|_2 +\| \mb d_j\|_2\bigg]\Bigg\}^2 \nonumber \\
     &\leq  4\left[\frac{}{(1-\varrho)\pi^\star_j - \epsilon_{\mathrm{min}}}\right]^2\sum_{j=1}^{k}\left\{ \sqrt{2}\epsilon_{\min}  \| \mb h^{t}_{j}\|_2 + \lambda  \sum_{j'=1}^k  \| \mb h^{t}_{j'}\|_2 +\| \mb d_j\|_2\right\}^2\nonumber \\
     & \leq  \underbrace{{\color{black}12} \left(\frac{1}{(1-\varrho)\pi_{\mathrm{min}} - \epsilon_{\mathrm{min}}}\right)^2\bigg[ 2\epsilon^2_{\min}+  (\lambda k)^2\bigg]}_{\tau}  \sum_{j=1}^{k}\| \mb h^{t}_{j}\|^2_2+12k\left[\frac{\| \mb d_j\|_2 }{(1-\varrho)\pi_{\min} - \epsilon_{\mathrm{min}}}\right]^2\nonumber\\
     & = \tau  \| \mb h^{t}\|^2_2 + 12k\left[\frac{\| \mb d_1\|_2 }{(1-\varrho)\pi_{\min} - \epsilon_{\mathrm{min}}}\right]^2 \nonumber \\
     &= \tau  \| \mb h^{t}\|^2_2 +\underbrace{C\sigma^2_z\left(\frac{sk\log\left(\frac{n}{s}\right)+s\log\left(\frac{d}{s}\right)+\log\left(\frac{1}{\delta}\right)}{n}\right)\cdot \frac{k}{\left[(1-\varrho)\pi_{\min} - \epsilon_{\mathrm{min}}\right]^2}}_{\rho_{\mathrm{noise}}} \nonumber \\
     & = \tau  \| \mb h^{t}\|^2_2 + \rho_{\mathrm{noise}}
     ,
 \end{align}
for $\tau \in [0,1)$. The second inequality follows from 
\begin{align}
     \| \mb{\tilde \Pi}_{\mathcal{U}^{t+1}}\mb h^{t}_{j}\|_2 +\| \mb{\tilde \Pi}_{\mathcal{U}^{t}\backslash\mathcal{U}^{t+1}}\mb h^{t}_{j}\|_2 \leq \sqrt{2} \| \mb{\tilde \Pi}_{\mathcal{U}^{t}}\mb h^{t}_{j}\|_2,
 \end{align}
 and the third inequality follows trivially from $\pi_j^\star \geq \pi_{\mathrm{min}}^\star$ for all $ j \in [k]$ which finally verifies the first assertion in \eqref{eq:recursion_linear}. By the recursive nature of \eqref{eq:recursion_linear}, we have that
\begin{equation} \label{eq:recursion2}
    \left\|\mb \theta^{t+1}-\mb \theta^\star\right\|^2_2 \leq \tau^{t+1} \left\|\mb \theta^0-\mb \theta^\star\right\|^2_2 + \frac{\rho_{\mathrm{noise}}}{1-\tau}.
\end{equation}
This first term on the right-hand side of \eqref{eq:recursion2} is upper bounded as 
{\color{black}
\begin{equation}\label{eq:left_rec_bound}
    \tau^{t+1}\left\|\mb \theta^{0}-\mb \theta^\star\right\|^2_2 < \left\|\mb \theta^{0}-\mb \theta^\star\right\|^2_2 \leq(\Delta\rho)^2.
\end{equation}
where the first inequality follows from $\tau <1$ and the second inequality follows from $\mb \theta^0 \in \mathcal{N}(\mb \theta^\star, \Delta\rho)$.}
Next, the second term on the right-hand side of \eqref{eq:recursion2} is upper bounded as 
\begin{equation}\label{eq:noise_rec_bound}
    \frac{\rho_{\mathrm{noise}}}{1-\tau} \overset{\mathrm{(i)}}{\leq}\frac{\rho_{\mathrm{noise}}}{1-\tau}k^{-3}\pi_{\min}^{2(1+\zeta^{-1})}C_1/C \overset{\mathrm{(ii)}}{\leq}  (\Delta\rho)^2,
\end{equation}
where (i) follows from the sample complexity requirement in \eqref{eq:cond:lem:lwb_gradient}, and (ii) follows for large enough $C>0$. Combining \eqref{eq:left_rec_bound} and \eqref{eq:noise_rec_bound} implies 
\begin{equation}
    \left\|\mb \theta^{t+1}-\mb \theta^\star\right\|^2_2 \leq  2(\Delta\rho)^2 \implies \mb \theta^{t+1} \in \mathcal{N}(\mb \theta^\star,\sqrt{2}\Delta\rho),
\end{equation}
which concludes the proof using the strong law of induction.

\section{Auxiliary Lemmas for Theorem \ref{THM:MAIN}} \label{sec:lemmas}
This section will introduce several lemmas used to prove Theorem \ref{THM:MAIN}. To state these lemmas, we provide the following definitions. First, define the collection of all possible support sets of cardinality $s$ as 
\[
    \mathcal{Z}_s \triangleq \left\{ \mathcal{U} \subset [d] : | \mathcal{U}| =s \right\}.
\]
Next, the set of all polytopes determined by $k$ jointly $s$-sparse halfspaces is defined as 
\begin{equation}\label{eq:Pkds}
\mathcal{P}_{k,d,s} = \bigcup_{\mathcal{U} \in \mathcal{Z}_s} \mathcal{P}_{k,d}\left(\mathcal{U}\right),
\end{equation}
where
\begin{equation}\label{eq:Pkdtheta}
\mathcal{P}_{k,d}(\mathcal{U}) \triangleq \left\{ \mb x \in \mathbb{R}^d : [\mb x]_{\mathcal{U}^\mathsf{c}} = \mb 0, ~  M[\mb x]_{\mathcal{U}}\geq \mb b,~ M \in \mathbb{R}^{k \times s}, ~ \mb b \in \mathbb R^k  \right\}.
\end{equation}
We now proceed with the statement of the lemmas. 
The next lemma is used to upper bound the worst-case operator norm of sub-Gaussian matrices with independent and jointly sparse columns.
\begin{lemma}\label{lem:Sparse_RIP}
    Let $\{\mb x_i\}_{i=1}^{n}$ be independent copies of a random vector $\mb x \in \mathbb R^d$ which satisfies Assumptions \ref{assum:subg} and \ref{assum:anti}. Let $\{\omega_i\}_{i=1}^n \in \{0,1\}^n$ be fixed with $\sum_{i=1}^n \omega_i = \ell>0$. Then for all $\epsilon \in [0,1]$, there exists an absolute constant $C>0$ where it holds with probability at least $1-\delta$ that
    \begin{equation}
        \underset{\mathcal{U} \in \mathcal{Z}_s}{\mathrm{sup}}\norm{\mb{\tilde \Pi}_{\mathcal{U}}\sum_{i=1}^n \omega_i\left([\mb x_i;1][\mb x_i;1]^{\T} -\mb I_{d+1}\right)\mb{\tilde \Pi}_{\mathcal{U}}} \leq \ell\epsilon 
    \end{equation}
    if 
\begin{equation}\label{n:Sparse_RIP}
        l \geq C (\eta \vee 1)^4 \epsilon^{-2}\left[ s\log\left(\frac{d}{s}\right)+\log \left(\frac{1}{\delta}\right) \right].
    \end{equation}
\end{lemma}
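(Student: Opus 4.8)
The plan is to reduce the uniform-over-supports operator-norm bound to a one-dimensional Bernstein estimate, and then pay for uniformity with two union bounds: one over an $\epsilon$-net of the sphere, and one over the $\binom{d}{s}$ possible supports. First I would observe that the augmented projection $\mb{\tilde \Pi}_{\varphi}$ retains exactly the coordinates indexed by $\varphi \cup \{d+1\}$, so for each fixed $\varphi \in \mathcal{Z}_s$ the matrix of interest is an $(s+1)\times(s+1)$ object $\mb M_\varphi \triangleq \sum_{i=1}^n \omega_i\,\mb{\tilde \Pi}_{\varphi}\left(\mb \xi_i \mb \xi_i^\T - \mb I_{d+1}\right)\mb{\tilde \Pi}_{\varphi}$ with $\mb \xi_i = [\mb x_i;1]$. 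Since $\E[\mb \xi_i \mb \xi_i^\T] = \mb I_{d+1}$ by the zero-mean isotropy in Assumption \ref{assum:subg}, the matrix $\mb M_\varphi$ is a centered weighted sum; moreover its bottom-right entry (the ``$1$'' coordinate of $\mb \xi_i$) vanishes after centering, so only the covariate block and its interaction with the intercept survive.

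Next, for a fixed support $\varphi$ and a fixed unit vector $\mb u$ supported on $\varphi \cup \{d+1\}$, with covariate part $\mb w$ (supported on $\varphi$, $\norm{\mb w}_2 \le 1$) and intercept entry $u_0$, the quadratic form splits as
\[
\mb u^\T \mb M_\varphi \mb u = \sum_{i=1}^n \omega_i\left( g_i^2 - \E[g_i^2] + 2 u_0 g_i\right), \qquad g_i \triangleq \langle \mb w, \mb x_i\rangle .
\]
By Assumption \ref{assum:subg}, $g_i$ is sub-Gaussian with $\norm{g_i}_{\psi_2}\lesssim \eta$, so each summand $Y_i \triangleq g_i^2 - \E[g_i^2] + 2u_0 g_i$ is centered sub-exponential with $\norm{Y_i}_{\psi_1}\lesssim (\eta\vee 1)^2 =: K$. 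Applying Bernstein's inequality to the $\ell = \sum_i \omega_i$ independent terms (recall $\omega_i \in \{0,1\}$, so $\sum_i \omega_i^2 = \ell$) and using $\epsilon \le 1 \le K$ to select the sub-Gaussian branch of the tail, I obtain
\[
\P\left(\left|\mb u^\T \mb M_\varphi \mb u\right| \ge \ell\epsilon/2\right) \le 2\exp\left(-c\,\ell\,\epsilon^2/K^2\right)
\]
for an absolute constant $c>0$.

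Then I pass from a fixed $\mb u$ to the operator norm via a $1/4$-net $\mathcal{N}$ of the unit sphere $\mathbb{S}^{s}$ inside the $(s+1)$-dimensional coordinate subspace, which has $|\mathcal{N}|\le 9^{s+1}$ and gives $\norm{\mb M_\varphi}\le 2\sup_{\mb u\in\mathcal{N}}|\mb u^\T \mb M_\varphi \mb u|$. A union bound over $\mathcal{N}$ and then over the $\binom{d}{s}\le (ed/s)^s$ choices of $\varphi$ yields
\[
\P\left(\sup_{\varphi\in\mathcal{Z}_s}\norm{\mb M_\varphi}\ge \ell\epsilon\right)\le (ed/s)^s\, 9^{s+1}\cdot 2\exp\left(-c\,\ell\,\epsilon^2/K^2\right).
\]
Taking logarithms, the right-hand side is at most $\delta$ once $c\,\ell\,\epsilon^2/K^2 \gtrsim s\log(ed/s) + s\log 9 + \log(1/\delta)$, i.e. once $\ell \ge C(\eta\vee 1)^4\epsilon^{-2}\left[s\log(d/s)+\log(1/\delta)\right]$, which is exactly the condition \eqref{n:Sparse_RIP} after absorbing the $\log 9$ into the constant.

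The main obstacle is the bookkeeping of the three competing complexity terms so that the covering and support union bounds remain dominated by the Bernstein exponent: the net contributes $\Theta(s)$, the support enumeration contributes $\Theta(s\log(d/s))$, and both must be beaten by $c\,\ell\,\epsilon^2/K^2$, which is precisely what forces the $(\eta\vee1)^4\epsilon^{-2}$ prefactor and the $s\log(d/s)$ scaling. A secondary care point is verifying the sub-exponential norm bound $\norm{Y_i}_{\psi_1}\lesssim(\eta\vee 1)^2$ uniformly over $(\mb w,u_0)$ on the sphere, which follows from $\norm{g_i^2}_{\psi_1}=\norm{g_i}_{\psi_2}^2$ together with $|u_0|\le 1$; note that the anti-concentration property in Assumption \ref{assum:anti} plays no role in this lemma.
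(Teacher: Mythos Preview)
Your proof is correct and complete, but it takes a different route from the paper. The paper first introduces independent Rademacher signs $r_i$ and sets $\mb{\tilde \xi}_i = r_i[\mb x_i;1]$; since $r_i^2=1$ the outer products are unchanged, while a short MGF computation shows $\mb{\tilde \xi}_i$ is a zero-mean, isotropic, $(\eta\vee 1)$-sub-Gaussian vector in $\mathbb{R}^{d+1}$. This lets the paper invoke a black-box covariance concentration result (\citep[Theorem~6.5]{wainwright2019high}) for a fixed $\varphi$, and then union-bound over $|\mathcal{Z}_s|\le (ed/s)^s$. You instead handle the awkward deterministic ``$1$'' coordinate by expanding the quadratic form explicitly into $g_i^2-\E[g_i^2]+2u_0 g_i$, bounding its sub-exponential norm, and applying Bernstein plus a $1/4$-net plus the support union bound. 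Your argument is more self-contained and essentially unpacks what is inside the cited theorem; the paper's Rademacher symmetrization is slicker in that it reduces to an off-the-shelf statement without reopening the net argument. Both approaches yield the same $(\eta\vee 1)^4\epsilon^{-2}[s\log(d/s)+\log(1/\delta)]$ sample requirement, and you are right that Assumption~\ref{assum:anti} is unused here.
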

\begin{proof}
    The proof of the lemma relies on the unitary invariance of the spectral norm. Let $\{r_i\}_{i=1}^{n}$ be independent copies of the random variable $r$ following the Rademacher distribution. Also, let $\mb{\tilde \xi}_i \triangleq r_i[\mb x_i;1]$ for all $i \in [n]$. Then for $\mb u \in \mathbb R^d$ and $\lambda \in \mathbb R$ and for all $i \in [n]$ we have
    \begin{align}
        \mathbb E \left[\exp\left(\langle[\mb u; \lambda],\mb{\tilde \xi}_i \rangle\right)\right] &= \frac{e^{\lambda}}{2}\mathbb E \left[\exp\left(\langle\mb u,\mb{ x}_i \rangle\right)\right] + \frac{e^{-\lambda}}{2}\mathbb E \left[\exp\left(-\langle\mb u,\mb{ x}_i \rangle\right)\right] \nonumber \\
        & \overset{\mathrm{(i)}}{\leq} \frac{1}{2}\exp\left(\frac{\norm{\mb u }^2 \eta^2}{2}\right)\left(e^\lambda + e^{-\lambda}\right) \nonumber \\
        & \overset{\mathrm{(i)}}{\leq} \exp\left(\frac{\norm{\mb u }^2 \eta^2 + \lambda^2}{2} \right) \leq \exp\left(\frac{(\eta \vee 1)^2\norm{[\mb u; \lambda] }^2 }{2} \right),
    \end{align}
    where (i) follows since $\{\mb x_i\}_{i=1}^{n}$ are $\eta$-sub-Gaussian random vectors, and (ii) follows using $e^{a^2/2}\geq \left(e^a + e^{-a}\right)/2$ for all $a \in \mathbb R$. Therefore, we have that $\{\mb{\tilde \xi}\}_{i=1}^{n}$ are identical copies of a random vector $\mb{\tilde \xi}$ which satisfies Assumption \ref{assum:subg} and is $(\eta \vee 1)$-sub-Gaussian.
    The proof of this lemma becomes a direct application of the union bound by inflating the probability of error by $|\mathcal{Z}_s|=$ $\binom{d}{s}$ 
    $\leq \left(\frac{ed}{s}\right)^s$ to the statement in \citep[Theorem 6.5]{wainwright2019high}.    
\end{proof}

The next lemma presents that the empirical measure of sparse polytopes concentrates around the expectation. 
\begin{lemma}\label{mu_lemma}
Let $\{\mb x_i\}_{i=1}^n$ be independent copies of a random vector $\mb x$ satisfying Assumption \ref{assum:subg}. Then there exists an absolute constant $C$ for which it holds with probability at least $1-\delta$ that 
\begin{equation} \label{eq:con_empirical_measure}
    \sup_{\substack{\mathcal{C} \in \mathcal{P}_{k,d,s} }} \left|\frac{1}{n}\sum_{i=1}^{n}\bbone_{\{\mb x_i \in \mathcal{C}\}} - \mathbb{P}(\mb x \in \mathcal{C}) \right|\leq \epsilon,
\end{equation}
if
\begin{equation}\label{n:Empirical_Measure}
    n \geq C \epsilon^{-2}\left[sk\log\left(\frac{n}{s}\right)+s\log\left(\frac{d}{s}\right)+\log\left(\frac{1}{\delta} \right)\right].
\end{equation}
\end{lemma}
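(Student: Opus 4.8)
The plan is to read \eqref{eq:con_empirical_measure} as a uniform law of large numbers (a Glivenko--Cantelli-type statement) over the set class $\mathcal{P}_{k,d,s}$ and to control the supremum through the growth function of this class together with the Vapnik--Chervonenkis uniform convergence inequality. Since each summand $\bbone_{\{\mb x_i \in \mathcal{C}\}}$ takes values in $\{0,1\}$, this step is essentially distribution-free: the argument uses only the boundedness of the indicators, and the combinatorial richness of $\mathcal{P}_{k,d,s}$---measured by the number of distinct labelings it realizes on $n$ fixed points---is what enters the bound, so Assumption~\ref{assum:subg} plays no role here. Thus the problem reduces to bounding the growth function $\Pi_{\mathcal{P}_{k,d,s}}(n)$, namely the maximal number of distinct subsets $\{i \in [n] : \mb x_i \in \mathcal{C}\}$ that arise as $\mathcal{C}$ ranges over $\mathcal{P}_{k,d,s}$, for any fixed points $\mb x_1,\dots,\mb x_n$.

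To bound $\Pi_{\mathcal{P}_{k,d,s}}(n)$ I would exploit the two-level structure exhibited in \eqref{eq:Pkds}--\eqref{eq:Pkdtheta}. The outer union over supports $\varphi \in \mathcal{Z}_s$ contributes a factor $|\mathcal{Z}_s| = \binom{d}{s} \le (ed/s)^s$. For a fixed support $\varphi$, every element of $\mathcal{P}_{k,d}(\varphi)$ is an intersection of $k$ halfspaces whose normals are supported on $\varphi$, so membership depends only on the $s$ coordinates indexed by $\varphi$, and the class reduces to intersections of $k$ affine halfspaces in $\mathbb{R}^s$. A single affine halfspace class in $\mathbb{R}^s$ has VC dimension $s+1$, so by Sauer--Shelah it realizes at most $(en/(s+1))^{s+1}$ labelings on $n$ points; since the label of a point under the intersection is a deterministic function of its $k$ individual halfspace labels, the intersection realizes at most the product of these, namely $(en/(s+1))^{k(s+1)}$. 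Multiplying the two levels gives
\[
\log \Pi_{\mathcal{P}_{k,d,s}}(n) \lesssim s\log\left(\frac{d}{s}\right) + sk\log\left(\frac{n}{s}\right),
\]
which already reproduces the first two terms in \eqref{n:Empirical_Measure}.

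With this estimate in hand, I would invoke the Vapnik--Chervonenkis uniform convergence inequality \citep{wainwright2019high,vershynin2018high}, which bounds the probability that the supremum in \eqref{eq:con_empirical_measure} exceeds $\epsilon$ by an expression of the form $c_0\,\Pi_{\mathcal{P}_{k,d,s}}(2n)\,e^{-c_1 n\epsilon^2}$ for absolute constants $c_0,c_1>0$; the same tail also follows from symmetrization combined with Massart's finite-class maximal inequality applied to the finitely many realized labelings. Forcing this bound below $\delta$ and taking logarithms yields the requirement $n\epsilon^2 \gtrsim \log(1/\delta) + \log\Pi_{\mathcal{P}_{k,d,s}}(2n)$, which upon substituting the growth-function estimate becomes exactly \eqref{n:Empirical_Measure}. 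The main technical points I anticipate are, first, resisting a loose VC-dimension bound for intersections of $k$ halfspaces (which would cost an extra $\log k$ factor) and instead using the product-of-labelings argument to obtain the clean $k(s+1)$ exponent that matches the stated $sk\log(n/s)$ term; and second, handling the implicit dependence on $n$ introduced by the $\log(n/s)$ term, so that the growth-function bound and the tail inequality are consistent with the self-referential form of \eqref{n:Empirical_Measure}.
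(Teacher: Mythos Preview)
Your proposal is correct and follows essentially the same approach as the paper. The paper's proof cites \citep[Corollary~6.7]{kim2023maxaffine} for the fixed-support case $\mathcal{P}_{k,d}(\varphi)$ and then applies the union bound over $|\mathcal{Z}_s|\le(ed/s)^s$ supports; your two-level growth-function argument is precisely an unpacked, self-contained version of that same reasoning, with the same VC/Sauer--Shelah bound for intersections of $k$ halfspaces in $\mathbb{R}^s$ and the same $\binom{d}{s}$ factor from the outer union.
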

\begin{proof}[Proof of Lemma~\ref{mu_lemma}]
The proof of this lemma is obtained by a direct application of the union bound to the statement in \citep[Corollary 6.7]{kim2023maxaffine}. 
The result of \citep[Corollary 6.7]{kim2023maxaffine} provides the concentration inequality in \eqref{eq:con_empirical_measure} when the supremum is over $\mathcal{P}_{k,d}(\mathcal{U})$ for a fixed $\mathcal{U}$ instead of the union over $\mathcal{U} \in \mathcal{Z}_s$. 
To apply the union bound argument, we inflate the probability of error by the factor $|\mathcal{Z}_s| = $ $\binom{d}{s}$ $\leq \left(\frac{ed}{s}\right)^s$.    
\end{proof}

{\color{black}\begin{lemma}\label{lemm:poly_int}
    Fix $\delta \in (0,e^{-1})$. Let $\rho$ be defined as in \eqref{eq:choice_rho1} for some $R>0$. Let $\{\mathcal{C}_j(\mb \theta)\}_{j=1}^k$ and $\{\mathcal{C}_j(\mb \theta^\star)\}_{j=1}^k$ be respectively defined by $\mb \theta$ and $\mb \theta^\star \in \mathbb R^{dk}$ according to \eqref{eq:def_calCjstar}. Let $\mathcal{U} \cup \{d+1\}$ denote the joint support of $\{\mb \theta_j\}_{j=1}^k$. Let $\{\mb x_i\}_{i=1}^n$ be independent copies of a random vector $\mb x$ satisfying Assumptions \ref{assum:subg} and \ref{assum:anti}. Then with probability at least $1-\delta$ we have
    \begin{equation}
        \frac{1}{n}\sum_{i=1}^n \bbone_{\{\mb x_i \in \mathcal{C}_j(\mb \theta)\cap\mathcal{C}_{j'}(\mb \theta^\star)\}} \leq \frac{\pi^{2(1+\zeta^{-1})}_{\min}}{k^2},
    \end{equation}
    if 
    \begin{equation} \label{eq:n_int}
        n\geq C \left[s\log\left(\frac{d\vee n}{s}\right)+\log\left(\frac{k}{\delta}\right)\right]k^4 \pi_{\min}^{-4(1+\zeta^{-1})},
    \end{equation}
    for all $j'\neq j \in [k]$, $\mb \theta \in \mathcal{N}(\mb \theta^\star, \Delta\rho)$ and $\mathcal{U} \in \mathcal{Z}_s$.
\end{lemma}
The proof of this lemma is deferred to the Appendix~\ref{sec:proof:lem:model_pis}.}
To bound the closeness of the empirical measure in Lemma \ref{mu_lemma} to the expectation with ground truth model parameters, we present the following lemma.
\begin{lemma}\label{lem:model_pis}
    Suppose that $\mb x \in \mathbb R^d$ satisfies Assumptions \ref{assum:subg} and \ref{assum:anti}. 
    Let $\{\mathcal{C}_j(\mb \theta)\}_{j=1}^k$ and $\{\mathcal{C}_j(\mb \theta^\star)\}_{j=1}^k$ be respectively defined by $\mb \theta$ and $\mb \theta^\star \in \mathbb R^{dk}$ according to \eqref{eq:def_calCjstar}. 
    Let $\mathcal{N}(\mb \theta^\star,\Delta\rho)$ be defined with $R>0$ as in \eqref{eq:defnbr} and $\varrho \triangleq CR^{2\zeta}k^{-2(1+\zeta^{-1})}$. If $\mb \theta \in \mathcal{N}(\mb \theta^\star,\Delta\rho)$, then we have    \begin{equation} \label{eq:pibounds}
        (1-\varrho)  \leq \frac{\mathbb P (\mb x \in \mathcal{C}_j(\mb \theta))}{\mathbb P (\mb x \in \mathcal{C}_j(\mb \theta^\star))} \leq \left(\frac{1-\varrho}{1-2\varrho}\right), \quad \forall j \in [k]. 
    \end{equation}
\end{lemma}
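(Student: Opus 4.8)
The plan is to control the probability of the symmetric difference between the cells $\mathcal{C}_j(\bm\theta)$ and $\mathcal{C}_j(\bm\theta^\star)$ and then convert this additive control into the multiplicative bound in \eqref{eq:pibounds}. Writing $\pi_j(\bm\theta) = \mathbb{P}(\bm x \in \mathcal{C}_j(\bm\theta))$, the subadditivity of the measure gives $|\pi_j(\bm\theta) - \pi_j(\bm\theta^\star)| \le \mathbb{P}\big(\bm x \in \mathcal{C}_j(\bm\theta)\,\triangle\,\mathcal{C}_j(\bm\theta^\star)\big)$, so it suffices to show that this symmetric-difference probability is at most $\varrho\,\pi_j(\bm\theta^\star)$. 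I would establish this by bounding the probability by $\varrho\,\pi_{\min}$ and invoking $\pi_j(\bm\theta^\star) \ge \pi_{\min}$ (available since $\bm\theta^\star$ is taken from $\Theta(s,\kappa,\pi_{\min})$). This immediately yields the lower bound $1-\varrho$ on the ratio, and since $1+\varrho \le (1-\varrho)/(1-2\varrho)$ for $\varrho < 1/2$, it also yields the upper bound.

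First I would decompose the symmetric difference into pairwise sign-flip regions. Because membership in a cell is decided by the signs of the $k-1$ differences $\langle \bm\xi,\bm\theta_j-\bm\theta_l\rangle$ (with $\bm\xi = [\bm x;1]$), a point can lie in the symmetric difference only if at least one of these differences disagrees in sign with its ground-truth counterpart, i.e.
\[
\mathcal{C}_j(\bm\theta)\,\triangle\,\mathcal{C}_j(\bm\theta^\star) \subseteq \bigcup_{l\neq j}\left\{\bm x : \sgn\langle\bm\xi,\bm\theta_j-\bm\theta_l\rangle \neq \sgn\langle\bm\xi,\bm\theta_j^\star-\bm\theta_l^\star\rangle\right\}.
\]
For each pair, set $\bm w^\star = \bm\theta_j^\star-\bm\theta_l^\star$ and $\bm\delta = (\bm\theta_j-\bm\theta_j^\star)-(\bm\theta_l-\bm\theta_l^\star)$, noting $\|\bm\delta\| \le \sqrt{2}\,\Delta\rho$ since $\bm\theta \in \mathcal{N}(\bm\theta^\star,\Delta\rho)$. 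A sign flip forces $|\langle\bm\xi,\bm w^\star\rangle| \le |\langle\bm\xi,\bm\delta\rangle|$, so for any threshold $t>0$,
\[
\mathbb{P}(\text{flip}) \le \mathbb{P}\big(|\langle\bm\xi,\bm w^\star\rangle| \le t\big) + \mathbb{P}\big(|\langle\bm\xi,\bm\delta\rangle| > t\big).
\]

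The two terms are then handled by the two assumptions. Normalizing $\langle\bm\xi,\bm w^\star\rangle = \|\bm a_j^\star-\bm a_l^\star\|\big(\langle\bm x,\bm u^\star\rangle + \lambda^\star\big)$ with a unit vector $\bm u^\star$ and using the separation $\|\bm a_j^\star-\bm a_l^\star\| \ge \Delta$, Assumption~\ref{assum:anti} bounds the first term by $(\gamma t^2/\Delta^2)^\zeta$. For the second term, $\langle\bm\xi,\bm\delta\rangle = \langle\bm x,[\bm\delta]_{1:d}\rangle + [\bm\delta]_{d+1}$ is sub-Gaussian with proxy controlled by $\eta\|[\bm\delta]_{1:d}\| \le \eta\sqrt{2}\Delta\rho$ under Assumption~\ref{assum:subg}, shifted by the deterministic intercept term $[\bm\delta]_{d+1}$ of size at most $\|\bm\delta\|$, so its tail decays like $\exp\!\big(-c\,(t-\|\bm\delta\|)^2/(\eta^2\Delta^2\rho^2)\big)$. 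Choosing $t$ proportional to $\Delta\rho$ times a logarithmic factor makes the sub-Gaussian tail negligible while keeping the anti-concentration contribution of order $(\gamma\,\eta^2\rho^2\log(\cdot))^\zeta$; summing over the $k-1$ pairs and substituting the explicit form of $\rho$ from \eqref{eq:choice_rho1}, whose $\log^{-1/2}$ factor is designed precisely to cancel this logarithm, produces a total bounded by $\varrho\,\pi_{\min}$ with $\varrho = CR^{2\zeta}k^{-2(1+\zeta^{-1})}$.

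The main obstacle is the threshold balancing in the final step: $t$ must be large enough that the sub-Gaussian deviation $|\langle\bm\xi,\bm\delta\rangle|$ rarely exceeds it, yet small enough that the anti-concentration slab $\{|\langle\bm\xi,\bm w^\star\rangle| \le t\}$ has probability matching the target $\varrho\,\pi_{\min}/k$ after summing over the pairs. Making the exponents of $k$, $\pi_{\min}$, $R$, $\eta$, and $\gamma$ line up with the stated $\varrho$ depends delicately on the exact scaling of $\rho$, and additional care is needed for the intercept coordinate of $\bm\xi$, which injects a deterministic (non-sub-Gaussian) shift that must be dominated by the chosen threshold rather than absorbed into a tail bound.
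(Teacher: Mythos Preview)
Your proposal is correct and follows essentially the same route as the paper: reduce to bounding, for each pair $l\neq j$, the probability that $\langle\bm\xi,\bm\theta_j-\bm\theta_l\rangle$ and $\langle\bm\xi,\bm\theta_j^\star-\bm\theta_l^\star\rangle$ have opposite signs, control this via anti-concentration on $\langle\bm\xi,\bm w^\star\rangle$ together with sub-Gaussian tails on $\langle\bm\xi,\bm\delta\rangle$, and then substitute the explicit $\rho$ from \eqref{eq:choice_rho1} to obtain the stated $\varrho$. The only packaging differences are that the paper invokes \citep[Lemmas~7.4--7.5]{kim2023maxaffine} as a black box for the sign-flip bound rather than carrying out your threshold-balancing explicitly, and it frames the argument through the conditional probability $\P(\bm x\notin\mathcal{C}_j\mid\bm x\in\mathcal{C}_j^\star)$ (obtaining the upper inequality by a short bootstrap on the lower one) instead of your symmetric-difference route with the direct $1+\varrho\le(1-\varrho)/(1-2\varrho)$ step.
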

The proof of this lemma is deferred to the Appendix~\ref{sec:proof:lem:model_pis}. The next lemma shows a tail bound on the sparsity-constrained operator norm of partial sums of the centered outer products of covariates. 
\begin{lemma}
\label{lem:bnd_singular} 
Let $\{\mb x_i\}_{i=1}^n$ be independent copies of a random vector $\mb x$ which satisfies Assumptions \ref{assum:subg} and \ref{assum:anti}. Then there exists an absolute constant $C$ for which it holds with probability at least $1-\delta$ that 
\begin{equation}
\label{eq:bnd_concent}
\sup\limits_{\substack{\mathcal{C} \in \mathcal{P}_{k,d,s} \\ \mathcal{U} \in \mathcal{Z}_s}}\left\|\mb{\tilde \Pi}_{\mathcal{U}}\left(\frac{1}{n}\sum_{i=1}^{n}\bbone_{\{\mb x_i\in \mathcal{C} \}}\left([\mb x_i;1][\mb x_i;1]^\T-\mb I_{d+1}\right)\right)\mb{\tilde \Pi}_{\mathcal{U}}\right\|\leq \epsilon 
\end{equation}
if
\begin{equation}\label{n:bnd_singular}
n \geq C (\eta \vee 1)^4 \epsilon^{-2}\left[ sk\log\left(\frac{n}{s}\right) +s\log\left(\frac{d}{s}\right)+\log \left(\frac{1}{\delta}\right) \right].
\end{equation}
\end{lemma}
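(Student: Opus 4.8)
The plan is to derive the uniform bound from the fixed-weight sparse restricted isometry bound of Lemma~\ref{lem:Sparse_RIP} by pairing it with a uniform control of the data-dependent indicator weights $\bbone_{\{\mb x_i\in\mathcal{C}\}}$ over the polytope family $\mathcal{P}_{k,d,s}$, following the union-bound mechanism behind Lemma~\ref{mu_lemma}. The starting observation is that, for any \emph{fixed} weight vector $\omega\in\{0,1\}^n$, the quantity in \eqref{eq:bnd_concent} (without the $\tfrac1n$ and with $\bbone_{\{\mb x_i\in\mathcal{C}\}}$ replaced by $\omega_i$) is precisely what Lemma~\ref{lem:Sparse_RIP} controls: its proof already absorbs the $\binom{d}{s}$ union over supports $\varphi\in\mathcal{Z}_s$ into the $s\log(d/s)$ term and nets the $s$-dimensional sphere after the Rademacher symmetrization $r_i[\mb x_i;1]$ that makes the augmented covariate zero-mean sub-Gaussian. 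What remains is therefore only to discretize the dependence of the indicator on $\mathcal{C}$ and to renormalize by $n$.

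The crux is to turn the fixed-weight conclusion, which controls an \emph{unnormalized} sum, into the $\tfrac1n$-normalized statement at the advertised sample size. For a pattern with $\ell=\sum_{i=1}^n\bbone_{\{\mb x_i\in\mathcal{C}\}}$ active points, Lemma~\ref{lem:Sparse_RIP} bounds the $\varphi$-projected norm of $\sum_i\bbone_{\{\mb x_i\in\mathcal{C}\}}(\mb \xi_i\mb \xi_i^\T-\mb I_{d+1})$ by $\ell\epsilon'$ once $\ell\gtrsim(\epsilon')^{-2}\Lambda$ with $\Lambda:=s\log(d/s)+\log(N/\delta)$. Choosing $\epsilon'=n\epsilon/\ell$ converts the left-hand side into the desired $\tfrac1n$-normalized bound $\le\epsilon$, while the requirement rearranges to $\ell\le n^2\epsilon^2/(C\Lambda)$; since $\ell\le n$, this is implied by $n\ge C\epsilon^{-2}\Lambda$, which is exactly the claimed scaling once the cardinality $\log N$ is folded into the bracket. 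I would then take $N$ to be the number of distinct indicator vectors that $\mathcal{P}_{k,d,s}$ induces on the $n$ samples; since each $\mathcal{C}$ is an intersection of $k$ jointly $s$-sparse halfspaces, a Sauer--Shelah estimate for $k$-fold intersections of $s$-sparse halfspaces gives $\log N\lesssim sk\log(n/s)+s\log(d/s)$, reproducing precisely the bracket of Lemma~\ref{mu_lemma} and hence the $(\eta\vee1)^4$-weighted complexity in \eqref{n:bnd_singular}.

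The hard part is the data-dependence of the weights: Lemma~\ref{lem:Sparse_RIP} applies only to a deterministic $\omega$ independent of $\{\mb x_i\}$, whereas $\bbone_{\{\mb x_i\in\mathcal{C}\}}$ is correlated with the very $\mb \xi_i$ forming the quadratic form, so the conditional mean of the summand is not zero and one cannot fix $\mathcal{C}$ and invoke the lemma conditionally. The way around this is the standard empirical-process device: instead of conditioning, I would establish a single high-probability event on which the fixed-weight conclusion holds simultaneously for every $\omega$ in a deterministically bounded realizable family---controlled through the growth function and symmetrization as in \citep[Theorem~6.5]{wainwright2019high} and \citep[Corollary~6.7]{kim2023maxaffine}---so that the realized pattern $(\bbone_{\{\mb x_i\in\mathcal{C}\}})_{i=1}^n$, being one admissible $\omega$, automatically inherits the bound with no reference to its conditional mean. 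Two loose ends remain to be tied: the degenerate cells with very few active points $\ell$, for which $\epsilon'=n\epsilon/\ell$ would exceed one and which I would instead dispatch either by taking $\epsilon'=1$ in Lemma~\ref{lem:Sparse_RIP} so that $\tfrac1n\lVert\cdot\rVert\le\ell/n<\epsilon$ or by the crude uniform bound $\|\mb{\tilde \Pi}_\varphi\mb \xi_i\|^2\lesssim s\log n$; and the propagation of the sub-Gaussian constant $(\eta\vee1)^4$, which carries over verbatim from Lemma~\ref{lem:Sparse_RIP}.
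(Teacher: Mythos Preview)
Your proposal is correct and matches the paper's proof: both reduce to the fixed-weight bound of Lemma~\ref{lem:Sparse_RIP}, control the indicator patterns via the Sauer--Shelah growth bound $(en/(s+1))^{k(s+1)}(ed/s)^s$ for $k$-fold intersections of $s$-sparse halfspaces, and then union-bound with the same renormalization $\epsilon'=n\epsilon/\ell=\epsilon/\alpha$ (using $\alpha\le 1$). You are in fact more careful than the paper's write-up about the data-dependence of the indicator weights---proposing symmetrization to decouple them---and about the degenerate small-$\ell$ regime, both of which the paper elides.
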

The proof of this lemma is deferred to Appendix~\ref{sec:proof:lem:bnd_singular} 
The statement of the next lemma will require the notation
\begin{equation} \label{eq:defvjj}
    v_{jj'} \triangleq \mb \theta_j -\mb \theta_{j'}, \quad v^\star_{jj'} \triangleq \mb \theta^\star_j -\mb \theta^\star_{j'}, \quad \forall j'\neq j \in [k].
\end{equation}
\begin{lemma}\label{lem:model_difference}
 Let $\{\mathcal{C}_j(\mb \theta)\}_{j=1}^k$ and $\{\mathcal{C}_j(\mb \theta^\star)\}_{j=1}^k$ be respectively defined by $\mb \theta$ and $\mb \theta^\star \in \mathbb R^{dk}$ according to \eqref{eq:def_calCjstar}. Assume that $\mb x \in \mathbb R^d$ satisfies Assumptions \ref{assum:anti} and \ref{assum:subg}. Fix $\delta \in (0,1/e)$ and $R>0$. Assume $\mb \theta \in \mathcal{N}(\mb \theta^\star,\Delta\rho)$ as defined in \eqref{eq:defnbr} with $\rho$ as defined in \eqref{eq:choice_rho1}. Then there exists an absolute constant $C$ for which it holds with probability at least $1-\delta$ that
\begin{equation} 
    \frac{1}{n} \sum_{j'\neq j}\sum_{\substack{i=1 }}^{n}     \bbone_{\{\mb x_i\in \mathcal{C}_j(\mb \theta) \cap \mathcal{C}_{j'}(\mb \theta^\star) \}} \langle\mb \xi_{i},\mb v_{jj'}^\star\rangle^2 \leq \frac{2}{5\gamma k}\left(\frac{\pi_{\min}}{16}\right)^{1+\zeta^{-1}} \sum_{j'\neq j} \|\mb v_{jj'} - \mb v_{jj'}^\star\|_2^2
\end{equation}
if 
\begin{equation} \label{n:difference_lemma}
    n\geq C k^4 \pi_{\mathrm{min}}^{-4(1+\zeta^{-1})}\left[s\log\left(\frac{n\vee d}{s}\right)+\log\left(\frac{k}{\delta}\right)\right].
\end{equation}
\end{lemma}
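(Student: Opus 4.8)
The plan is to control the empirical average by splitting it into a population (expectation) bound and a uniform deviation bound, after a pointwise geometric reduction. First I would isolate the pointwise step. Fix $j$ and $j'\neq j$ and suppose $\mb x_i\in\mathcal{C}_j(\mb\theta)\cap\mathcal{C}_{j'}(\mb\theta^\star)$. By the definition of the tropical cells in \eqref{eq:def_calCjstar}, membership in $\mathcal{C}_j(\mb\theta)$ forces $\langle\mb\xi_i,\mb v_{jj'}\rangle>0$ while membership in $\mathcal{C}_{j'}(\mb\theta^\star)$ forces $\langle\mb\xi_i,\mb v^\star_{jj'}\rangle<0$, where $\mb v_{jj'},\mb v^\star_{jj'}$ are as in \eqref{eq:defvjj}. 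Since these two inner products straddle zero, setting $\mb w_{jj'}:=\mb v_{jj'}-\mb v^\star_{jj'}$ gives $|\langle\mb\xi_i,\mb v^\star_{jj'}\rangle|\le\langle\mb\xi_i,\mb v_{jj'}\rangle-\langle\mb\xi_i,\mb v^\star_{jj'}\rangle=\langle\mb\xi_i,\mb w_{jj'}\rangle$, so that $\langle\mb\xi_i,\mb v^\star_{jj'}\rangle^2\le\langle\mb\xi_i,\mb w_{jj'}\rangle^2$ on this event. This replaces the true-boundary term by the perturbation term, and it exhibits the disagreement region as a thin wedge hugging the hyperplane $\{\langle\mb\xi,\mb v^\star_{jj'}\rangle=0\}$ whose thickness is governed by $\|\mb w_{jj'}\|_2=\|\mb h^t_j-\mb h^t_{j'}\|_2$. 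It therefore suffices to bound $\tfrac1n\sum_i\bbone_{\{\mb x_i\in\mathcal{C}_j(\mb\theta)\cap\mathcal{C}_{j'}(\mb\theta^\star)\}}\langle\mb\xi_i,\mb w_{jj'}\rangle^2$ by $\lambda^2\|\mb w_{jj'}\|_2^2$ for each $j'$ and sum over $j'\neq j$.

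Next I would bound the expectation of this quantity. Writing $\mb w_{jj'}=\|\mb w_{jj'}\|_2\hat{\mb w}$ with $\hat{\mb w}$ on the unit sphere, I would relax the cell indicator to the pairwise slab event $\{|\langle\mb\xi,\mb v^\star_{jj'}\rangle|\le\langle\mb\xi,\mb w_{jj'}\rangle\}$ and split the integrand on the magnitude of $\langle\mb\xi,\hat{\mb w}\rangle$. The anti-concentration Assumption~\ref{assum:anti} controls the slab probability: because the covariate part of $\mb v^\star_{jj'}$ equals $\mb a^\star_j-\mb a^\star_{j'}$, whose norm is at least the separation $\Delta$ of \eqref{eq:defkappa}, one obtains $\mathbb P(|\langle\mb\xi,\mb v^\star_{jj'}\rangle|\le t)\le(\gamma t^2/\Delta^2)^{\zeta}$, while the sub-Gaussianity of Assumption~\ref{assum:subg} controls the tail and fourth moment of $\langle\mb\xi,\hat{\mb w}\rangle$. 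Optimizing the truncation level and using that membership of $\mb\theta$ in $\mathcal{N}(\mb\theta^\star,\sqrt2\Delta\rho)$ makes $\|\mb w_{jj'}\|_2/\Delta$ of order $\rho$, the expectation is driven below the target coefficient $\tfrac{2}{5\gamma k}(\pi_{\min}/16)^{1+\zeta^{-1}}$; the specific choice of $\rho$ in \eqref{eq:choice_rho1} is precisely what calibrates this, and I would spend only half of the coefficient budget here, reserving the other half for the deviation.

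The remaining step is to upgrade the expectation bound to a uniform empirical statement valid simultaneously for every iterate, i.e.\ for all $\mb\theta\in\mathcal{N}(\mb\theta^\star,\sqrt2\Delta\rho)$ and all jointly $2s$-sparse directions $\hat{\mb w}$. I would follow the template of the sibling Lemmas~\ref{lem:Sparse_RIP}--\ref{lem:bnd_singular}: fix a support pattern, observe that the disagreement regions are sparse polytopes in the class $\mathcal{P}_{k,d,s}$ of \eqref{eq:Pkds}, discretize the cell parameters and the direction $\hat{\mb w}$ by nets, control the fluctuation of the integrand across the nets, and finally union-bound over the $\binom{d}{s}\le(ed/s)^s$ choices of support in $\mathcal{Z}_s$. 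Since the integrand $\bbone_{\{\cdot\}}\langle\mb\xi,\hat{\mb w}\rangle^2$ is a squared sub-Gaussian (hence sub-exponential) times a discontinuous indicator, a Bernstein-type tail after truncating the quadratic factor is required; the net cardinalities and the sub-exponential deviation together yield the three terms $sk\log(n/s)$, $s\log(d/s)$, and $\log(k/\delta)$, matching \eqref{n:difference_lemma}, with the prefactor $k^4\pi_{\min}^{-4(1+\zeta^{-1})}$ arising from inverting the squared target coefficient in the deviation budget.

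The \textbf{main obstacle} is this last uniform step. One must simultaneously (i) handle the discontinuity of the indicator over a continuum of sparse polytopes, which is what forces the covering/VC treatment and the extra $sk\log(n/s)$ factor beyond the plain $s\log(d/s)$ of Lemma~\ref{lem:Sparse_RIP}; (ii) tame the unbounded, heavy-tailed quadratic integrand through truncation and sub-exponential concentration; and (iii) keep the population bound sharp enough that the constant it produces still lands inside the prescribed coefficient $\tfrac{2}{5\gamma k}(\pi_{\min}/16)^{1+\zeta^{-1}}$. The delicate interplay is between the neighborhood radius $\rho$, which must be small enough to make the expectation tiny, and the sample size $n$, which must be large enough to make the deviation comparably tiny; reconciling the $\pi_{\min}$- and $k$-exponents so that the resulting $\lambda^2$ feeds correctly into \eqref{finaltau} is where the bookkeeping is most error-prone.
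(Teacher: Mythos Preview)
Your approach is essentially correct and, in fact, unpacks the argument that the paper compresses into a single citation. The paper's own proof of this lemma is one line: it invokes \citep[Lemma~7.7]{kim2023maxaffine}, which already establishes the inequality in the non-sparse setting, and then inflates the failure probability by $|\mathcal{Z}_s|=\binom{d}{s}\le(ed/s)^s$ via a union bound over support patterns. Everything you describe---the pointwise straddle-zero reduction $|\langle\mb\xi_i,\mb v^\star_{jj'}\rangle|\le\langle\mb\xi_i,\mb w_{jj'}\rangle$, the population bound via anti-concentration calibrated by the choice of $\rho$ in \eqref{eq:choice_rho1}, and the uniform deviation step via VC-type arguments and sub-exponential concentration---is precisely the content of that cited lemma, and your final union bound over $\mathcal{Z}_s$ is exactly the extra ingredient the paper adds.

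One small bookkeeping discrepancy: you predict three terms $sk\log(n/s)+s\log(d/s)+\log(k/\delta)$ in the sample complexity, whereas \eqref{n:difference_lemma} records only $s\log((n\vee d)/s)+\log(k/\delta)$. The $sk\log(n/s)$ contribution you anticipate from covering the polytope class is not needed here because, unlike in Lemma~\ref{lem:bnd_singular}, the indicator event $\{\mb x_i\in\mathcal C_j(\mb\theta)\cap\mathcal C_{j'}(\mb\theta^\star)\}$ is already dominated pointwise by the pairwise slab event you identified, and the subsequent uniformity is over directions $\hat{\mb w}$ in a single $2s$-sparse sphere rather than over the full class $\mathcal P_{k,d,s}$. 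This is why the cited Lemma~7.7 avoids the extra factor of $k$ in front of $s\log(n/s)$.
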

\begin{proof}
    The proof of this lemma is a direct application of the union bound by inflating the probability of error by $|\mathcal{Z}_s| = $$\binom{d}{s}$ $\leq \left(\frac{ed}{s}\right)^s$ to the statement in \citep[Lemma 7.7]{kim2023maxaffine}.    
\end{proof}
The next lemma provides an upper bound of a noise-related term that will appear in the proof of the main theorem.
\begin{lemma}\label{lem:noise_grad_bound}
    Let $\{\mb x_i\}_{i=1}^n$ be independent copies of a random vector $\mb x$ satisfying Assumptions \ref{assum:subg} and \ref{assum:anti}. 
    Let $\{z_i\}_{i=1}^n$ be i.i.d. sub-Gaussian random variables with zero mean and variance $\sigma^2_z$, independent of everything else. Then there exists an absolute constant $C$ for which it holds with probability at least $1-\delta$ that
    \begin{equation}\label{n:noise_lem}
    \sup\limits_{\substack{\mathcal{C} \in \mathcal{P}_{k,d,s}\\\mathcal{U} \in \mathcal{Z}_s}}\norm{\frac{1}{n} \sum_{i=1}^n \bbone_{\{\mb x_i\in \mathcal{C}  \}} z_i\mb{\tilde \Pi}_{\mathcal{U}}[\mb x_{i};1]}_2 \leq C\sigma_z\sqrt{\frac{sk\log\left(n/s\right)+s\log\left(d/s\right)+\log\left({1}/{\delta}\right)}{n}}.
    \end{equation}
\end{lemma}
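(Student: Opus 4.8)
The plan is to treat the displayed quantity as the supremum of an empirical process indexed by the triple $(\mathcal{C},\varphi,\mb u)$, where $\mb u$ arises from dualizing the Euclidean norm. Since $\mb{\tilde \Pi}_\varphi[\mb x_i;1]$ is supported on the $(s+1)$ coordinates indexed by $\varphi\cup\{d+1\}$, I would first write
\[
\norm{\frac{1}{n}\sum_{i=1}^n \bbone_{\{\mb x_i\in\mathcal{C}\}} z_i\,\mb{\tilde \Pi}_\varphi[\mb x_i;1]}_2 = \sup_{\mb u\in\mathbb{S}_\varphi}\frac{1}{n}\sum_{i=1}^n \bbone_{\{\mb x_i\in\mathcal{C}\}} z_i\,\langle\mb u,[\mb x_i;1]\rangle,
\]
where $\mathbb{S}_\varphi$ is the unit sphere inside $\mathrm{span}\{\mb e_l: l\in\varphi\cup\{d+1\}\}$. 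The key structural observation is that $\{z_i\}$ is independent of the covariates, so I would condition on $\{\mb x_i\}_{i=1}^n$ and treat the $z_i$ as the only randomness: for each fixed $(\mathcal{C},\varphi,\mb u)$ the inner sum is a weighted sum of independent zero-mean sub-Gaussian variables, hence conditionally sub-Gaussian with variance proxy $\frac{\sigma_z^2}{n^2}\sum_i \bbone_{\{\mb x_i\in\mathcal{C}\}}\langle\mb u,[\mb x_i;1]\rangle^2$.

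Next I would control this variance proxy uniformly. Dropping the indicator and writing the result as a restricted quadratic form gives $\frac{1}{n}\sum_i\langle\mb u,\mb{\tilde \Pi}_\varphi[\mb x_i;1]\rangle^2\le\norm{\mb{\tilde \Pi}_\varphi\left(\frac{1}{n}\sum_i[\mb x_i;1][\mb x_i;1]^\T\right)\mb{\tilde \Pi}_\varphi}$, which Lemma~\ref{lem:Sparse_RIP} (applied with $\omega_i\equiv1$) bounds by $1+\epsilon\le 2$ simultaneously over all $\varphi\in\mathcal{Z}_s$ on an event of probability $1-\delta/2$, provided $n\gtrsim s\log(d/s)+\log(1/\delta)$. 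On this event the conditional variance proxy is at most $2\sigma_z^2/n$, so the conditional sub-Gaussian tail yields $\mathbb{P}(\,\cdot>t\mid\{\mb x_i\})\le\exp(-nt^2/(4\sigma_z^2))$ for each fixed triple.

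The final step is a union bound over a discretization of the index set. The sphere $\mathbb{S}_\varphi$ admits a $1/4$-net of cardinality at most $9^{s+1}$, and the standard net argument converts $\sup_{\mb u}$ into a maximum over this net at the cost of an absolute constant; there are $\binom{d}{s}\le(ed/s)^s$ choices of $\varphi$. For $\mathcal{C}$, the summand depends on $\mathcal{C}$ only through the active set $\{i:\mb x_i\in\mathcal{C}\}$, which takes at most $\mathcal{N}$ distinct values as $\mathcal{C}$ ranges over $\mathcal{P}_{k,d,s}$; bounding $\log\mathcal{N}$ by the growth function of $k$-fold intersections of $s$-sparse halfspaces gives $\log\mathcal{N}\lesssim sk\log(n/s)+s\log(d/s)$, exactly the combinatorial complexity already exploited in Lemmas~\ref{mu_lemma} and \ref{lem:bnd_singular}. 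Combining these cardinalities, choosing $t=C\sigma_z\sqrt{(sk\log(n/s)+s\log(d/s)+\log(1/\delta))/n}$, and integrating out the conditioning event produces the claim, since the $s$- and $s\log 9$-terms from the $\mb u$-net and the $s\log(d/s)$-term from $\varphi$ are absorbed into the stated exponent.

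I expect the main obstacle to be the uniform control over the polytope class $\mathcal{P}_{k,d,s}$, i.e.\ rigorously bounding the number of distinct sign patterns $(\bbone_{\{\mb x_i\in\mathcal{C}\}})_{i=1}^n$: this requires combining the $\binom{d}{s}$ union over support patterns of the $k$ sparse normal vectors with the $O(sk)$ VC dimension of intersections of $k$ halfspaces in $\mathbb{R}^s$, and it is precisely this term that manufactures the $sk\log(n/s)$ contribution. An attractive shortcut, matching the style used for Lemmas~\ref{mu_lemma} and \ref{lem:model_difference}, is to instead invoke the non-sparse noise bound of \citet{kim2023maxaffine} for a fixed support $\varphi$ (where the effective ambient dimension is $s$ rather than $d$, giving a $\sqrt{(sk\log(n/s)+\log(1/\delta))/n}$ rate) and then inflate the failure probability by $\binom{d}{s}$ through a union bound over $\varphi\in\mathcal{Z}_s$, which replaces $\log(1/\delta)$ by $\log(\binom{d}{s}/\delta)\lesssim s\log(d/s)+\log(1/\delta)$ and reproduces the claimed bound directly.
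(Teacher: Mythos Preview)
Your proposal is correct, and the shortcut you describe at the end is exactly the paper's proof: it invokes \citep[Lemma~8.1, Eq.~46]{kim2023maxaffine} (the non-sparse noise bound with ambient dimension $s$) and then applies a union bound over $\mathcal{Z}_s$. The only minor discrepancy is that the paper inflates by $|\mathcal{Z}_s|^2=\binom{d}{s}^2$ rather than $\binom{d}{s}$, because the support of the polytope $\mathcal{C}\in\mathcal{P}_{k,d,s}$ and the projection index $\varphi\in\mathcal{Z}_s$ are two independent choices; this only changes $s\log(d/s)$ by a factor of $2$ and is absorbed into $C$.

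Your first, more detailed empirical-process argument is a valid self-contained alternative that unpacks what the cited lemma from \citet{kim2023maxaffine} is doing internally: conditioning on the covariates, using the sub-Gaussian tail in $z_i$, netting the sphere, and controlling the sign-pattern count of $\mathcal{P}_{k,d,s}$ via the Sauer--Shelah bound (this last step is precisely \eqref{eq:ub_worstcase_H_Pkds} in the paper). The two routes are equivalent in substance; the paper simply black-boxes the fixed-support case and reserves the writing effort for the union bound.
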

\begin{proof}
    The proof of the lemma follows directly from applying the union bound to the statement in \citep[lemma 8.1, Eq. 46]{kim2023maxaffine}. Since that statement only considers the supremum over non-sparse polytopes, whereas we consider the supremum over jointly sparse polytopes and sparse vector supports, we must inflate the probability of error $\delta$ by $|\mathcal{Z}_s|^2 = $$\binom{d}{s}^2$ $\leq \left(\frac{ed}{s}\right)^{2s}$.
\end{proof}

{\color{black}The next lemma provides a tail bound on the worst-case eigenvalue of the sum of covariate outer product with bounded cardinality.
\begin{lemma} \label{lemm:restriced_rip}
    Let $\delta\in (0,e^{-1})$ and $\alpha \in (0,1)$. Let $\{\mb x_i\}_{i=1}^n$ be independent copies of a random vector $\mb x$ that satisfies Assumption $\ref{assum:subg}$. Then, with probability at least $1-\delta$ we have
    \begin{equation}\label{eq:rip_bound}
        \sup_{\substack{\mathcal{I}:|\mathcal{I}|\leq\alpha n\\ \mathcal{U} \in\mathcal{Z}_s}} \lambda_1 \left[\tilde{\mb \Pi}_{\mathcal{U}}\left(\frac{1}{n}\sum_{i\in\mathcal{I}}[\mb x_i;1][\mb x_i;1]^\top\right)\tilde{\mb \Pi}_{\mathcal{U}}^\top\right] \leq C (\eta^2 \vee 1) \sqrt{\alpha},
    \end{equation}
    if
    \begin{equation}\label{eq:rip_bound_n}
        n\geq \alpha^{-1}\left[s\log(d/s)+\log(1/\delta)\right].
     \end{equation}
\end{lemma}
\begin{proof}
    For fixed $\mathcal{U} \in \mathcal{Z}_s$, \eqref{eq:rip_bound} follows directly from \citep[Theorem 5.7]{tan2019phase} if $n\geq d \vee \alpha^{-1}\log(1/\delta)$. Using the union bound and inflating the probability of error by $|\mathcal{Z}_s| = $$\binom{d}{s}$ $\leq \left(\frac{ed}{s}\right)^{s}$ completes the proof.
\end{proof}}

\section{Auxiliary Lemmas for Theorem \ref{theo:init}} \label{sec:lemmas_init}
The first lemma aims to show that the empirical moment matrix $\hat{\mb M}$, defined in Algorithm \ref{algo:sPCA}, is not rank deficient. In other words, its first $k$ dominant eigenvectors are a basis for the span of $\{\mb \theta_j^\star\}_{j=1}^k$. To state this lemma, we will make use of the definitions in \eqref{init_lemma_def} and \eqref{eq:varsigma}. The first lemma stated next is borrowed from \citep{ghosh2021max}.
\begin{lemma}\hspace{-0.1cm}\citep[Lemma 7]{ghosh2021max} \label{lem:Mrank}
Let $\mb x \sim \mathrm{Normal} (\mb 0,\mb I_d) $ , and $y$ be defined from $\mb x$ according to \eqref{eq:def_max_affine_model_xi}. Also, assume that $k\leq d$.
Then we have that the combination of the first and second moments satisfies
    \begin{align}
        & \mb M \succeq \mb 0_d, \quad \mathrm{rank}(\mb M) = k, \nonumber  \quad \lambda_{k}(\mb M)\geq \delta_{\mathrm{gap}},
    \end{align}
    for some numerical constant $\delta_{\mathrm{gap}}>0$ independent of the ambient dimension $d$.
\end{lemma}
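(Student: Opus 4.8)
The plan is to evaluate the population moments $\mb m_1$ and $\mb M_2$ in closed form by Gaussian integration by parts and to read off the three assertions from the resulting algebraic structure. First I would eliminate the noise: since $z$ is zero-mean and independent of $\mb x$, both $\E[z\mb x]=\mb 0$ and $\E[z(\mb x\mb x^\T-\mb I_d)]=\mb 0$, so that $\mb m_1=\E[f(\mb x)\mb x]$ and $\mb M_2=\E[f(\mb x)(\mb x\mb x^\T-\mb I_d)]$ with $f(\mb x)\triangleq\max_{j\in[k]}(\langle\mb a_j^\star,\mb x\rangle+b_j^\star)$. Because $\mb x\sim\mathrm{Normal}(\mb 0,\mb I_d)$, the first- and second-order Stein identities give $\mb m_1=\E[\nabla f(\mb x)]$ and $\mb M_2=\E[\nabla^2 f(\mb x)]$, where the Hessian is interpreted in the distributional sense since $f$ is only piecewise linear; the identities extend to this $f$ by mollification, as $f$ has at most linear growth and its distributional Hessian is a finite positive measure.

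Next I would make these two expressions explicit. The gradient $\nabla f$ is piecewise constant, equal to $\mb a_j^\star$ on the cell $\mc{C}_j(\mb \theta^\star)$, so $\mb m_1=\sum_{j=1}^k\pi_j^\star\mb a_j^\star$ with $\pi_j^\star\triangleq\P(\mb x\in\mc{C}_j(\mb \theta^\star))$ and $\sum_j\pi_j^\star=1$. Convexity of $f$ forces its distributional Hessian to be a nonnegative matrix-valued measure supported on the facets separating adjacent cells; across the facet between cells $j$ and $j'$ the gradient jumps by $\mb a_j^\star-\mb a_{j'}^\star$ along the facet normal, which after integrating against the Gaussian surface measure yields
\begin{equation*}
\mb M_2=\sum_{j<j'}w_{jj'}\,(\mb a_j^\star-\mb a_{j'}^\star)(\mb a_j^\star-\mb a_{j'}^\star)^\T,\qquad w_{jj'}\geq 0.
\end{equation*}
Since $\mb m_1\mb m_1^\T$ and $\mb M_2$ are both sums of positive-semidefinite rank-one matrices, $\mb M=\mb m_1\mb m_1^\T+\mb M_2\succeq\mb 0$, which is the first claim. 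Moreover every rank-one term has range inside $V\triangleq\mathrm{span}\{\mb a_j^\star\}_{j=1}^k$, so $\mathrm{col}(\mb M)\subseteq V$ and hence $\mathrm{rank}(\mb M)\leq k$.

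It remains to upgrade this to $\mathrm{rank}(\mb M)=k$ together with $\lambda_k(\mb M)\geq\delta_{\mathrm{gap}}$, which amounts to bounding $\mb u^\T\mb M\mb u$ below by a dimension-free positive constant over the unit sphere of $V$. The dimension independence is the transparent part: by rotational invariance of the standard Gaussian, the coefficients $\pi_j^\star$ and $w_{jj'}$ depend only on the projected configuration $\{\mb P_V\mb a_j^\star,b_j^\star\}$ living in the at-most-$k$-dimensional space $V$, so $\delta_{\mathrm{gap}}$ is a function of the intrinsic $k$-dimensional geometry alone, never of $d$. The main obstacle is the nondegeneracy: one must rule out any nonzero $\mb u\in V$ orthogonal simultaneously to $\mb m_1$ and to all differences $\mb a_j^\star-\mb a_{j'}^\star$ carrying positive weight. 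Here I would use that the weight vectors are linearly independent, so $\dim V=k$ while the differences span the $(k-1)$-dimensional subspace $W$ parallel to the affine hull of $\{\mb a_j^\star\}$; since $\mb m_1$ is a convex combination of the $\mb a_j^\star$ and linear independence forces $\mb 0\notin\mathrm{aff}\{\mb a_j^\star\}$, the component of $\mb m_1$ orthogonal to $W$ is nonzero and supplies the one direction missing from $W$, so $\mb M_2$ and $\mb m_1\mb m_1^\T$ together act nondegenerately on all of $V$. Positivity of the relevant weights, in turn, is guaranteed because each cell carries probability at least $\pi_{\min}>0$, so adjacent cells genuinely share facets of positive Gaussian surface measure. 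The dimension-free lower bound $\delta_{\mathrm{gap}}$ then follows from a compactness argument on this finite-dimensional configuration, matching the constant in \citep[Lemma 7]{ghosh2021max}.
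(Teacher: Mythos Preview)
The paper does not prove this lemma; it merely cites \citep[Lemma~7]{ghosh2021max} and moves on, so there is no in-paper argument to compare against. Your sketch via Stein's identities is the natural route and is, in broad strokes, what the cited reference does: the first-order identity gives $\mb m_1=\sum_j\pi_j^\star\mb a_j^\star$, the second-order identity expresses $\mb M_2$ as a nonnegative combination of the rank-one matrices $(\mb a_j^\star-\mb a_{j'}^\star)(\mb a_j^\star-\mb a_{j'}^\star)^\T$, and $\mb M\succeq\mb 0$ with $\mathrm{col}(\mb M)\subseteq V$ follows immediately. The dimension-independence is exactly the rotation-invariance observation you make.

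Two points deserve tightening. First, your rank argument quietly assumes that the differences $\mb a_j^\star-\mb a_{j'}^\star$ \emph{carrying positive weight} already span the $(k-1)$-dimensional space $W$. But for $k\geq 3$ not every pair of cells is adjacent, so some $w_{jj'}$ vanish; what you need is that the adjacency graph of the cells $\{\mc C_j(\mb\theta^\star)\}_{j=1}^k$ is connected, so that the adjacent-pair differences generate all differences. This holds because each $\pi_j^\star>0$ makes every cell full-dimensional and their closures cover the connected space $\mbb R^d$, but you should say so rather than assert that ``adjacent cells genuinely share facets,'' which addresses positivity of individual weights, not spanning. Second, the closing ``compactness argument'' is a red herring: once $\mathrm{rank}(\mb M)=k$ you already have $\lambda_k(\mb M)>0$, and this number depends only on the $k$-dimensional projected configuration by the rotation-invariance you already invoked; that \emph{is} the constant $\delta_{\mathrm{gap}}$, and no further argument is required. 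Finally, note that you (correctly) use linear independence of $\{\mb a_j^\star\}_{j=1}^k$, which is implicit in the claim $\mathrm{rank}(\mb M)=k$ but not stated in the lemma; it is worth flagging as a standing hypothesis.
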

The next lemma states the concentration of the empirical moments around their expectations.
\begin{lemma}\label{lem:empirical_init_concentration}
Let $\mb x \sim \mathrm{Normal} (\mb 0,\mb I_d) $ , and $y$ be defined from $\mb x$ according to \eqref{eq:def_max_affine_model_xi}. Then there exists absolute constants $C_1,C_2>0$ such that 
    \begin{align} 
    &\mathbb{P} \left( \norm{\mb m_1\mb -\mb{\hat m_1}}_\infty \geq C_1(\varsigma + \sigma_z)\frac{\log(n d)}{\sqrt{n}}\right) \leq n^{-11}, \nonumber \\
     &\mathbb{P} \left( \norm{\mb M_2-\mb{\hat M}_2}_\infty \geq C_2(  \varsigma+ \sigma_z)\log(nd)\left(\frac{1}{\sqrt{n}} \vee \frac{\sqrt[6]{\log(nd)}}{\sqrt[3]{n^2}}\right)\right) \leq n^{-12}.
\end{align}
\end{lemma}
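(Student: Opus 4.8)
The plan is to reduce both $\ell_\infty$-norm concentration statements to scalar concentration of i.i.d. averages, entry by entry, and then to pay for a union bound over the $d$ entries of $\mb m_1$ and the $d^2$ entries of $\mb M_2$. The starting point is a deterministic control of the response: since $y=\max_{j\in[k]}(\langle \mb a_j^\star,\mb x\rangle+b_j^\star)+z$, we have $|y|\leq \varsigma(\norm{\mb x}_\infty+1)+|z|$ with $\varsigma$ as in \eqref{eq:varsigma} (using $\max_j(A_j+B_j)\leq\max_j A_j+\max_j B_j$ together with $\norm{\mb a_j^\star}_1\vee|b_j^\star|\leq\varsigma$), and, more usefully, that $y$ is sub-Gaussian with $\norm{y}_{\psi_2}\lesssim \varsigma+\sigma_z$, because the maximum of $k$ affine images of a standard Gaussian vector is sub-Gaussian (with a constant depending only on the fixed $k$) and $z$ is sub-Gaussian by hypothesis. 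This single observation fixes the tail order of every summand below.

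For the first moment, write $[\hat{\mb m}_1-\mb m_1]_l=\frac1n\sum_{i=1}^n\big(y_i[\mb x_i]_l-\mathbb{E}[y[\mb x]_l]\big)$ for each $l\in[d]$. The summand $y[\mb x]_l$ is a product of two sub-Gaussian variables and is therefore sub-exponential, with $\norm{y[\mb x]_l}_{\psi_1}\lesssim \varsigma+\sigma_z$. Applying Bernstein's inequality for sub-exponential averages and then a union bound over the $d$ coordinates (absorbing the factor $d$ and the target failure level $n^{-11}$ into $\log(1/\delta)\lesssim\log(nd)$) yields the first claim; in the relevant regime $n\gtrsim\log(nd)$ the Gaussian branch of Bernstein dominates and the exponential branch is absorbed into the stated $\log(nd)/\sqrt n$ rate.

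The second moment is the substantive case. For each pair $(l,m)$ write $[\hat{\mb M}_2-\mb M_2]_{lm}=\frac1n\sum_i\big(y_i([\mb x_i]_l[\mb x_i]_m-\delta_{lm})-\mathbb{E}[y([\mb x]_l[\mb x]_m-\delta_{lm})]\big)$. Now the summand is a product of three sub-Gaussian factors, hence only $\psi_{2/3}$ (sub-Weibull of order $2/3$): $\norm{y([\mb x]_l[\mb x]_m-\delta_{lm})}_{\psi_{2/3}}\lesssim\varsigma+\sigma_z$, where the diagonal ($l=m$) and off-diagonal ($l\neq m$) cases are both covered because $[\mb x]_l[\mb x]_m-\delta_{lm}$ is $\psi_1$ with bounded norm in either case. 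I would then invoke a Bernstein-type inequality for sums of $\psi_{2/3}$ variables, which produces two regimes — a Gaussian regime contributing the $1/\sqrt n$ term and a heavy-tail regime contributing the $\log^{1/6}(nd)/n^{2/3}$ term — followed by a union bound over the $d^2$ entries at failure level $n^{-12}$, again folding $d^2n^{12}$ into $\log(1/\delta)\lesssim\log(nd)$. Equivalently one may truncate the coordinates of $\mb x$ at a radius $R$, apply the bounded-difference Bernstein inequality to the truncated summands, and separately bound the (exponentially small) truncation probability and bias; balancing the truncation level against the Bernstein deviation is what generates the non-standard $n^{-2/3}$ scaling.

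The main obstacle is precisely this heavy-tailed control of $\hat{\mb M}_2$: products of three sub-Gaussians have tails too heavy for the ordinary sub-exponential Bernstein inequality, so the clean $\sqrt{\log(nd)/n}$ rate is unavailable and the second, heavier regime must be tracked explicitly. Two further technical points need care: the dependence between $y$ and $\mb x$ (so one cannot simply condition on $y$ and treat each entry as an independent-weight quadratic form — the deterministic bound $|y|\lesssim\varsigma(\norm{\mb x}_\infty+1)+|z|$ is what sidesteps this), and verifying that any truncation-induced bias $|\mathbb{E}[y([\mb x]_l[\mb x]_m-\delta_{lm})\bbone_{\{\cdot\}}]|$ is negligible relative to the target rate. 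The first-moment bound, by contrast, is routine.
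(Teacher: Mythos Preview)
Your proposal is correct and the second-moment argument is essentially identical to the paper's: both recognize that the summands $y_i([\mb x_i]_l[\mb x_i]_m-\delta_{lm})$ are sub-Weibull of order $2/3$ (as a product of three sub-Gaussians, or equivalently a sub-Gaussian times a sub-exponential) and invoke a generalized Bernstein inequality for $\psi_{2/3}$ variables followed by a union bound over the $d^2$ entries, which is precisely what produces the two-regime rate $\sqrt{\log(nd)/n}\vee(\log(nd)/n)^{2/3}$.

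For the first moment, your route is actually cleaner than the paper's. The paper introduces truncation events $\mathcal{E}_i=\{\norm{\mb x_i}_\infty\leq 5\sqrt{\log(end)}\}$ and $\mathcal{F}_i=\{z_i\leq 5\sigma_z\sqrt{\log n}\}$, then splits $\hat{\mb m}_1-\mb m_1$ into a truncated piece (handled by Hoeffding, since the truncation makes the max-affine part deterministically bounded by $5\varsigma\sqrt{\log(end)}$), a complement-indicator piece (handled by sub-exponential Bernstein), and a noise piece handled under $\cap_i\mathcal{F}_i$. This extra $\sqrt{\log(nd)}$ in the $\psi_2$ norm of the truncated summand is what produces the stated $\log(nd)/\sqrt n$ rate. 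Your direct sub-exponential Bernstein on $y[\mb x]_l$ avoids the decomposition entirely and yields the sharper $\sqrt{\log(nd)/n}\vee\log(nd)/n$, which in particular dominates the stated bound. The truncation machinery buys nothing here; your approach is the more efficient one.
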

A direct consequence of these two lemmas is the concentration of moment matrix $\hat{\mb M}$ around its expectation $\mb M$ stated in Lemma \ref{lem:Mtotal}.
\begin{lemma}[ Maximum of Sub-exponential Random Variables]
    \label{lem:max_subexpo}
    Let $\{X_j\}_{j=1}^k$ have $\norm{X_j}_{\psi_1} < \infty$ for all $j \in [k]$, then we have that 
    \[
    \norm{\max_{j \in [k]} |X_j|}_{\psi_1}\leq \log_2 (2k) \max_{j \in [k]} \norm{X_j}_{\psi_1}.
    \]
\end{lemma}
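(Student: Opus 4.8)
The plan is to work directly from the definition of the sub-exponential norm, $\norm{X}_{\psi_1} = \inf\{t>0 : \E[\exp(|X|/t)]\leq 2\}$, and to exhibit one explicit admissible value of $t$ for the random variable $\max_{j\in[k]}|X_j|$. Writing $K \triangleq \max_{j\in[k]}\norm{X_j}_{\psi_1}$, I would set $t \triangleq \log_2(2k)\cdot K$ and aim to verify $\E[\exp(\max_{j\in[k]}|X_j|/t)] \leq 2$; by the definition of $\norm{\cdot}_{\psi_1}$ this immediately yields the claimed bound $\norm{\max_{j\in[k]}|X_j|}_{\psi_1}\leq t = \log_2(2k)\,K$.

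The hard part is that the naive route fails. Since $K\geq \norm{X_j}_{\psi_1}$ and $\exp$ is increasing, each term satisfies $\E[\exp(|X_j|/K)]\leq 2$, but bounding the expectation of the maximum by the sum of expectations only gives $\E[\max_j \exp(|X_j|/K)]\leq 2k$, which is off by the factor $k$. The key device, and the step I expect to matter most, is to extract the exponent $1/c$ with $c \triangleq t/K = \log_2(2k)$ \emph{before} taking expectations. Concretely, I would rewrite
\[
\exp\!\left(\frac{\max_{j\in[k]}|X_j|}{t}\right) = \left(\max_{j\in[k]}\exp\!\left(\frac{|X_j|}{K}\right)\right)^{1/c}.
\]
Since $c\geq 1$ for every $k\geq 1$ (as $2k\geq 2$), the map $x\mapsto x^{1/c}$ is concave on $[0,\infty)$, so Jensen's inequality lets me pull the expectation inside the power in the correct direction.

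Combining these ingredients gives
\[
\E\!\left[\exp\!\left(\frac{\max_{j\in[k]}|X_j|}{t}\right)\right] \leq \left(\E\!\left[\max_{j\in[k]}\exp\!\left(\frac{|X_j|}{K}\right)\right]\right)^{1/c} \leq \left(\sum_{j=1}^k \E\!\left[\exp\!\left(\frac{|X_j|}{K}\right)\right]\right)^{1/c} \leq (2k)^{1/c},
\]
where the middle inequality uses $\max\leq\sum$ and the last uses the $\psi_1$ bound term by term. The proof then closes with the elementary identity $(2k)^{1/\log_2(2k)}=\exp\!\big(\ln(2k)/\log_2(2k)\big)=\exp(\ln 2)=2$, so $t$ is admissible and the lemma follows. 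The only point requiring care is the direction of Jensen's inequality, which is guaranteed by $c=\log_2(2k)\geq 1$; everything else is routine.
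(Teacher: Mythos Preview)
Your proof is correct and takes a genuinely different route from the paper's. The paper argues via tail probabilities: it applies the union bound to get $\mathbb{P}(\max_j|X_j|>t)\leq \min\{1,\,2k\exp(-t/(C\max_j\norm{X_j}_{\psi_1}))\}$ and then manipulates this into the form $\min\{1,\,2\exp(-t/(C\log_2(2k)\max_j\norm{X_j}_{\psi_1}))\}$, finally invoking the tail-bound characterization of the $\psi_1$ norm to read off the conclusion. Your approach instead works directly with the Orlicz-norm definition, using the concavity of $x\mapsto x^{1/c}$ for $c=\log_2(2k)\geq 1$ together with Jensen's inequality to reduce to the termwise bound $\E[\exp(|X_j|/K)]\leq 2$. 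Your argument is arguably cleaner: it is self-contained and delivers the constant $\log_2(2k)$ exactly, whereas the paper's route passes through the equivalence between the tail and Orlicz characterizations of $\norm{\cdot}_{\psi_1}$, which in general carries an unspecified absolute constant. The paper's approach, on the other hand, is perhaps more robust if one only has tail information on the $X_j$ rather than a finite Orlicz norm per se.
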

\begin{proof}
    The proof of this lemma follows from 
    \begin{align*}
       \mathbb{P}(\max_{j \in [k]} |X_j|>t) & \leq \sum_{j=1}^k\mathbb{P}(|X_j|>t) \leq 1\wedge\sum_{j=1}^k  2\exp\left\{{\frac{-t}{C\norm{X_j}}_{\psi_1}} \right\}, \\
       & \leq 1\wedge 2k\exp\left\{\frac{-t}{C\underset{j\in [k]}{\max} \norm{X_j}_{\psi_1}}\right\} = 1\wedge \exp \left\{\frac{-t}{C\underset{j\in [k]}{\max} \norm{X_j}_{\psi_1}} + \log 2k \right\}
       ,\\ & \leq 1\wedge 2\exp\left\{\frac{-t }{C\log _2(2k)\underset{j\in [k]}{\max} \norm{X_j}_{\psi_1}}\right\} ,
    \end{align*}
    for all $t>0$ where the second inequality follows from the tail probability of any sub-exponential random variable \citep[Proposition 2.7.1]{vershynin2018high}. Again, comparing this tail probability with the last inequality yields the assertion in this lemma. A simple graphical example is provided in Figure \ref{fig:maxsubexpo}, assuming that $C\underset{j\in [k]}{\max} \norm{X_j}_{\psi_1} =1$, where we compare the last two tail probabilities in the statement of the proof. 
    \end{proof}
\begin{figure}
        \centering
        \includegraphics[scale=0.8]{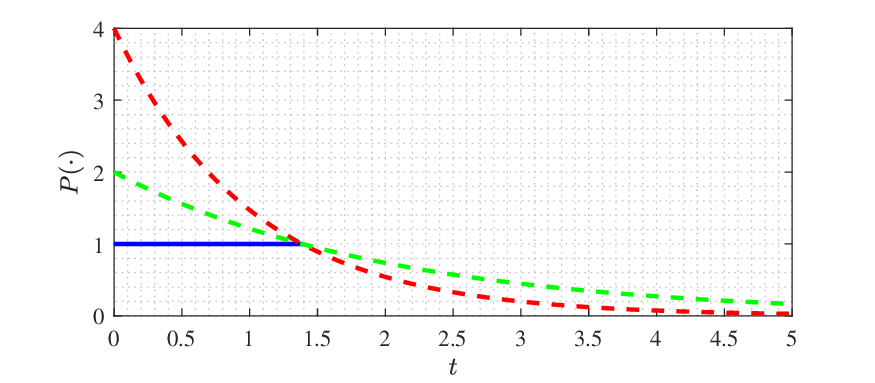} 
\caption{ Comparison of tail probabilities as a function of the threshold $t>0$ showing the trivial bound (blue), the tail $2 \exp\{\frac{-t}{\log_2(2k)}\}$ (green), and $ \exp\{-t+\log(2k)\}$ (red) with $k=2$.
}
\label{fig:maxsubexpo}
\end{figure}

\section{Proof of Lemmas} \label{App:Proofs}
\subsection{Proof of Lemma \ref{lem:Mtotal}}
    The proof of this lemma follows from
    \begin{align}
    \norm{\mb{\hat M} - \mb{  M}}_\infty&\leq \norm{\mb {\hat M}_2 - \mb M_2}_\infty+ \norm{\mb{\hat m}_1\mb{\hat m}_1^\T - \mb { m}_1\mb { m}_1^\T}_\infty  \nonumber \\
    & \leq \norm{\mb {\hat M}_2 - \mb M_2}_\infty + \norm{\mb{\hat m}_1 - \mb { m}_1}_\infty^2 +  2\norm{\mb{\hat m}_1 - \mb { m}_1}_\infty \cdot \norm{\mb m_1 }_\infty \nonumber \\
    & \leq  C(  \varsigma^2+ \sigma_z^2)\left(\frac{\log^2(nd)}{n} \vee \frac{\log(nd)}{\sqrt{n}}\right),
\end{align}
where the last inequality follows from Lemma \ref{lem:empirical_init_concentration}.

\subsection{Proof of Lemma \ref{lem:empirical_init_concentration}}
In what follows, we provide a proof for a stronger result that holds under a weaker assumption that $\mb x $ is a sub-Gaussian random vector symmetric about the origin, i.e. $\bm x \overset{\text{d}}{\sim}-\bm x$.
 Recall that a sub-Gaussian random vector $\mb x \in \mathbb R^d$ satisfies
\begin{equation}
     \mathbb P \left( \norm{\mb x}_\infty \geq t\right) \leq 2d \exp\left\{-\frac{c t^2}{\norm{\mb x}^2_{\psi_2}}\right\},
\end{equation}
for some absolute constant $c>0$. Next, define two events as
\begin{align} \label{eq:Trunc_events}
    \mathcal{E}_{i} = \left\{ \norm{\mb x_i}_\infty\leq 5 \sqrt{\log (end)}\right\}, \quad 
    \mathcal{F}_i = \left\{  z_i\leq 5\sigma_z \sqrt{\log(n)} \right\}, \quad \forall i\in [n],
\end{align}
where each event holds with probability at least $1-C_1n^{-12}$ for some absolute constant $C_1>0$. Next, for all $i \in [n]$, define the truncated covariate vector $\mb{\tilde x}_i = \mb x_i\bbone_{\mathcal{E}_{i}}$. With this definition we can begin bounding the moment concentrations. 
We notice that $\mb{\tilde x}_i - \mb x_i= \mb{\tilde x}_i- (\bbone_{\{\mathcal{E}_{i}\}}+ \bbone_{\{\mathcal{E}^\mathsf{c}_{i}\}})\mb x_i =  -\bbone_{\{\mathcal{E}^\mathsf{c}_{i}\}}\mb x_i$, and $\bbone_{\mathcal{E}_i}\max\limits_{j \in [k]}\langle \mb \theta^\star_j,[\tilde{\mb x}_i,1] \rangle= \bbone_{\mathcal{E}_i}\max\limits_{j \in [k]}\langle \mb \theta^\star_j,[\mb x_i,1] \rangle$ for all $i\in [n]$. Therefore, the difference of the first moments can be written as
\begin{align}\label{eq:expand_m1}
    \mb{\hat m}_1 -\mb m_1 &= \frac{1}{n} \sum\limits_{i=1}^n \left(\mb x_i\max_{j \in [k]}\langle \mb \theta^\star_j, [\mb x;1]\rangle  - \mathbb E\left[\mb x \max_{j \in [k]}\langle \mb \theta^\star_j, [\mb x;1] \rangle\right] +  z_i \mb x_i \right) \nonumber \\
    & =\frac{1}{n}\sum\limits_{i=1}^n (\mb x_i\max_{j \in [k]}\langle \mb \theta^\star_j, [\mb x;1]\rangle  -  \mb{\tilde x}_i \max_{j \in [k]}\langle \mb \theta^\star_j, [\mb{\tilde x}_i;1] \rangle \nonumber \\
    &\quad+ \mathbb{E}\left[\mb{\tilde x}_i \max_{j \in [k]}\langle \mb \theta^\star_j, [\mb{\tilde x}_i;1] \rangle\right] - \mathbb{E}\left[\mb{ x}_i \max_{j \in [k]}\langle \mb \theta^\star_j, [\mb{ x}_i;1] \rangle\right])  \nonumber \\
    &\quad +\frac{1}{n}\sum\limits_{i=1}^n\left( \mb{\tilde x}_i \max_{j \in [k]}\langle \mb \theta^\star_j, [\mb{\tilde x}_i;1] \rangle - \mathbb{E}\left[\mb{\tilde x}_i \max_{j \in [k]}\langle \mb \theta^\star_j, [\mb{\tilde x}_i;1] \rangle\right]\right)+ \frac{1}{n}\sum\limits_{i=1}^nz_i \mb x_i \nonumber \\
    & = \underbrace{ \frac{1}{n}\sum\limits_{i=1}^n \left(\bbone_{\mathcal{E}^\mathsf{c}_{i}}\mb{x}_i \max_{j \in [k]}\langle \mb \theta^\star_j, [\mb{ x}_i;1] \rangle-\mathbb{E}\left[\bbone_{\mathcal{E}^\mathsf{c}_{i}}\mb{x}_i \max_{j \in [k]}\langle \mb \theta^\star_j, [\mb{ x}_i;1] \rangle\right]\right)}_{\mathcal{M}_1}\nonumber \\
    &\quad + \underbrace{\frac{1}{n}\sum\limits_{i=1}^n\left( \mb{\tilde x}_i \max_{j \in [k]}\langle \mb \theta^\star_j, [\mb{\tilde x}_i;1] \rangle - \mathbb{E}\left[\mb{\tilde x}_i \max_{j \in [k]}\langle \mb \theta^\star_j, [\mb{\tilde x}_i;1] \rangle\right]\right)}_{\mathcal{M}_2} +\frac{1}{n}\sum\limits_{i=1}^nz_i \mb x_i
\end{align}
We now bound the max norm of the quantities in \eqref{eq:expand_m1}. First, we prove that each summand in $\mathcal{M}_1$ is sub-exponential. For this purpose, we write
\begin{align}\label{eq:M1_All_bound}
\norm{\bbone_{\mathcal{E}^\mathsf{c}_{i}}\mb{x}_i \max_{j \in [k]}\langle \mb \theta^\star_j, [\mb{ x}_i;1] \rangle}_{\psi_1} &\leq \norm{\bbone_{\mathcal{E}^\mathsf{c}_{i}}\mb{x}_i}_{\psi_2} \norm{\max_{j \in [k]}\langle \mb \theta^\star_j, [\mb{ x}_i;1] \rangle}_{\psi_2} \nonumber \\
& \overset{\mathrm{(i)}}{\leq} C \norm{\mb x_i}_{\psi_2}\cdot \sqrt{\log(k)} \max_{j \in [k]} \norm{\langle \mb \theta^\star_j, [\mb{ x}_i;1] \rangle}_{\psi_2} \nonumber \\
& \leq C \sqrt{\log(k)} \max_{j \in [k]}\left(\norm{\mb a_j^\star}_2 + |b_j^\star| \right) \overset{\mathrm{(ii)}}{\leq}C \varsigma\sqrt{\log(k)},
\end{align}
where (i) follows from Lemma \ref{lem:max_subexpo} and the fact that the indicator function is always dominated by $1$; and  (ii) follows by recalling the definition $\varsigma = \underset{j \in [k]}{\max}(\norm{\mb a^\star_j}_1+|b_j^\star|)$ from \eqref{eq:varsigma} and the fact the $\ell_1$ norm dominates the $\ell_2$ norm. Therefore, we can apply Bernstein's inequality for sub-exponential random variables along with the union bound over $d$ entries to get
\begin{equation}\label{eq:M1_ber}
    \mathbb{P} \left( \norm{\mathcal{M}_1}_\infty \geq C \epsilon \sqrt{\frac{\log(k)}{n}} \right) \leq de^{-\epsilon}.
\end{equation}
Choosing $\epsilon = 12\frac{\log(nd)}{\log(k)}$, we get 
\begin{equation}\label{eq:M1_final}
    \mathbb{P} \left( \norm{\mathcal{M}_1}_\infty \geq C  \frac{\log(nd)}{\sqrt{n}}\right) \leq n^{-12}.
\end{equation}
Next, we show that each summand in $\mathcal{M}_2$ is sub-Gaussian. For this purpose, we present a bound as
\begin{align} \label{eq:det_max_bound}
    \left| \max_{j \in [k]}\langle \mb \theta^\star_j, [\mb{\tilde x}_i;1] \rangle\right| &\leq \max_{j \in [k]}\left[  \left|\langle\mb a_j^\star,\mb {\tilde x_i} \rangle \right| + |b_j^\star|\right] \leq \max_{j \in [k]}\left[  \norm{\mb a_j^\star}_1\cdot\norm{\mb {\tilde x_i}}_\infty   + |b_j^\star|\right]\nonumber \\
    &\leq \max_{j \in [k]} \left[5\norm{\mb a^\star_j}_1\sqrt{\log(end)}+ |b_j^\star|\right] \leq 5\varsigma\sqrt{\log(end)},\quad \forall i \in [n],
\end{align}
using the definition of $\varsigma = \underset{j \in [k]}{\max}(\norm{\mb a^\star_j}_1+|b_j^\star|)$ from \eqref{eq:varsigma} and the upper bound on $\mb{\tilde x}_i$ by the truncation in \eqref{eq:Trunc_events}. Therefore, for all $i \in [n]$, we have that 
\begin{align}
    \norm{ \mb{\tilde x}_i \max_{j \in [k]}\langle \mb \theta^\star_j, [\mb{\tilde x}_i;1] \rangle}_{\psi_2}  \overset{}{\leq}C \norm{{\mb x}_i}_{\psi_2}\left| \max_{j \in [k]}\langle \mb \theta^\star_j, [\mb{\tilde x}_i;1] \rangle\right| \leq C \varsigma\sqrt{\log (end)},
\end{align}
where the last inequality follows from \eqref{eq:det_max_bound}.
Therefore, we have by Hoeffding's inequality and the union bound over $d$ entries that
\begin{equation}
\mathbb{P}\left(\left| \norm{\mathcal{M}_{2}}_\infty\right|\geq \epsilon\right) \leq 2 d\exp \left\{\frac{-n\epsilon^2}{2C \varsigma^2\log(nd)} \right\}.
\end{equation} 
The next statement follows by the union bound where we inflate the probability of error by $d $. Therefore, with $\epsilon= 5\varsigma\log(nd)/\sqrt{n}$, we have that
\begin{equation}\label{eq:M2_final}
    \mathbb{P}\left( \norm{\mathcal{M}_{2}}_\infty \geq C\varsigma \frac{\log(nd)}{\sqrt{n}}\right) \leq n^{-12}.
\end{equation}
{\color{black}
Let $\{r_i\}_{i=1}^n$ be independent copies of a Rademacher random variable. Notice that $\mb x_i \overset{\text{d}} {\sim}\mb x_i r_i$ for every $i\in [n]$ by the symmetry of the Gaussian distribution. Therefore under the event $\bigcap\limits_{i=1}^n \mathcal{F}_{i}$, it holds that 
\begin{align*}
\mathbb{P}\left( \norm{\frac{1}{n}\sum_{i=1}^n \frac{z_i}{5 \sigma_z \sqrt{\log n}} \mb x_i}_\infty \geq t \right)  &= \mathbb{P}\left( \norm{\frac{1}{n}\sum_{i=1}^n \frac{z_i}{5 \sigma_z \sqrt{\log n}} \mb x_i r_i}_\infty \geq t \right) 
\nonumber \\
&\leq 2 \mathbb{P}\left( \norm{\frac{1}{n}\sum_{i=1}^n \mb x_i r_i}_\infty \geq t \right)   \nonumber \\
&=  2\mathbb{P}\left( \norm{\frac{1}{n}\sum_{i=1}^n \mb x_i }_\infty \geq t \right),
\end{align*}
where the first inequality follows from the contraction principle in \citep[Theorem 4.4]{ledoux2013probability}.
}
Therefore, under the event $\bigcap\limits_{i=1}^n \mathcal{F}_{i}$, it holds with probability at least $1-2n^{-11}$ that  
\begin{equation}\label{eq:M4_start}
    \norm{\frac{1}{n}\sum\limits_{i=1}^n z_i\mb x_i}_\infty \leq C \sigma_z \sqrt{\log(n)}\norm{\frac{1}{n}\sum\limits_{i=1}^n \mb x_i}_\infty.
\end{equation}
The term on the right-hand side can be bounded by Hoeffding's inequality and the union bound over $d$ entries as
\begin{equation*}
    \mathbb P \left(\norm{\frac{1}{n}\sum\limits_{i=1}^n [\mb x_i]_l }_\infty\geq \epsilon \right) \leq 2d\exp\left\{\frac{- n \epsilon^2}{C}\right\},
\end{equation*}
for some index $l \in [d]$. Finally, setting $\epsilon= (12\log(nd)/  n)^{1/2}$, we get
\begin{equation} \label{eq:x_infty_bound}
    \mathbb P \left(\norm{\frac{1}{n}\sum\limits_{i=1}^n \mb x_i}_\infty\geq C\sqrt{\frac{12\log(nd)}{ n}} \right) \leq 2n^{-12}.
\end{equation}
Combining \eqref{eq:M4_start} and \eqref{eq:x_infty_bound}, we get 
\begin{equation}\label{eq:M4_final}
    \mathbb P \left(\norm{\frac{1}{n}\sum\limits_{i=1}^n z_i\mb x_i}_\infty\geq C\sigma_z\frac{\log(n d)}{ \sqrt{n}} \right) \leq 2n^{-11}.
\end{equation}
Finally, we combine \eqref{eq:M1_final},\eqref{eq:M2_final},  and \eqref{eq:M4_final} to get
\begin{equation}
    \norm{\mb m_1 -\mb{\hat m}_1}_\infty \leq  \norm{\mathcal{M}_1}_\infty +\norm{\mathcal{M}_2}_\infty+\norm{\frac{1}{n}\sum\limits_{i=1}^n z_i \mb x_i}_\infty \leq C (\varsigma + \sigma_z)\frac{\log(n d)}{ \sqrt{n}},
\end{equation}
with probability at least $1-n^{-11}$. We proceed in a similar fashion and decompose the difference of the second moments as
\begin{align}\label{eq:M2diff_decomp}
    \hat{\mb M}_2 - \mb M_2 &= \frac{1}{n}\sum_{i=1}^n \max_{j \in [k]}\langle \mb \theta^\star_j, [\mb{ x}_i;1] \rangle (\mb x_i \mb x_i^\T -\mb I_d) -\mathbb E \left[ \max_{j \in [k]}\langle \mb \theta^\star_j, [\mb{ x}_i;1] \rangle (\mb x_i \mb x_i^\T -\mb I_d)\right]\nonumber\\
    &\quad+\frac{1}{n} \sum_{i=1}^n z_i (\mb x_i \mb x_i^\T -\mb I_d) \nonumber \\
    & = \underbrace{\frac{1}{n}\sum_{i=1}^n \left(\max_{j \in [k]}\langle \mb \theta^\star_j, [\mb{ x}_i;1] \rangle (\mb x_i \mb x_i^\T ) -\mathbb E \left[ \max_{j \in [k]}\langle \mb \theta^\star_j, [\mb{ x}_i;1] \rangle (\mb x_i \mb x_i^\T )\right]\right)}_{\mathcal{Q}_1} \nonumber \\
    & \quad - \underbrace{\mb I_d \cdot \frac{1}{n}\sum_{i=1}^n \left(\max_{j \in [k]}\langle \mb \theta^\star_j, [\tilde{\mb x}_i;1] \rangle  -\mathbb E \left[ \max_{j \in [k]}\langle \mb \theta^\star_j, [\tilde{\mb x}_i;1] \rangle \right]\right)}_{\mathcal{Q}_2}+\frac{1}{n} \sum_{i=1}^n z_i (\mb x_i \mb x_i^\T -\mb I_d) 
\end{align}
To proceed, we define a generalization of sub-exponential and sub-Gaussian random variables as sub-Weibull random variables \citep{bakhshizadeh2023algebra}. 
\begin{definition}
    A random variable X is sub-Weibull of order $\alpha>0$, denoted as SW($\alpha$), if its sub-Weibull norm satisfies
    \begin{equation}
        \norm{X}_{\psi_{\alpha^{-1}}}  = \inf \left\{t: \exp \left(\left|\frac{X}{t}\right|^{\alpha^{-1}}\right) \leq 2\right\} < \infty.
    \end{equation} 
\end{definition}
Note that sub-Gaussian random variables are SW($1/2$), and sub-exponential random variables are SW($1$). Next, we provide a useful lemma on algebra of sub-Weibull norms.
\begin{lemma}
    Let X be SW($\alpha$) and Y be SW($\theta$) then we have that
     \begin{equation}
         \norm{XY}_{\psi_{(\alpha+\theta)^{-1}}} \leq  \norm{X}_{\psi_{\alpha^{-1}}} \cdot \norm{Y}_{\psi_{\theta^{-1}}}.
     \end{equation}
\end{lemma}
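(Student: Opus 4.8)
The plan is to prove the inequality by a two-stage argument: first reduce to the normalized case using the positive homogeneity of the sub-Weibull (Orlicz) norm, and then combine Young's inequality at the pointwise level with H\"older's inequality at the level of expectations. The norm $\norm{\cdot}_{\psi_{\beta^{-1}}}$ is positively homogeneous, so $\norm{XY}_{\psi_{(\alpha+\theta)^{-1}}} = \norm{X}_{\psi_{\alpha^{-1}}}\norm{Y}_{\psi_{\theta^{-1}}}\cdot\norm{\widetilde X\widetilde Y}_{\psi_{(\alpha+\theta)^{-1}}}$ where $\widetilde X = X/\norm{X}_{\psi_{\alpha^{-1}}}$ and $\widetilde Y = Y/\norm{Y}_{\psi_{\theta^{-1}}}$. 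Thus it suffices to treat the case $\norm{X}_{\psi_{\alpha^{-1}}} = \norm{Y}_{\psi_{\theta^{-1}}} = 1$, for which the definition of the norm gives $\mathbb{E}\exp\!\big(\lvert X\rvert^{1/\alpha}\big)\leq 2$ and $\mathbb{E}\exp\!\big(\lvert Y\rvert^{1/\theta}\big)\leq 2$, and the goal becomes showing $\mathbb{E}\exp\!\big((\lvert X\rvert\,\lvert Y\rvert)^{1/(\alpha+\theta)}\big)\leq 2$, i.e. $\norm{XY}_{\psi_{(\alpha+\theta)^{-1}}}\leq 1$.

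The core pointwise estimate is obtained from Young's inequality. I would set $u=\lvert X\rvert^{1/(\alpha+\theta)}$, $v=\lvert Y\rvert^{1/(\alpha+\theta)}$, and apply $uv\leq u^p/p+v^q/q$ with the conjugate exponents $p=(\alpha+\theta)/\alpha$ and $q=(\alpha+\theta)/\theta$, which indeed satisfy $1/p+1/q=\alpha/(\alpha+\theta)+\theta/(\alpha+\theta)=1$. This choice is engineered precisely so that $u^p=\lvert X\rvert^{1/\alpha}$ and $v^q=\lvert Y\rvert^{1/\theta}$, yielding the clean bound
\[
\big(\lvert X\rvert\,\lvert Y\rvert\big)^{1/(\alpha+\theta)} \;\leq\; \frac{\alpha}{\alpha+\theta}\,\lvert X\rvert^{1/\alpha} \;+\; \frac{\theta}{\alpha+\theta}\,\lvert Y\rvert^{1/\theta}.
\]
Identifying the correct exponents here is the one step that requires care; everything else is bookkeeping.

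Exponentiating this inequality and writing the right-hand side as a product of powers gives the pointwise bound $\exp\big((\lvert X\rvert\lvert Y\rvert)^{1/(\alpha+\theta)}\big) \leq \exp(\lvert X\rvert^{1/\alpha})^{\alpha/(\alpha+\theta)}\,\exp(\lvert Y\rvert^{1/\theta})^{\theta/(\alpha+\theta)}$. I would then take expectations and apply H\"older's inequality with the same conjugate exponents $(\alpha+\theta)/\alpha$ and $(\alpha+\theta)/\theta$, obtaining an upper bound of $\big(\mathbb{E}\exp(\lvert X\rvert^{1/\alpha})\big)^{\alpha/(\alpha+\theta)}\big(\mathbb{E}\exp(\lvert Y\rvert^{1/\theta})\big)^{\theta/(\alpha+\theta)}\leq 2^{\alpha/(\alpha+\theta)}2^{\theta/(\alpha+\theta)}=2$ in the normalized case. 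Undoing the normalization via homogeneity recovers $\norm{XY}_{\psi_{(\alpha+\theta)^{-1}}}\leq\norm{X}_{\psi_{\alpha^{-1}}}\norm{Y}_{\psi_{\theta^{-1}}}$. The main obstacle is purely the exponent arithmetic: matching the Young/H\"older conjugate pair $(\alpha+\theta)/\alpha$ and $(\alpha+\theta)/\theta$ to the target order $(\alpha+\theta)^{-1}$, and checking that the same pair makes both the pointwise step and the expectation step close off with the factor $2$ preserved.
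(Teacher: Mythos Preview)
Your proof is correct and shares the same core idea as the paper: both apply Young's inequality with the conjugate pair $p=(\alpha+\theta)/\alpha$, $q=(\alpha+\theta)/\theta$ to obtain the pointwise bound $(\lvert X\rvert\lvert Y\rvert)^{1/(\alpha+\theta)} \leq \tfrac{\alpha}{\alpha+\theta}\lvert X\rvert^{1/\alpha} + \tfrac{\theta}{\alpha+\theta}\lvert Y\rvert^{1/\theta}$. The difference is in the expectation step. The paper applies Young's inequality a second time in the form $a^\lambda b^{1-\lambda}\leq \lambda a+(1-\lambda)b$ to turn the exponentiated product into a convex combination, and then inserts a ``$\vee 1$'' truncation on the norms so that each summand can be bounded by $2$ separately. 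You instead normalize first via homogeneity and then apply H\"older's inequality with the same conjugate exponents, giving $(\mathbb{E}\exp\lvert X\rvert^{1/\alpha})^{\alpha/(\alpha+\theta)}(\mathbb{E}\exp\lvert Y\rvert^{1/\theta})^{\theta/(\alpha+\theta)}\leq 2$ directly. Your route is slightly cleaner: the normalization removes the need for the $\vee 1$ device altogether, and H\"older closes off the argument in one stroke without splitting into two terms that must each be controlled.
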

\begin{proof}
 By the sub-Weibull assumptions we have that 
 \begin{equation}
     \mathbb E\left[\exp\left(\frac{X^{\alpha^{-1}}}{\norm{X}_{\psi_{\alpha^-1}}} \right) \right] \leq 2, \quad \mathbb E\left[\exp\left(\frac{Y^{\theta^{-1}}}{\norm{Y}_{\psi_{\theta^-1}}} \right) \right] \leq 2.
 \end{equation}
 Now we recall Young's inequality as 
 \begin{equation}
     ab \leq \frac{a^p}{p} + \frac{b^q}{q},\quad  \forall a,b \in \mathbb R^+, \quad\forall p,q>1, \quad p^{-1}+ q^{-1}=1.
 \end{equation}
 We observe that one valid pair for Young's inequality is $(p,q) = \left( \frac{\alpha+\theta}{\alpha},\frac{\alpha+\theta}{\theta} \right)$. Therefore, we have that 
 \begin{align}
     &\mathbb E \left[\exp \left\{\frac{(XY)^{(\alpha+\theta)^{-1}}}{\norm{X}_{\psi_{\alpha^-1}}\norm{Y}_{\psi_{\theta^-1}}} \right\}\right] \nonumber \\ 
     &\quad \leq \mathbb E \left[\exp \left\{\frac{\alpha}{\alpha+\theta}\cdot\frac{X^{\alpha^{-1}} }{\norm{X}_{\psi_{\alpha^-1}}\norm{Y}_{\psi_{\theta^-1}}}\right\}\cdot \exp \left\{\frac{\theta}{\alpha+\theta}\cdot\frac{Y^{\theta^{-1}} }{\norm{X}_{\psi_{\alpha^-1}}\norm{Y}_{\psi_{\theta^-1}}}\right\}\right] \nonumber \\
     &\quad \leq \mathbb E \left[\frac{\alpha}{\alpha+\theta}\cdot\exp \left\{\frac{X^{\alpha^{-1}} }{\norm{X}_{\psi_{\alpha^-1}}\norm{Y}_{\psi_{\theta^-1}}}\right\}+ \frac{\theta}{\alpha+\theta} \cdot\exp \left\{\frac{Y^{\theta^{-1}} }{\norm{X}_{\psi_{\alpha^-1}}\norm{Y}_{\psi_{\theta^-1}}}\right\}\right] \nonumber \\
     &\quad \leq \frac{2\alpha}{\alpha+\theta} + \frac{2\theta}{\alpha+\theta}=2,
 \end{align}
where the first two inequalities follow from Young's inequality which concludes the proof.
\end{proof}
We next present a very useful corollary for the case of product of sub-exponential and sub-Gaussian random variables.
\begin{corollary} \label{Col:subweb}
    Let $\{X_i\}_{i=1}^{n_x}$ and $\{Y_i\}_{i=1}^{n_y}$ be collections of sub-Gaussian and sub-exponential random variables respectively, then we have
    \begin{equation}
        \norm{\prod\limits_{i=1}^{n_x} X_i \prod\limits_{j=1}^{n_y} Y_j }_{\psi_{({n_x}/{2} +n_y)^{-1}}} \leq \prod\limits_{i=1}^{n_x}\norm{X}_{\psi_2}\prod\limits_{j=1}^{n_y}\norm{Y}_{\psi_1}
    \end{equation}
\end{corollary}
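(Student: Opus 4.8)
The plan is to derive this corollary as a direct consequence of the preceding sub-Weibull product lemma by induction on the total number of factors. First I would recall the two calibration facts noted just above the lemma: a sub-Gaussian random variable is SW($1/2$), so that its sub-Weibull norm of order $1/2$ coincides with $\norm{\cdot}_{\psi_2}$, and a sub-exponential random variable is SW($1$), so that its norm of order $1$ coincides with $\norm{\cdot}_{\psi_1}$. The crucial feature to exploit is that the lemma is multiplicative in \emph{both} the order and the norm: if $X$ is SW($\alpha$) and $Y$ is SW($\theta$), then $XY$ is SW($\alpha+\theta$) with $\norm{XY}_{\psi_{(\alpha+\theta)^{-1}}} \leq \norm{X}_{\psi_{\alpha^{-1}}}\norm{Y}_{\psi_{\theta^{-1}}}$. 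Both the additivity of the orders and the product structure of the norm bound telescope cleanly under iteration, and since the lemma (and Young's inequality underlying it) requires no independence, none is needed here either.

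Concretely, I would relabel the $n_x + n_y$ factors as $Z_1,\dots,Z_m$ with $m \triangleq n_x + n_y$, assigning to each $Z_\ell$ its order $\beta_\ell \in \{1/2,\,1\}$ according to whether it is a sub-Gaussian or sub-exponential factor, and establish the more general claim
\[
\norm{\prod_{\ell=1}^m Z_\ell}_{\psi_{(\sum_{\ell=1}^m \beta_\ell)^{-1}}} \leq \prod_{\ell=1}^m \norm{Z_\ell}_{\psi_{\beta_\ell^{-1}}}
\]
by induction on $m$. The base case $m=1$ is a trivial equality. For the inductive step I would set $W \triangleq \prod_{\ell=1}^{m-1} Z_\ell$, invoke the inductive hypothesis to conclude that $W$ is SW($\sum_{\ell=1}^{m-1}\beta_\ell$) with its norm bounded by $\prod_{\ell=1}^{m-1}\norm{Z_\ell}_{\psi_{\beta_\ell^{-1}}}$, and then apply the lemma a single time with $X=W$ (order $\alpha=\sum_{\ell=1}^{m-1}\beta_\ell$) and $Y=Z_m$ (order $\theta=\beta_m$). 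Because the orders add, the resulting order is exactly $\sum_{\ell=1}^m \beta_\ell$, and chaining the two norm bounds gives the product over all $m$ factors.

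Finally, I would specialize the general claim by taking exactly $n_x$ of the $\beta_\ell$ equal to $1/2$ (the sub-Gaussian factors) and $n_y$ of them equal to $1$ (the sub-exponential factors), so that $\sum_{\ell=1}^m \beta_\ell = n_x/2 + n_y$. Substituting $\psi_{(1/2)^{-1}} = \psi_2$ and $\psi_{1^{-1}} = \psi_1$ for the individual factor norms then recovers precisely the stated inequality. I do not anticipate a genuine obstacle: essentially all the content is carried by the preceding lemma, and the only point demanding care is the bookkeeping of the order exponents $\beta_\ell^{-1}$ through the induction, so that the sub-Weibull index on the left-hand side lands exactly on $(n_x/2 + n_y)^{-1}$.
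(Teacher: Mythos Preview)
Your proposal is correct and is precisely the argument the paper has in mind: the corollary is stated without proof as an immediate consequence of the preceding sub-Weibull product lemma, and your induction on the number of factors (with orders $1/2$ for sub-Gaussian and $1$ for sub-exponential, summing to $n_x/2+n_y$) is exactly the natural way to make that iteration explicit.
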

Next, we show that each entry of $\mathcal{Q}_1$ is SW($3/2$). Let $l,m \in [d]$, then for all $i \in [n]$ we have
\begin{align}
   \norm{\max_{j \in [k]}\langle \mb \theta^\star_j, [\mb{ x}_i;1] \rangle ([\mb{ x}_i]_l[\mb{ x}_i]_m )}_{\psi_{2/3}} &\leq \norm{\max_{j \in [k]}\langle \mb \theta^\star_j, [\mb{ x}_i;1] \rangle}_{\psi_2}\norm{[\mb{ x}_i]_l}_{\psi_2}\norm{[\mb{ x}_i]_m}_{\psi_2}  \nonumber \\
   &\leq  C \sqrt{\log(d)} \max_{j \in [k]}\left(\norm{\mb a_j^\star}_2 + |b_j^\star| \right) \norm{\mb x_i}^2_{\psi_2} \leq C\varsigma \sqrt{\log(d)},
\end{align}
where the first inequality follows by Corollary \ref{Col:subweb}. We can now invoke the generalized version of Hoefdding's and Bernstein's inequality for sub-Weibull random variables. Using \citep[Proposition 3]{zhang2022sharper} with the union bound over $d^2$ entries, we have that
\begin{equation}
    \mathbb P \left(\norm{\mathcal{Q}_1}_\infty \geq C\varsigma \sqrt{\log(d)} \left(\sqrt{\frac{\epsilon}{n}} \vee \sqrt[3/2]{\frac{\epsilon}{n}}\right)\right) \leq d^2e^{-\epsilon},
\end{equation}
since the entries of $\mathcal{Q}_1$ are SW($3/2$). Choosing $\epsilon = 12 \log(nd)$ we get
\begin{equation}\label{M2:Q1}
    \mathbb P \left(\norm{\mathcal{Q}_1}_\infty  \geq C \varsigma \sqrt{\log(d)}\left(\sqrt{\frac{\log(nd)}{n}} \vee \sqrt[3/2]{\frac{\log(nd)}{n}}\right)\right) \leq n^{-12}.
\end{equation}
Recalling \eqref{eq:det_max_bound}, we have that the diagonal elements of $\mathcal{Q}_2$ are sub-Gaussian. Therefore, using Hoeffding's inequality and the union bound of the $d$ diagonal entries we have
\begin{equation}
    \mathbb{P}\left(\norm{\mathcal{Q}_{2}}_\infty\geq \epsilon\right) \leq 2 d\exp \left\{\frac{-n\epsilon^2}{2C \varsigma^2\log(nd)} \right\}.
\end{equation}
Choosing $\epsilon= 5\varsigma\log(nd)/\sqrt{n}$ we get 
\begin{equation} \label{M2:Q2}
     \mathbb{P}\left( \norm{\mathcal{Q}_{2}}_\infty \geq C\varsigma \frac{\log(nd)}{\sqrt{n}}\right) \leq n^{-12}.
\end{equation}
Next, for some $l,m \in [d]$, let the Kronecker delta $\delta_{lm} = 1$ only when $l=m$ and $0$ otherwise. Then we have for all $i \in [n]$ that
\begin{equation}
    \norm{z_i\left([\mb x_i]_l [\mb x_i]_m - \delta_{lm}\right)}_{\psi_{2/3}} \leq \norm{z_i}_{\psi_2} \norm{\mb x_i}^2_{\psi_2} \leq C \sigma_z.
\end{equation}
where the first inequality follows by Corollary \ref{Col:subweb}. Therefore, using \citep[Proposition 3]{zhang2022sharper} with the union bound over $d^2$ entries, we have that
\begin{equation}\label{M2:noise1}
    \mathbb P \left(\norm{\frac{1}{n}\sum_{i=1}^n z_i (\mb x_i \mb x_i^\T-\mb I_d)}_\infty \geq C \sigma_z \left(\sqrt{\frac{\log(nd)}{n}} \vee \sqrt[3/2]{\frac{\log(nd)}{n}}\right)\right) \leq n^{-12},
\end{equation}
Finally, we combine \eqref{M2:Q1}, \eqref{M2:Q2}, and \eqref{M2:noise1} to get the second assertion in the lemma which concludes the proof.
\subsection{Proof of Lemma \ref{lem:model_pis}} 
\label{sec:proof:lem:model_pis} 
We first derive a lower bound on
\begin{align}
\P\left(\mb x\in \mathcal{C}_{j}\cap \mathcal{C}_{j}^\star\right)
&= \P\left(\mb x\in \mathcal{C}_{j} | \mb x \in \mathcal{C}_{j}^\star\right) 
\cdot \P\left(\mb x \in \mathcal{C}_{j}^\star\right) \nonumber \\
&= \left(1 - \P\left(\mb x \not\in \mathcal{C}_{j} | \mb x \in \mathcal{C}_{j}^\star\right) \right)
\cdot \P\left(\mb x \in \mathcal{C}_{j}^\star\right).\label{eq:sbreslt1}
\end{align}
Then, by the construction of $\{\mathcal{C}^\star_j\}_{j=1}^k$ in \eqref{eq:def_calCjstar}, we have
\begin{align*}
& \P\left(\mb x \not\in \mathcal{C}_{j} | \mb x \in \mathcal{C}_{j}^\star\right) = \frac{\P(\mb x \not\in \mathcal{C}_{j}, \mb x \in \mathcal{C}_{j}^\star)}{\P(\mb x \in \mathcal{C}_{j}^\star)} \nonumber\\
&\leq \frac{1}{\pi_j^\star} \sum_{j' \neq j} \P\left(\langle [\mb x;\ 1], \mb \theta_{j'} \rangle \geq \langle [\mb x;\ 1], \mb \theta_{j} \rangle, \langle [\mb x;\ 1], \mb \theta_{j}^\star \rangle \geq \langle [\mb x;\ 1], \mb \theta_{j'}^\star \rangle\right) \nonumber\\
&\leq \frac{1}{\pi_j^\star} \sum_{j' \neq j} \P\left(\langle [\mb x;\ 1], \mb v_{j,j'} \rangle \langle [\mb x;\ 1], \mb v_{j,j'}^\star \rangle \leq 0\right) \\
&\leq \frac{1}{\pi_j^\star} \sum_{j' \neq j}  \P\left( 
\langle [\mb x; 1],\mb v_{j,j'}^\star\rangle^2\leq\langle [\mb x; 1],\mb v_{j,j'}-\mb v_{j,j'}^\star\rangle^2
\right),
\end{align*}
where the second inequality holds since $\mb v_{j,j'} = \mb \theta_j - \mb \theta_{j'}$ and $\mb v_{j,j'}^\star = \mb \theta_j^\star - \mb \theta_{j'}^\star$, and the last inequality follows from the fact that $ab \leq 0$ implies $|b| \leq |a-b|$ for $a,b \in \mathbb{R}$. 
Recall that $\mb \theta\in\mathcal{N}(\mb \theta^\star,\Delta\rho)$ implies $\|\mb v_{j,j'} - \mb v_{j,j'}^\star\|_2 \leq 2\rho \|(\mb v_{j,j'}^\star)_{1:d}\|_2$ due to \citep[Lemma~7.4]{kim2023maxaffine}. 
Furthermore, one can choose the absolute constant $R > 0$ in \eqref{eq:choice_rho1} sufficiently small (but independent of $k$ and $d$) so that $2\rho \leq 0.1$. 
Then it follows that 
\begin{align}
&\P(\mb x\not\in \mathcal{C}_{j}\,|\,\mb x\in \mathcal{C}_{j}^\star) \nonumber \\
&\overset{\mathrm{(i)}}{\leq} C \frac{k}{\pi_j^\star} \left(\frac{ \|\mb v_{j,j'}-\mb v_{j,j'}^\star\|_2^2}{\|(\mb v_{j,j'}^\star)_{1:d}
\|_2^2}\log\left(\frac{2\|(\mb v_{j,j'}^\star)_{1:d}\|_2}{\|\mb v_{j,j'}-\mb v_{j,j'}^\star\|_2}\right)\right)^{\zeta}\nonumber\\
&\overset{\mathrm{(ii)}}{\leq}C \frac{k}{\pi_j^\star} \left( (2\rho)^2 \log\left(\frac{1}{\rho}\right)\right)^{\zeta}\nonumber\\
&\overset{\mathrm{(iii)}}{\leq}C\frac{k}{\pi_j^\star} {\left(\frac{ R^2 \pi_{\min}^{2\zeta^{-1}(1+\zeta^{-1})}}{k^{2\zeta^{-1}}}\right)^{\zeta}} \overset{\mathrm{(iv)}}{\leq}C
\frac{R^{2\zeta} \pi_{\min}^{1+2\zeta^{-1}}}{k} \overset{\mathrm{(v)}}{\leq}C \frac{R^{2\zeta} }{k^{2(1+\zeta^{-1})}}\triangleq \varrho,\label{eq:subreslt0}
\end{align}  
where (i) follows from \citep[Lemma~7.5]{kim2023maxaffine}; (ii) holds since $a \log^{1/2}(2/a)$ is monotone increasing for $a \in (0,1]$; (iii) follows from the fact that $a\leq\frac{b}{2}\log^{-1/2}(1/b)$ implies $a\log^{1/2}(2/a)\leq b$ for $b\in(0,0.1]$; (iv) holds since $\pi^\star_j \geq \pi_{\mathrm{min}}$ for all $j \in [k]$ by the defintion of $\pi_{\mathrm{min}}$ in \eqref{eq:def_pimin_pimax}; and (v) holds since $\pi_{\mathrm{min}}\leq \frac{1}{k}$. Once again $R > 0$ can be made sufficiently small so that the right-hand side of \eqref{eq:subreslt0} defined as $\varrho$ is arbitrarily small. 
Then plugging in this upper bound by \eqref{eq:subreslt0} into \eqref{eq:sbreslt1} yields 
\begin{equation}
\label{eq:leftpibound}
\pi_j \geq \P(\mb x\in \mathcal{C}_{j}\cap \mathcal{C}_{j}^\star)\geq(1-\varrho)\cdot\pi_j^\star.
\end{equation} 
Similarly, and by symmetry, we can write
\begin{align}
\P(\mb x\not\in \mathcal{C}_{j}\,|\,\mb x\in \mathcal{C}_{j}^\star) &\leq\frac{R^{2\zeta} \pi_{\min}^{2(1+\zeta^{-1})}}{\pi_j k} \leq \frac{\pi_j^\star}{\pi_j}\cdot \frac{R^{2\zeta} \pi_{\min}^{1+2\zeta^{-1}}}{k} \leq \frac{\pi_j^\star}{\pi_j}\cdot \frac{R^{2\zeta} }{k^{2(1+\zeta^{-1})}} \leq\frac{\varrho}{1-\varrho},\label{eq:subreslt02}
\end{align}  
where the last inequality follows from the definition of $\varrho$ in \eqref{eq:subreslt0} and the bound derived in \eqref{eq:leftpibound}. Finally, we can write the bound
\begin{equation}
\label{eq:rightpibound}
\pi_j^\star \geq \P(\mb x\in \mathcal{C}_{j}\cap \mathcal{C}_{j}^\star)\geq\left(\frac{1-2\varrho}{1-\varrho}\right)\pi_j.
\end{equation} 
Combining \eqref{eq:leftpibound} and \eqref{eq:rightpibound} provides the assertion in \eqref{eq:pibounds} thus concluding the proof.

\subsection{Proof of Lemma \ref{lem:bnd_singular}}
\label{sec:proof:lem:bnd_singular}
Let $\mathcal{D}$ be a collection of subsets in $\mathbb R^d$.  We denote the set of vectors whose entries are the indicator functions of $\mathcal{D}$ evaluated at samples $\{\mb x_i\}_{i=1}^n \subset \mathbb{R}^d$ by 
\begin{equation}
    \mathcal{H} (\mathcal{C}, \{\mb x_i\}_{i=1}^n) \triangleq \left\{\left(\bbone_{\{\mb x_1\in C\}},\ldots,\bbone_{\{\mb x_n\in C \}}\right): C\in \mathcal{D}\right\}.
\end{equation}
The Sauer-Shelah lemma (e.g. \citep[Section 3]{mohri2018foundations}) implies 
\begin{equation}
 \sup\limits_{\mb x_1,\ldots,\mb x_n}\left|\mathcal{H} (\mathcal{D}, \{\mb x_i\}_{i=1}^n)\right| \leq \left( \frac{en}{\vcdim(\mathcal{D})}\right)^{\vcdim(\mathcal{D})},
\end{equation}
where $\vcdim(\mathcal{D})$ denotes the Vapnik-Chervonenkis dimension of $\mathcal{D}$. 
Recall that each elements of $\mathcal{P}_{k,d}(\mathcal{U})$ is given as the intersection of $k$ jointly $s$-sparse halfspaces. 
Since the VC-dim of a single halfspace in $\mathbb R^s$ is $s+1$ \citep[Theorem A]{csikos2018optimal}, we have
\begin{equation}
\label{eq:poly_card_sing}
    \sup_{\mb x_1,\dots,\mb x_n}\left| \mathcal{H} (\mathcal{P}_{k,d}(\mathcal{U}), \{\mb x_i\}_{i=1}^n) \right| \leq \left( \frac{en}{s+1}\right)^{k(s+1)}, \quad \forall \mathcal{U} \in \mathcal{Z}_s.
\end{equation}
Furthemore, since $\mathcal{P}_{k,d,s} = \underset{\mathcal{U} \in\mathcal{Z}_s}{\bigcup}\mathcal{P}_{k,d}(\mathcal{U})$, it follows that 
\begin{equation}
    \mathcal{H} (\mathcal{P}_{k,d,s}, \{\mb x_i\}_{i=1}^n) = \underset{\mathcal{U} \in 
    \mathcal{Z}_s}{\bigcup} \mathcal{H} (\mathcal{P}_{k,d}(\mathcal{U}), \{\mb x_i\}_{i=1}^n).
\end{equation}
Therefore, the cardinality of $\mathcal{H} (\mathcal{P}_{k,d,s}, \{\mb x_i\}_{i=1}^n)$ satisfies 
\begin{equation}
\label{eq:ub_worstcase_H_Pkds}
    \sup_{\mb x_1,\dots,\mb x_n} \left|\mathcal{H} (\mathcal{P}_{k,d,s}, \{\mb x_i\}_{i=1}^n)\right| 
    \leq\left( \frac{en}{s+1}\right)^{k(s+1)}\left( \frac{ed}{s}\right)^{s}.
\end{equation}
Then, to upper bound the term in \eqref{eq:bnd_concent}, we use 
\begin{align}
    \label{eq:supall}
    & \sup\limits_{\substack{C \in \mathcal{P}_{k,d,s} \\ \mathcal{U} \in \mathcal{Z}_s}}
    \left\|\mb{ \tilde\Pi}_{\mathcal{U}} \left(\frac{1}{n}\sum_{i=1}^{n}\bbone_{\{\mb x_i\in C \}}\left(\mb \xi_{i}\mb \xi_{i}^\T-\mb I_{d+1}\right)\right)\mb{\tilde\Pi}_{\mathcal{U}}\right\| \nonumber \\
    &\quad\leq \sup_{\mb x_1',\ldots,\mb x_n'} ~
    \sup_{\{\omega_i\}_{i=1}^n \in \mathcal{H} (\mathcal{P}_{k,d,s}, \{\mb x_i'\}_{i=1}^n)}
    \underbrace{ \sup_{\mathcal{U} \in \mathcal{Z}_s}
    \left\|\mb{ \tilde\Pi}_{\mathcal{U}}\left(\frac{1}{n}\sum_{i=1}^{n}\omega_i\left(\mb \xi_{i}\mb \xi_{i}^\T-\mb I_{d+1}\right)\right) \mb{ \tilde\Pi}_{\mathcal{U}}\right\|}_{f(\omega_1,\dots,\omega_n)}. 
\end{align} 
Let $\{\omega_i\}_{i=1}^n \in \{0,1\}^n$ be arbitrarily fixed and $\alpha \triangleq \frac{1}{n}\sum_{i=1}^n \omega_i$. 
 Then, via Lemma \ref{lem:Sparse_RIP}, we obtain a tail bound on the sparsity-restricted spectral norm in the right-hand side of \eqref{eq:supall} given by
\begin{equation} \label{eq:Center_gauss}
\mathbb{P}\left(
f(\omega_1,\dots,\omega_n)
> \epsilon\right) \leq {\delta},
\end{equation}
if $n \geq C (\eta \vee 1)^4 \epsilon^{-2}\alpha\left[ s\log\left(\frac{d}{s}\right)+\log \left(\frac{1}{\delta}\right) \right]$. Trivially we have that $\alpha \leq 1$. Then, using the union bound, and inflating the probability of error $\delta$ by the worst-case cardinality in \eqref{eq:ub_worstcase_H_Pkds}, 
we obtain that 
\begin{equation} \label{eq:sRIP_bound} 
\mathbb{P}\left(    \sup\limits_{\substack{ \mb x_1',\ldots,\mb x_n' \\ \{\omega_i\}_{i=1}^n  \in \mathcal{H} (\mathcal{P}_{k,d,s}, \{\mb x_i'\}_{i=1}^n)}} f(\omega_1,\dots,\omega_n)
> \epsilon 
\right) \leq {\delta},
\end{equation}
if $n \geq C(\eta \vee 1)^4 \epsilon^{-2}\left[ sk\log\left(\frac{n}{s}\right) +s\log\left(\frac{d}{s}\right)+\log \left(\frac{1}{\delta}\right) \right]$ which concludes the proof.

{\color{black}
\subsection{Proof of Lemma \ref{lemm:poly_int}}
     By the definition of $\{\mathcal{C}_j(\mb \theta)\}_{j=1}^k$ in \eqref{eq:def_calCjstar}, we have for all $j'\neq j \in [k]$ that
    \begin{align*}
        \mb x_i \in \mathcal{C}_j(\mb \theta) \cap \mathcal{C}_{j'}(\mb \theta^\star) &\Rightarrow \langle\mb \xi_i, \mb \theta_j \rangle \geq \langle\mb \xi_i, \mb \theta_{j'} \rangle, \langle\mb \xi_i, \mb \theta^\star_{j'} \rangle\geq \langle\mb \xi_i, \mb \theta^\star_j \rangle \\
        & \Leftrightarrow \langle\mb \xi_i, \mb v_{j,j'} \rangle\geq 0, \langle\mb \xi_i, \mb v^\star_{j,j'} \rangle\leq 0 \\
        & \Rightarrow\langle\mb \xi_i, \mb v_{j,j'}\rangle  \langle\mb \xi_i, \mb v^\star_{j,j'}\rangle \leq 0 \\
        & \Rightarrow \langle\mb \xi_i, \mb v^\star_{j,j'}\rangle^2 \leq \langle\mb \xi_i, \mb v_{j,j'}-\mb v^\star_{j,j'}\rangle^2,
    \end{align*}
    where the last statement holds since $ab<0$ implies $|b| \leq |a-b|$ for all $a,b \in \mathbb R$. From \citep[Lemma 7.4]{kim2023maxaffinefull}, we have that every $\mb \theta \in \mathcal{N}(\mb \theta^\star, \Delta\rho)$ has $(\mb v_{j,j'}, \mb v_{j,j'}^\star)$ in  
    \begin{equation}\label{eq:M_v}
        \mathcal{M} \triangleq \left\{ (\mb v, \mb v^\star): \|\mb v - \mb v^\star \|\leq 2 \rho \|(\mb v^\star)_{1:d} \|\right\}.
    \end{equation}
    Also, define the set 
    \begin{equation}
        \mathcal{S}_{\mb v, \mb v^\star} \triangleq \left\{ \mb \xi: \langle\mb \xi, \mb v^\star\rangle^2 \leq \langle\mb \xi, \mb v-\mb v^\star\rangle^2\right\}
    \end{equation}
   Therefore it suffices to show that with probability $1-\delta$, we have 
    \begin{equation}
         \sup_{(\mb v, \mb v^\star )\in \mathcal{M}}\frac{1}{n}\sum_{i=1}^n \bbone_{\{\mb \xi_i \in \mathcal{S}_{\mb v, \mb v^\star}\}} \leq \frac{\pi^{2(1+\zeta^{-1})}_{\min}}{k^2},
    \end{equation}
    when the sample complexity satisfies \eqref{eq:n_int}. For fixed $\mathcal{U} \in \mathcal{Z}_s$, we have that $\mathcal{S}_{\mb v, \mb v^\star} \in \mathcal{P}_{2,d}(\mathcal{U})$ and similar to \eqref{eq:poly_card_sing} we have
    \begin{equation}
        \sup_{\mb x_1,\dots,\mb x_n}\left| \mathcal{H} (\mathcal{P}_{2,d}(\mathcal{U}), \{\mb x_i\}_{i=1}^n) \right| \leq \left( \frac{en}{s+1}\right)^{2(s+1)}.
    \end{equation}
    Therefore, for fixed $\mathcal{U} \in \mathcal{Z}_s$ we have by \citep[Lemma 6.3]{kim2023maxaffinefull} with probability at least $1-\delta$ that 
    \begin{equation}
        \sup_{(\mb v, \mb v^\star)\in \mathcal{M}}\left| \frac{1}{n}\sum_{i=1}^n\bbone_{\{\mb \xi_i \in \mathcal{S}_{\mb v, \mb v^\star }\} } - \mathbb{P} \left(\mb \xi \in  \mathcal{S}_{\mb v, \mb v^\star }\right)\right| \leq C \sqrt{\frac{\log(1/\delta)+s\log(n/s)}{n}}.
    \end{equation}
    Furthermore, this statement holds for all $\mathcal{U} \in \mathcal{Z}_s$ using the union bound when we inflate the probability of error by $|\mathcal{Z}_s|$, i.e.
    \begin{equation} \label{eq:prob_emp_2}
        \sup_{\substack{(\mb v, \mb v^\star)\in \mathcal{M} \\ \mathcal{U} \in \mathcal{Z}_s}}\left| \frac{1}{n}\sum_{i=1}^n\bbone_{\{\mb \xi_i \in \mathcal{S}_{\mb v, \mb v^\star }\} } - \mathbb{P} \left(\mb \xi \in  \mathcal{S}_{\mb v, \mb v^\star }\right)\right| \leq C \sqrt{\frac{\log(1/\delta)+s\log(n\vee d/s)}{n}},
    \end{equation}
    with probability at least $1-\delta$. Furthermore, by \citep[Lemma 7.5]{kim2023maxaffinefull} similar to \eqref{eq:subreslt0}, we have 
    \begin{equation} \label{eq:prob_int_2}
         \sup_{\substack{(\mb v, \mb v^\star)\in \mathcal{M} \\ \mathcal{U} \in \mathcal{Z}_s}} \mathbb{P} \left(\mb \xi \in  \mathcal{S}_{\mb v, \mb v^\star }\right) \leq C \left[ (2\rho)^2 \log(1/\rho)\right]^\zeta \leq C \left[ \frac{R^2\pi_{\min}^{2\zeta^{-1}(1+\zeta^{-1})}}{k^2 \zeta^{-1}}\right]^\zeta \leq C \frac{R^{2\zeta} \pi_{\min}^{2(1+\zeta^{-1})}}{k^2}
    \end{equation}
    Finally, choosing the numerical constant in \eqref{eq:n_int} large enough and $R>0$ small enough so that 
    \begin{equation}
        \mathbb{P} \left[  \sup_{\substack{(\mb v, \mb v^\star)\in \mathcal{M} \\ \mathcal{U} \in \mathcal{Z}_s}} \frac{1}{n}\sum_{i=1}^n \bbone_{\{\mb \xi_i \in \mathcal{S}_{\mb v,\mb v^\star}\}} > \left(\frac{\pi_{\min}^{(1+\zeta^{-1})}}{k}\right)^2\right] \leq \delta,
    \end{equation}
    which concludes the proof.}

\subsection{Proof of Lemma \ref{lem:combined}}
    This lemma is defined instate of the assumptions in \ref{THM:MAIN}. The proof of this lemma will simply involve invoking several auxiliary lemmas from Section \ref{sec:lemmas} which hold with high probability. We will now prove that each of the statements in Lemma \ref{lem:combined} holds with probability at least $1-\delta/5$. Statement \eqref{eq:bnd_concentstate} follows from Lemma \ref{lem:bnd_singular} since \eqref{eq:cond:lem:lwb_gradient} implies \eqref{n:bnd_singular} with $\epsilon =k^{-3/2} \pi_{\mathrm{min}}^{2(1+\zeta^{-1})}$, and   \eqref{eq:empiricalstate} holds from Lemma \ref{mu_lemma} since \eqref{eq:cond:lem:lwb_gradient} implies \eqref{n:Empirical_Measure} with $\epsilon =k^{-3/2} \pi_{\mathrm{min}}^{2(1+\zeta^{-1})}$. Also, \eqref{eq:vboundstate} follows from Lemma \ref{lem:model_difference} since $\mb \theta^t \in \mathcal{N}(\mb \theta^\star,\Delta\rho)$ and \eqref{eq:cond:lem:lwb_gradient} implies \eqref{n:difference_lemma}. Statement \eqref{eq:noise_state} follows directly from Lemma \ref{lem:noise_grad_bound}. Finally, \eqref{eq:rip_state} holds due to both Lemma \ref{lemm:poly_int}, since \eqref{eq:cond:lem:lwb_gradient} implies \eqref{eq:n_int} with probability $1-\delta/2$, and Lemma \ref{lemm:restriced_rip}, since \eqref{eq:cond:lem:lwb_gradient} implies \eqref{eq:rip_bound_n} with probability $1-\delta/2$.
{\color{black}    
\section{Proof of Theorem~\ref{Theo:maslov_approx}}\label{maslovproofapprox}

Recall that $w= g(u_1,\ldots,u_d)$ is a sparse generalized polynomial defined in \eqref{eq:poly_into}. Also recall that $y = \mathrm{Re}\{\varsigma\log w\}$ and $x_l = \varsigma \log u_l$ for all $l \in [d]$ for some temperature parameter $\varsigma> 0$.
Then \eqref{eq:poly_into} is rewritten as 
\begin{align}
y &= \mathrm{Re }\left\{\varsigma \log \left[\sum_{j=1}^k\exp\left(\frac{ \varsigma\log c_j+\sum_{l=1}^d \alpha_{j,l}x_l }{\varsigma}~ \right) \right]\right\} \nonumber \\
& = \underbrace{\mathrm{Re }\left\{\varsigma \log \left[\sum_{j=1}^k\exp\left(\frac{\varsigma \log c_j+\sum_{l=1}^d \alpha_{j,l}x_l - \max_q (\varsigma\log |c_q| + \sum_{l=1}^d \alpha_{q,l}x_l) }{\varsigma}~ \right) \right]\right\}}_{z_\varsigma} \nonumber\\
&\quad + \mathrm{Re }\left\{\varsigma \log \left[\exp\left(\frac{ \max_q (\varsigma\log |c_q| + \sum_{l=1}^d \alpha_{q,l}x_l) }{\varsigma}~ \right) \right]\right\} \nonumber \\
& = z_\varsigma + \max_j \left(\varsigma\log |c_j| + \sum_{l=1}^d \alpha_{j,l}x_l\right).
\end{align} 
This results in the following form: 
\begin{equation} \label{eq:max}
    y =   \max_{j \in [k]}\langle \mb \theta_j , [\mb x;1]\rangle +z_\varsigma,
\end{equation}
where $\mb \theta_j = [\alpha_{j,1}; \ldots; \alpha_{j,d}; \varsigma \log |c_j|]$ for all $j \in [k]$.
Note that the error in this transformation is only contained within $z_{\varsigma}$, which we would like to bound. Therefore, the worst-case approximation error is
\begin{align}
    \label{eq:phi_bound}
     \left| z_{\varsigma} \right|  &= \left|\mathrm{Re}\left\{\varsigma \log \left[ \sum_{j=1}^k \mathrm{exp}\left(\frac{\langle\mb \theta_j, [\mb x;1]  \rangle
     +\varsigma i\mathbf{1}_{\{c_j <0\}}\pi -\max_q\langle \mb \theta_q , [\mb x;1]  \rangle 
      }{\varsigma} \right) \right]\right\} \right|\nonumber\\
      &= \mathrm{Re}\left\{\varsigma \log \left| \sum_{j=1}^k \mathrm{exp}\left(\frac{\langle\mb \theta_j, [\mb x;1]  \rangle
     + \varsigma i\mathbf{1}_{\{c_j <0\}}\pi -\max_q\langle \mb \theta_q , [\mb x;1]  \rangle 
      }{\varsigma} \right) \right|\right\} \nonumber\\
     & \leq  \varsigma \left|\log \left[ 1-\sum_{j\neq q^\star} \mathrm{exp}\left(\frac{\langle \mb \theta_j - \mb \theta_{q^\star}, [\mb x;1]  \rangle }{\varsigma} \right) \right]\right|\nonumber \\
     & \leq  \varsigma \left|\log \left[ 1- (k-1)\max_{j\neq q^\star} \mathrm{exp}\left(\frac{\langle \mb \theta_j - \mb \theta_{q^\star}, [\mb x;1]  \rangle }{\varsigma} \right) \right]\right|, 
\end{align}
where $q^\star = \mathrm{argmax}_q \langle \mb \theta_q , [\mb x;1]  \rangle$. 
Furthermore, we know that for every $j\neq q \in [k]$, the anti-concentration assumption on $\mb x$ (Assumption \ref{assum:anti}) implies with probability at least $1-(\gamma \epsilon)^\zeta$ that
\begin{equation} \label{eq:anti_theta}
\langle \mb \theta_{q} - \mb \theta_{j}, [\mb x;1]  \rangle \geq \sqrt{\epsilon} \Delta,    
\end{equation}
where the minimum separation $\Delta$ is defined in \eqref{eq:defkappa} as
\[
\Delta = \min_{j\neq q \in [k]} \norm{ [\mb \theta_j]_{1:d} - [\mb \theta_q]_{1:d} }_2.
\]
A smaller $\Delta$ implies the model parameters are very similar and thus harder to estimate. 
Combining \eqref{eq:phi_bound} and \eqref{eq:anti_theta}, it holds with probability at least $1- (\gamma \epsilon)^\zeta$ that 
\begin{equation}
\label{eq:anti_exp}
   (k-1)\max_{j\neq q^\star} \mathrm{exp}\left(\frac{\langle \mb \theta_j - \mb \theta_{q^\star}, [\mb x;1]  \rangle }{\varsigma}\right)\leq   (k-1) \exp\left( \frac{-\sqrt{\epsilon}\Delta}{\varsigma}\right). 
\end{equation}
We choose a suitable value of $\varsigma >0$ such that the right-hand side of \eqref{eq:anti_exp} is upper bounded by $1/2$. Using the fact that $|\log(1-a)| \leq 2a$ whenever $a\leq 1/2$, and combining both \eqref{eq:phi_bound} and \eqref{eq:anti_exp} yields that
\begin{equation}
    \label{eq:quant_error}
    \left|z_\varsigma\right| \leq  2\varsigma (k-1) \exp\left( \frac{-\sqrt{\epsilon}\Delta}{\varsigma}\right),
\end{equation}
with probability at least $1- (\gamma \epsilon)^\zeta$ which concludes the proof.

\section{Proof of Corollary \ref{theo:bounded_SPGD}} \label{app:maslov_spgd}
Notice that the dataset $\{y_i, \mb x_i\}_{i=1}^n$ is generated according to 
\[
y_i  = \max_{j\in [k]}\langle \mb \theta_j, \mb \xi_i\rangle +z_{\varsigma,i}.
\]
Theorem \ref{THM:MAIN} provides theoretical guarantees under subGaussian noise that is independent of the covariates. The only difference here is that $\{z_{\varsigma,i}\}_{i=1}^n$ are independent copies of $z_\varsigma$ that is dependent on $\mb x$ and is bounded as
\begin{equation}
    |z_\varsigma| \leq 2\varsigma (k-1)\exp \left(\frac{\sqrt{\epsilon}\Delta}{\varsigma} \right)\triangleq M_{\varsigma},
\end{equation}
with probability at least $1- (\gamma \epsilon)^\zeta$ by Theorem \ref{Theo:maslov_approx}. Therefore, we only need to modify proof of Theorem \ref{THM:MAIN} to handle the new noise model. We use the main result of \eqref{eq:abcd} from Theorem \ref{THM:MAIN} that is written as 
\begin{align} 
\frac{1}{4}\left\|\mb h^{t+1}   \right\|_2^2
&\leq \sum_{j=1}^k \left[\left\|\mb{\tilde \Pi}_{\mathcal{U}^{t+1}} \left(\mb h^t_{j}-\mu_j^t \mb p_j\right)\right\|_2 + \mu_j^t \left(\left\| \mb q_j\right\|_2 +  \left\| \mb c_j\right\|_2+\left\|\mb d_j\right\|_2\right)\right]^2,
\end{align}
where $\mb d_j$ is the only noise-related term written as 
\begin{equation}
    \label{eq:d_spgd}
    \mb d_j= \frac{1}{n}\sum_{i=1}^n \bbone_{\{\mb x_i  \in \mathcal{C}_j^t\} }z_{\varsigma, i} \tilde{\mb \Pi}_{\mathcal{U}^{t+1} }\mb \xi_i.
\end{equation}
Therefore, we only need to find an upper bound on $\| \mb d_j\|_2$. We know by the variational characterization of the $\ell_2$ norm that 
\begin{align}
    \label{eq:variationl2}
    \| \mb d_j\|_2 &\leq \sup_{\mb u \in B_2, |\lambda| \leq 1} \left | \frac{1}{n}\sum_{i=1}^n \bbone_{\{\mb x_i  \in \mathcal{C}_j^t\} }z_{\varsigma, i} \langle\tilde{\mb \Pi}_{\mathcal{U}^{t+1} }\mb \xi_i,[\mb u; \lambda] \rangle\right| \nonumber\\ 
    & \leq \sup_{\mb u \in B_2} \left | \frac{1}{n}\sum_{i=1}^n \bbone_{\{\mb x_i  \in \mathcal{C}_j^t\} }z_{\varsigma, i} \langle\mb \Pi_{\mathcal{U}^{t+1} }\mb x_i,\mb u  \rangle\right| + \sup_{|\lambda|\leq 1} \left | \frac{1}{n}\sum_{i=1}^n \bbone_{\{\mb x_i  \in \mathcal{C}_j^t\} }z_{\varsigma, i} \lambda \right|. 
\end{align}
The first summand of \eqref{eq:variationl2} can be bounded as
\begin{align}\label{eq:left_bounded}
    \sup_{\mb u \in B_2} \left | \frac{1}{n}\sum_{i=1}^n \bbone_{\{\mb x_i  \in \mathcal{C}_j^t\} }z_{\varsigma, i} \langle\mb \Pi_{\mathcal{U}^{t+1} }\mb x_i,\mb u  \rangle\right| &\leq \max_{i \in [n]}|z_{\varsigma,i}| \cdot \sup_{\mb u \in B_2}  \frac{1}{n}\sum_{i=1}^n \bbone_{\{\mb x_i  \in \mathcal{C}_j^t\} }\left | \langle\mb \Pi_{\mathcal{U}^{t+1} }\mb x_i,\mb u  \rangle\right| \nonumber \\
    & \leq M_\varsigma \cdot  \sup_{\mb u \in B_2}  \frac{1}{n}\sum_{i=1}^n \bbone_{\{\mb x_i  \in \mathcal{C}_j^t\} }\left | \langle\mb \Pi_{\mathcal{U}^{t+1} }\mb x_i,\mb u  \rangle\right| \nonumber  \\
    & \leq C M_\varsigma \sqrt{\pi_j^t + k^{-3/2}\pi_{\min}^{2(1+\zeta^{-1})}}\leq C M_\varsigma \sqrt{1 + k^{-3/2}\pi_{\min}^{2(1+\zeta^{-1})}}
\end{align}
where the last inequality holds with probability at least $1-\delta$ by Lemma \ref{lem:combined}, Eq. \eqref{eq:empiricalstate} which holds with probability at least $1-\delta/2$, and Lemma \ref{lemm:abs_ysv} since \eqref{eq:cond:lem:lwb_gradient} implies \eqref{eq:abs_ysv_n} with probability $1-\delta/2$. 
The second summand of \eqref{eq:variationl2} can be bounded as 
\begin{align} \label{eq:right_bounded}
    \sup_{|\lambda|\leq 1} \left | \frac{1}{n}\sum_{i=1}^n \bbone_{\{\mb x_i  \in \mathcal{C}_j^t\} }z_{\varsigma, i} \lambda \right| &\leq  \max_{i\in [n]} |z_{\varsigma,i}| \cdot \left | \frac{1}{n}\sum_{i=1}^n \bbone_{\{\mb x_i  \in \mathcal{C}_j^t\} }  \right|  \nonumber \\
    &\leq M_{\varsigma}\left( \pi_j^t + k^{-3/2}\pi_{\min}^{2(1+\zeta^{-1})}\right)\leq  M_{\varsigma}\left( 1 + k^{-3/2}\pi_{\min}^{2(1+\zeta^{-1})}\right),
\end{align}
    where the second inequality holds with probability at least $1-\delta$ by Lemma \ref{lem:combined}. Combining \eqref{eq:left_bounded} and \eqref{eq:right_bounded} yields
    \[
    \norm{\mb d_j}_2 \leq C M_{\varsigma}\left( 1 + k^{-3/2}\pi_{\min}^{2(1+\zeta^{-1})}\right),
    \]
    with probability at least $1-[\delta +(\gamma \epsilon)^\zeta]$ for every $j \in [k]$. Therefore, we can use \eqref{finaltau} as
    \begin{equation}
        \| \mb h^{t+1}\|^2_2 \leq \tau  \| \mb h^{t}\|^2_2 + 12k\left[\frac{\| \mb d_1\|_2 }{(1-\varrho)\pi_{\min} - \epsilon_{\mathrm{min}}}\right]^2 \leq \tau  \| \mb h^{t}\|^2_2 + Ck\left[\frac{ M_\varsigma\left(1+ k^{-3/2}\pi_{\min}^{2(1+\zeta^{-1})} \right)}{(1-\varrho)\pi_{\min} - \epsilon_{\mathrm{min}}}\right]^2,
    \end{equation}
    for some $\tau \in [0,1)$. By the recursive nature we have that
\begin{align} 
    \left\|\mb \theta^{t+1}-\mb \theta^\star\right\|^2_2 &\leq \tau^{t+1} \left\|\mb \theta^0-\mb \theta^\star\right\|^2_2 + \frac{Ck}{1-\tau}\left[\frac{ M_\varsigma\left(1++ k^{-3/2}\pi_{\min}^{2(1+\zeta^{-1})} \right)}{(1-\varrho)\pi_{\min} - \epsilon_{\mathrm{min}}}\right]^2\nonumber \\
    & \leq \tau^{t+1} (\Delta \rho)^2 + (\Delta \rho)^2 \leq 2(\Delta \rho)^2,
\end{align}
where the first inequality follows from $\mb \theta ^0 \in \mathcal{N}(\mb \theta^\star,\sqrt{2}\Delta\rho)$ and for a sufficiently small choice of $\varsigma$. This yields
\begin{equation}
    \left\|\mb \theta^{t+1}-\mb \theta^\star\right\|^2_2 \leq  2(\Delta\rho)^2 \implies \mb \theta^{t+1} \in \mathcal{N}(\mb \theta^\star,\sqrt{2}\Delta\rho),
\end{equation}
which concludes the proof using the strong law of induction.

 \section{Auxiliary Lemma for Theorem \ref{theo:bounded_SPGD}}    \begin{lemma}\label{lemm:abs_ysv}
     Let $\delta\in (0,e^{-1})$ and $\alpha \in (0,1)$. Let $\{\mb x_i\}_{i=1}^n$ be independent copies of a random vector $\mb x$ that satisfies Assumption $\ref{assum:subg}$. Then, with probability at least $1-\delta$ we have
    \begin{equation}
        \sup_{\substack{\mathcal{I}:|\mathcal{I}|\leq\alpha n\\ \mathcal{U} \in\mathcal{Z}_s}}\norm{ \frac{1}{n}\sum_{i\in \mathcal{I}} \mb \Pi_{\mathcal{U}}\mb x_i}_2  \leq C (\eta^2 \vee 1) \sqrt{\alpha},
    \end{equation}
    if
    \begin{equation}\label{eq:abs_ysv_n}
        n\geq \alpha^{-1}\left[s\log(d/s)+\log(1/\delta)\right].
     \end{equation}
 \end{lemma}
 \begin{proof}
 Assume $\mathcal{U}$ is fixed and let $\mb x' = [\mb x]_{\mathcal{U}} \in \mathbb R^s$. Define the collection of all possible activation vectors $\mathcal{A}_{\alpha}= \left\{\mb a \in  \{0,1\}^n: \|\mb a\|_{0} = \alpha n\right\}$. We can now define the random process 
 \[
 Y_{\mb a , \mb v} =\sum_{i=1}^n [\mb a]_i |\langle \mb x'_i, \mb v \rangle|,
 \]
 for $\mb a\in \mathcal{A}_\alpha$ and $\mb v \in B_2$.
 Notice by the variational characterization of the $\ell_2$ norm that 
 \[
 \sup_{\substack{\mathcal{I}:|\mathcal{I}|\leq\alpha n\\ \mathcal{U} \in\mathcal{Z}_s}}\norm{ \frac{1}{n}\sum_{i\in \mathcal{I}} \mb \Pi_{\mathcal{U}}\mb x_i}_2 \leq  \sup_{\mb a\in \mathcal{A}_{\alpha}, \mb v \in B_2}Y_{\mb a , \mb v},
 \]
 so it suffices to find an upper bound of the right-hand side.
 First, we have that 
 \[
  Y_{\mb a , \mb v} -   Y_{\mb a' , \mb v}=\sum_{i=1}^n ([\mb a]_i-  [\mb a']_i) |\langle \mb x'_i, \mb v \rangle|.
 \]
 Using Hoeffding's inequality, we have that 
 \begin{equation} \label{eq:Ysv1}
 \mathbb  P \left(| Y_{\mb a , \mb v} -   Y_{\mb a' , \mb v}| \geq t \right) \leq 2 \exp \left(\frac{-ct^2}{\eta^2\|\mb a -\mb a'\|_2^2} \right).
\end{equation}
 Furthermore, we have that
 \[
  Y_{\mb a , \mb v} -   Y_{\mb a , \mb v'}=\sum_{i=1}^n [\mb a]_i\left( |\langle \mb x'_i, \mb v \rangle| -|\langle \mb x'_i, \mb v'\rangle|\right)\leq \sum_{i=1}^n [\mb a]_i\left( |\langle \mb x'_i, \mb v - \mb v'\rangle|\right),
 \]
 Since $\|\mb a\|_2^2 = \alpha n$, the by Hoeffding's inequality, we have that 
 \begin{equation}\label{eq:Ysv2}
 \mathbb{P}\left(|Y_{\mb a , \mb v} -   Y_{\mb a , \mb v'}|\geq t \right) \leq 2\exp\left( \frac{-c}{\alpha n \|\mb v- \mb v'\|}\right).
\end{equation}
Equations \eqref{eq:Ysv1} and \eqref{eq:Ysv2} are sufficient to invoke \citep[Lemma 5.4]{tan2019phase} which \citep[Theorem 5.7]{tan2019phase} requires. Let $\delta \in (0,1/e)$, then \citep[Theorem 5.7]{tan2019phase} implies that with probability at least $1-\delta$   that 
\[
\sup_{\mb a \in \mathcal{A}_{\alpha}, \mb v \in B_2}Y_{\mb a, \mb v} \leq C(\eta^2 \vee 1)\sqrt{\alpha}n,
\]
if $n\geq  C\alpha^{-1}[s+\log(1/\delta)]$. Using the union bound (to account for any possible $\mathcal{U}\in \mathcal{Z}_s$) and inflating the probability of error $\delta$ by $|\mathcal{Z}_s| = $$\binom{d}{s}$ $\leq \left(\frac{ed}{s}\right)^{s}$ completes the proof.  
 \end{proof}
 }
\end{appendices}
\end{document}